\documentclass{article}
\usepackage{iclr2027_conference,times}
\usepackage{iftex}
\ifPDFTeX\else
  \usepackage{fontspec}
\fi

\usepackage{graphicx}
\usepackage{adjustbox}  
\usepackage{amsmath}
\usepackage{amssymb}
\usepackage{amsthm}
\usepackage{booktabs}
\usepackage{multirow}
\usepackage{fancyhdr}
\usepackage{float}      
\usepackage{colortbl}   
\usepackage{xspace}
\usepackage{enumitem}
\usepackage{url}
\usepackage{titletoc}

\definecolor{iclrblue}{rgb}{0.21,0.49,0.74}

\usepackage[breaklinks,colorlinks,allcolors=iclrblue]{hyperref}
\usepackage[capitalise]{cleveref}   
\crefname{assumption}{Assumption}{Assumptions}
\Crefname{assumption}{Assumption}{Assumptions}

\makeatletter

\let\iclr@includegraphics\includegraphics
\renewcommand{\includegraphics}[2][]{%
  \IfFileExists{#2}%
    {\iclr@includegraphics[#1]{#2}}%
    {\fbox{\begin{minipage}[c][0.22\linewidth][c]{0.95\linewidth}%
       \centering\ttfamily\footnotesize missing artwork\\\detokenize{#2}%
     \end{minipage}}}}
\newcommand{\rawincludegraphics}[2][]{\iclr@includegraphics[#1]{#2}}
\makeatother

\definecolor{bestgreen}{RGB}{201,234,199}
\definecolor{gaingreen}{RGB}{0,110,60}
\definecolor{oursrow}{gray}{0.94}
\newcommand{\best}[1]{\cellcolor{bestgreen}#1}
\newcommand{\ours}{\rowcolor{oursrow}}
\newcommand{\ci}[1]{\ensuremath{{}_{\pm #1}}}
\newcommand{\gain}[1]{{\color{gaingreen}#1}}
\newcommand{\loss}[1]{{\color{red}#1}}
\newcommand{\blk}[2]{\multicolumn{#1}{@{}l}{\textbf{#2}}}
\newcommand{\sub}[2]{\multicolumn{#1}{@{}l}{\emph{#2}}}
\newcommand{\na}{---}

\theoremstyle{plain}
\newtheorem{proposition}{Proposition}
\newtheorem{lemma}[proposition]{Lemma}
\newtheorem{corollary}[proposition]{Corollary}
\theoremstyle{definition}
\newtheorem{definition}[proposition]{Definition}
\newtheorem{assumption}[proposition]{Assumption}
\theoremstyle{remark}
\newtheorem{remark}[proposition]{Remark}

\newcommand{\method}{MARC}
\newcommand{\purity}{\operatorname{Purity}}
\newcommand{\antihub}{\operatorname{AntiHub}}
\newcommand{\Kstar}{K^{\star}}

\title{Retrieval Geometry Shapes\\Cache-Based CLIP Adaptation}

\author{%
  {\normalfont \textbf{Mahir Shahriar Tamim}\textsuperscript{1,*}\quad
  \textbf{Md.\ Samiul Alim}\textsuperscript{1,*}\quad
  \textbf{Azmine Toushik Wasi}\textsuperscript{2}}\\
  \textbf{Shahriyar Zaman Ridoy}\textsuperscript{1}\quad
  \textbf{Meharun Nesa}\textsuperscript{1}\quad
  \textbf{Mohammad Abu Yousuf}\textsuperscript{3}\\
  \textbf{Alex Lamb}\textsuperscript{4}\quad
  \textbf{Mohammad Ali Moni}\textsuperscript{5}\\[5pt]
  {\normalfont\small\textsuperscript{1}Department of Electrical \& Computer Engineering, North South University}\\
  {\normalfont\small\textsuperscript{2}Computational Intelligence and Operations Laboratory}\\
  {\normalfont\small\textsuperscript{3}Institute of Information Technology, Jahangirnagar University}\\
  {\normalfont\small\textsuperscript{4}College of AI, Tsinghua University}\\
  {\normalfont\small\textsuperscript{5}Charles Sturt University}\\[3pt]
  {\normalfont\small\textsuperscript{*}\emph{Equal contribution.}}\\[4pt]
  {\normalfont\small\texttt{\{mahir.tamim,samiul.alim01,shahriyar.ridoy,meharun.nesa\}@northsouth.edu}}\\
  {\normalfont\small\texttt{azminetoushik.wasi@gmail.com}\quad
  \texttt{yousuf@juniv.edu}}\\
  {\normalfont\small\texttt{lambalex@tsinghua.edu.cn}}\\
  {\normalfont\small\texttt{mmoni@csu.edu.au}}%
}

\iclrfinalcopy

\begin{document}
\maketitle
\lhead{Under review as a conference paper at ICLR 2027}

\begin{abstract}
Cache-based test-time adaptation improves CLIP predictions by storing and retrieving examples from the target stream while keeping the model frozen. However, existing methods largely treat the feature space used for image--image retrieval as fixed, leaving open how much adaptation depends on the retrieval space itself. We study this question by fixing the memory and changing only the retrieval encoder, finding that the same memory can yield very different gains: across sixteen retrieval spaces, ImageNet-A cache gain ranges from at most $+0.44$ points for CLIP and MAE to $+19.7\pm0.4$ for DINOv2-L, while label-free retrieval-space selection retains $98\%$ of oracle gain on ImageNet-V2. These results show that memory quality depends not only on which examples are stored, but also on how they are retrieved. Motivated by this finding, we propose \method{} (Memory-Augmented Retrieval for CLIP), a training-free system that uses frozen CLIP for prediction and DINOv2-B for retrieval with a single fusion weight. A single-view cache repairs $1{,}074\pm21$ baseline errors, compared with $878\pm4$ for a 64-view ensemble, at roughly one seventh of the cost. Across four ImageNet distribution shifts, \method{} reaches a 67.91\% OOD average and, at matched DINOv2-B scale and eight views, achieves $64.17\pm0.31\%$ versus $62.75\pm0.15\%$ for a graph-based cache system while running 2.6 times faster. 
Overall, our results establish retrieval space as a first-order design choice for robust cache-based adaptation in remote sensing, scientific imaging, and changing visual environments.
\end{abstract}


\setlength{\textfloatsep}{6pt}
\setlength{\floatsep}{6pt}
\setlength{\abovecaptionskip}{-2pt}
\setlength{\belowcaptionskip}{-2pt}

\section{Introduction}
\label{sec:intro}
\textbf{R}eliable recognition under distribution shift is essential for deploying open-vocabulary vision models. \textbf{C}ontrastive \textbf{L}anguage-\textbf{I}mage \textbf{P}re-training (CLIP)~\citep{radford2021clip} can recognize a wide range of categories without task-specific training, but its performance often drops under distribution shifts~\citep{hendrycks2021imageneta,hendrycks2021imagenetr,wang2019imagenetsk}. Cache-based test-time adaptation (TTA) provides a training-free remedy: it stores features and pseudo-labels from the test stream, retrieves similar examples for each new image, and uses them to refine the zero-shot prediction~\citep{karmanov2024tda,zhang2024boostadapter,han2024dota,zhang2025scap,huang2025cosmic}. Recent work has mainly improved how the cache is built and used, with techniques such as regional bootstrapping, distributional estimation, and graph-based retrieval.

However, cache retrieval depends on another design choice that has received much less attention: \textbf{the feature space used to measure image-image similarity}. Most cache-based methods retrieve neighbors using CLIP's image representation~\citep{karmanov2024tda,zhang2024boostadapter}. This implicitly asks one representation to serve two different purposes. Zero-shot classification compares an image with $C$ text prototypes, whereas cache retrieval compares it with other images in the memory. Under distribution shift, a representation that works well for image-text matching may not produce the most useful image-image neighborhoods (\cref{fig:mismatch}). This motivates a simple question: \emph{how much of cache-based adaptation comes from the memory mechanism, and how much comes from the retrieval similarities induced by the encoder?}

We study this question through a controlled retrieval-encoder intervention. Self-supervised encoders already provide strong $k$-NN representations~\citep{caron2021dino,oquab2023dinov2}; our contribution is to isolate their effect on cache retrieval. Frozen CLIP determines the predictions, pseudo-labels, admissions, and retained indices, while the stream order is shared across spaces and only the retrieval encoder changes the query--key similarities. This keeps the cache trajectory fixed and makes retrieval space the only changing variable. TDA~\citep{karmanov2024tda} performs dynamic caching in CLIP space, while COSMIC~\citep{huang2025cosmic} combines DINOv2 with dual semantic graphs and hyper-class querying. Our intervention instead directly measures the effect of changing retrieval space alone, with implications for adaptation across diverse application domains.

\begin{figure}[t]
\centering
\includegraphics[width=\linewidth]{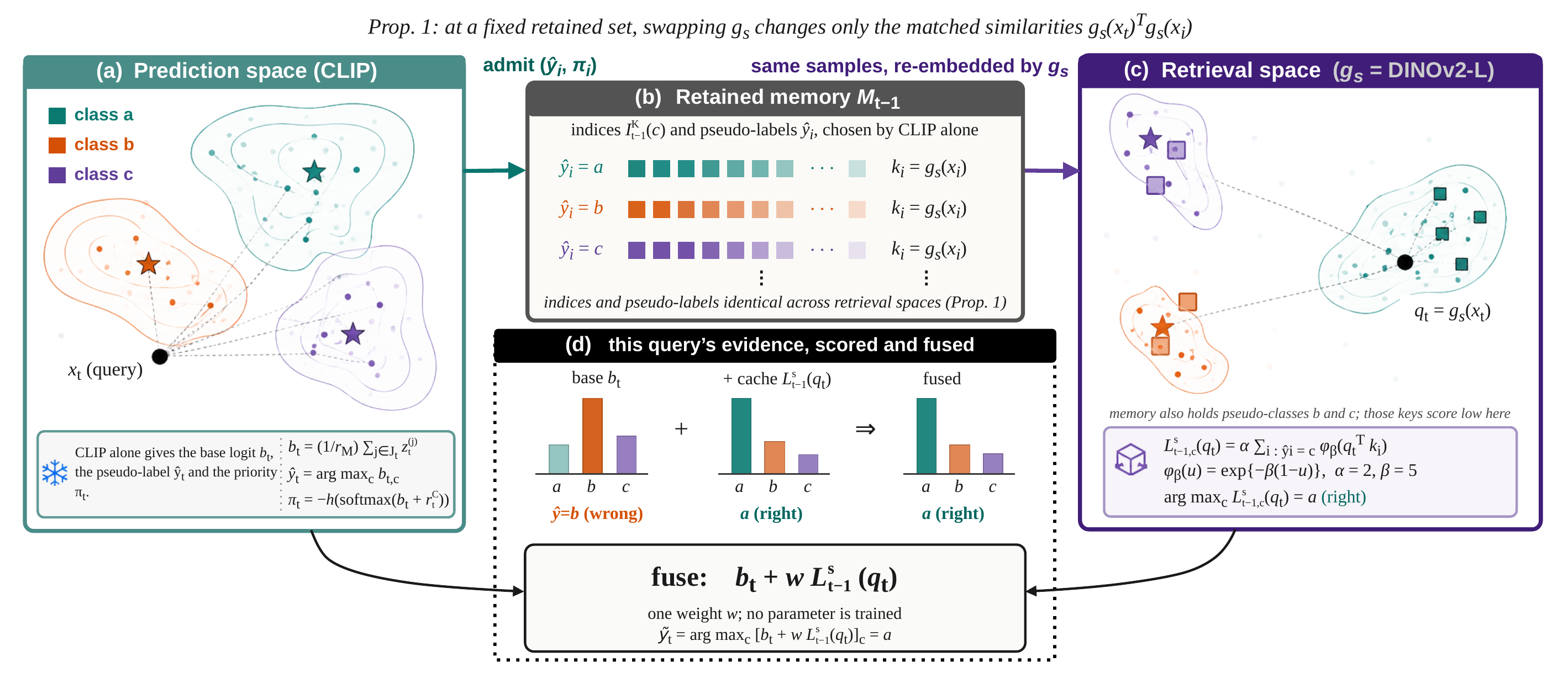}
\caption{\textbf{Controlled retrieval-encoder intervention.}
\textbf{(a)} Frozen CLIP computes the base logits $b_t$, pseudo-label $\hat{y}_t$, and admission priority $\pi_t$.
\textbf{(b)} All retrieval spaces share the same retained samples and stored pseudo-labels, which are re-embedded as keys $k_i=g_s(x_i)$.
\textbf{(c)} The retrieval encoder embeds the query as $q_t=g_s(x_t)$, so changing $g_s$ modifies only the query--key similarities while keeping the cache trajectory fixed (\cref{prop:space-intervention}).
\textbf{(d)} DINOv2-L retrieves evidence for the correct class $a$, correcting CLIP's initial class-$b$ prediction after fusion with $b_t+wL^{s}_{t-1}(q_t)$.
The main \method{} benchmarks use DINOv2-B (on 6 datasets: Caltech101, DTD, EuroSAT, Flowers102, Pets, and UCF101), while DINOv2-L is used for the controlled intervention and scale analyses.}
\label{fig:mismatch}
\end{figure}

This intervention reveals that retrieval geometry affects not only neighbor quality, but also the gain and useful capacity of the cache. Existing methods use 1 to $64$ augmented views~\citep{karmanov2024tda,huang2025cosmic,shu2022tpt,feng2023difftpt,zhang2024boostadapter,zero2024} and often tune the per-class capacity $K$ by grid search. Across sixteen retrieval spaces and five streams, four different encoders achieve the highest cache gain. On ImageNet-A, changing only the retrieval encoder moves cache gain from at most $+0.44$ points (CLIP; MAE $+0.03$) to $+19.74\pm0.40$ for DINOv2-L, and from $+0.00$ to $+19.16$ once the fusion weight is selected on held-out ImageNet-V2. Neighborhood purity predicts the best space when candidate values are well separated, while pseudo-purity enables label-free selection from unlabeled target images.

Motivated by these findings, we introduce \method{} (\textbf{M}emory-\textbf{A}ugmented \textbf{R}etrieval for \textbf{C}LIP), a training-free system with frozen CLIP and DINOv2-B encoders. \method{} uses CLIP to build and admit samples into the memory, retrieves them with DINOv2-B, and fuses the retrieved evidence with the zero-shot prediction. This simple design decouples prediction from retrieval, allowing the retrieval space to be changed without altering the predictor or the cache trajectory, and provides a clean way to study their individual effects.
Our core contributions are:
\begin{enumerate}[
    leftmargin=1.2em,
    itemsep=0.3pt,
    topsep=0.3pt,
    parsep=0pt,
    partopsep=0pt
]

\item \textbf{We isolate retrieval similarity as a design variable.} Holding the predictor, prompts, admissions, retained indices, and test stream fixed, we vary only the retrieval encoder and its query--key similarities, directly isolating their effect on cache-based adaptation.

\item \textbf{We establish retrieval geometry as a key determinant of cache effectiveness.} Across retrieval spaces and streams, cache gain varies from nearly zero to about $20$ points, with different encoders emerging as strongest across shifts. Neighborhood purity predicts gain, anti-hubness captures useful capacity, and stronger retrieval reduces the marginal benefit of additional views.

\item \textbf{We enable label-free retrieval-space selection.} Pseudo-purity closely tracks labeled rankings and selects effective retrieval spaces without target labels, providing a practical alternative to oracle selection under online constraints.
\end{enumerate}

Our findings have substantial practical implications. Pseudo-purity selection retains $90\%$ and $98\%$ of oracle gain on ImageNet-A and ImageNet-V2, respectively. Across four ImageNet shifts, \method{} reaches a $67.91\%$ OOD average and, at matched DINOv2-B scale and eight views, achieves $64.17\pm0.31\%$ versus $62.75\pm0.15\%$ for a graph-based cache while running $2.6\times$ faster. These properties are particularly relevant to deployment settings with evolving visual distributions, from remote sensing and scientific imaging to continuously changing visual environments. Together, our results establish retrieval space as a first-order design choice and position \method{} as a simple reference system for studying its interaction with prediction, memory, and fusion.

\vspace{-3mm}
\section{Related Work}
\label{sec:related}
\vspace{-2mm}
Cache-based TTA keeps the predictor frozen and adapts through external memory: Tip-Adapter~\citep{zhang2022tipadapter} introduced key--value caching for CLIP, while TDA~\citep{karmanov2024tda} extended it online using entropy-prioritized pseudo-labels; subsequent methods modify cache construction, statistics, or fusion~\citep{zhang2024boostadapter,han2024dota,zhang2025scap,hu2024bafta,zhang2026tata}. COSMIC~\citep{huang2025cosmic} combines CLIP and DINOv2 caches with semantic graphs and hyper-class queries, whereas we isolate retrieval geometry by varying only the retrieval encoder while fixing cache construction and admission. Self-supervised encoders such as DINO and DINOv2~\citep{caron2021dino,oquab2023dinov2} provide strong image--image neighborhoods, unlike reconstruction-oriented MAE~\citep{he2022mae}, making them useful retrieval spaces for studying cache gain and capacity. Prior work has examined neighborhood structure through $k$-NN accuracy, purity, hubness, and affinity graphs~\citep{radovanovic2010hubness,hu2024bafta,boudiaf2022lame}, but has not isolated the feature space defining those neighborhoods as a design variable; we test whether purity and anti-hubness predict gain and useful capacity. Finally, while augmentation-based methods such as MEMO, TPT, DiffTPT, and ZERO~\citep{zhang2022memo,shu2022tpt,feng2023difftpt,zero2024} use multiple views for test-time adaptation, we study how their value changes with retrieval effectiveness. Additional related work is discussed in detail in Appendix~\ref{sec:apx-related}.

\vspace{-3mm}
\section{Method}
\label{sec:method}
\vspace{-2mm}

\begingroup
\setlength{\abovedisplayskip}{4pt plus 1pt minus 1pt}
\setlength{\belowdisplayskip}{4pt plus 1pt minus 1pt}
\setlength{\abovedisplayshortskip}{2pt plus 1pt}
\setlength{\belowdisplayshortskip}{2pt plus 1pt minus 1pt}
\setlength{\jot}{2pt}

\subsection{Preliminaries: Problem Formulation}
\label{sec:setup}
\vspace{-1.5mm}
We consider online test-time adaptation with a frozen CLIP classifier. Unlabeled images $\{x_t\}_{t=1}^{T}$ arrive sequentially, with each $x_t$ predicted using the frozen model and memory from prior samples before possible insertion into the cache. Ground-truth labels are used only for evaluation; the predictor, prompts, and class prototypes remain fixed, making adaptation causal and entirely memory-based.
\\
Let $f$ denote the frozen CLIP image encoder and $\mathbf{t}_c$ the unit-normalized text prototype for class $c$. We use $\bar f(x)$ to denote the unit-normalized image feature and $\tau$ the fixed CLIP logit scale. Each test image $x_t$ is represented by $M$ views. Let $J_t$ index the $r_M=\max(1,\lfloor0.1M\rfloor)$ views with the lowest prediction entropy. For view $j$, the raw logit for class $c$ and the averaged base logit are
\begin{equation} \small
z^{(j)}_{t,c}=\tau\,\big\langle \bar f(x_t^{(j)}),\mathbf{t}_c\big\rangle,
\qquad
b_t=\frac{1}{r_M}\sum_{j\in J_t}z_t^{(j)},
\label{eq:view-logit}
\end{equation}
The base pseudo-label is $\hat y_t=\arg\max_c b_{t,c}$. It is produced by the frozen CLIP predictor and provides the class assignment used by both online memories in \cref{sec:method}.

\vspace{-2mm}
\subsection{Memory-Augmented Retrieval}
\label{sec:cache-pipeline}
\vspace{-2mm}
Our key design choice is the retrieval space. Let $g_s$ be a frozen, normalized retrieval encoder defining space $s$. For each test image $x_t$, its unaugmented feature $q_t=g_s(x_t)$ is used as the retrieval key if the image is admitted to memory.

\noindent\textbf{Cache Construction.}
We also maintain a CLIP-indexed memory $\mathcal C_t$ with a fixed per-class capacity of $K_C{=}16$. Before updating the memory, we query it with the CLIP feature $k_t^C=\bar f(x_t)$ to obtain $r_t^C=L^{\mathcal C_{t-1}}(k_t^C)$ using the kernel in \cref{eq:kernel}. For $p_t^{(j)}=\operatorname{softmax}(z_t^{(j)})$, we define
\begin{equation} \small
e_t=h\!\left(\frac{1}{M}\sum_{j=1}^{M}\operatorname{softmax}(p_t^{(j)})\right),\,
\pi_t=-h\!\left(\operatorname{softmax}(b_t+r_t^C)\right).
\label{eq:priority-memory}
\end{equation}
where $h(p)=-\sum_c p_c\log p_c$ is Shannon entropy with natural logarithms. Following the evaluated TDA implementation, $e_t$ applies the additional softmax shown above; normalized entropy quantities are marked explicitly. Replacing it with entropy of the view posterior gives $60.10\pm0.41\%$ versus $60.14\pm0.38\%$ across three paired ImageNet-A seeds; further admission controls are reported in App.~\ref{sec:s_components}.
The CLIP memory is updated with $(k_t^C,\hat y_t,-e_t)$, while $\pi_t$ determines which samples enter the retrieval memory. The CLIP memory has no output weight, so it affects neither the final logit nor the stored pseudo-label. The strict one-memory setting uses $K_C{=}0$.

\noindent\textbf{Retrieval and Prediction.}
For retrieval space $s$ and per-class capacity $K$, the pre-query memory is
$\mathcal M^{s,K}_{t-1}=\{(\mathbf k_i,\hat y_i,\pi_i)\}$, with at most $K$ entries per pseudo-class. It produces class evidence
\begin{equation} \small
L^{\mathcal M}_c(q_t)
=\alpha\!\sum_{i\in\mathcal M_{t-1}:\,\hat y_i=c}
\phi_\beta(q_t^\top\mathbf k_i),
\qquad
\phi_\beta(u)=\exp\{-\beta(1-u)\},
\label{eq:kernel}
\end{equation}
where $\alpha{=}2$ and $\beta{=}5$, following TDA~\citep{karmanov2024tda}. Both memories are read before they are updated. The prediction and memory updates are
\begin{equation} \small
\small
\widetilde y_t=\arg\max_c\big[b_t+wL^{\mathcal M_{t-1}}(q_t)\big]_c,\,
\mathcal C_t=\operatorname{Upd}_{K_C}(\mathcal C_{t-1};k_t^C,\hat y_t,-e_t),\,
\mathcal M_t^{s,K}=\operatorname{Upd}_K(\mathcal M_{t-1}^{s,K};q_t,\hat y_t,\pi_t).
\label{eq:online-loop}
\end{equation}
Each pseudo-class keeps its top-$K$ arrivals by priority. Since stored labels and priorities do not depend on the fused prediction, both memory trajectories are independent of $w$ and can be re-scored offline (\cref{prop:traj}).
We vary the number of views $M$, memory capacity $K$, and retrieval space $s$, while keeping the predictor, prompts, kernel, admission rule, and stream fixed. The DINOv2 retrieval configuration in \cref{eq:online-loop} is our \method{} (Memory-Augmented Retrieval for CLIP). It removes output fusion from the CLIP cache, negative memory, adaptive fusion, and pseudo-label refinement; all variants are detailed in App.~\ref{sec:s_components}.

\begin{figure}[t]
\centering
\begingroup
\setlength{\fboxsep}{0pt}
\setlength{\unitlength}{\linewidth}
\begin{picture}(1,0.22744)
  \put(0,0){\rawincludegraphics[width=\linewidth]{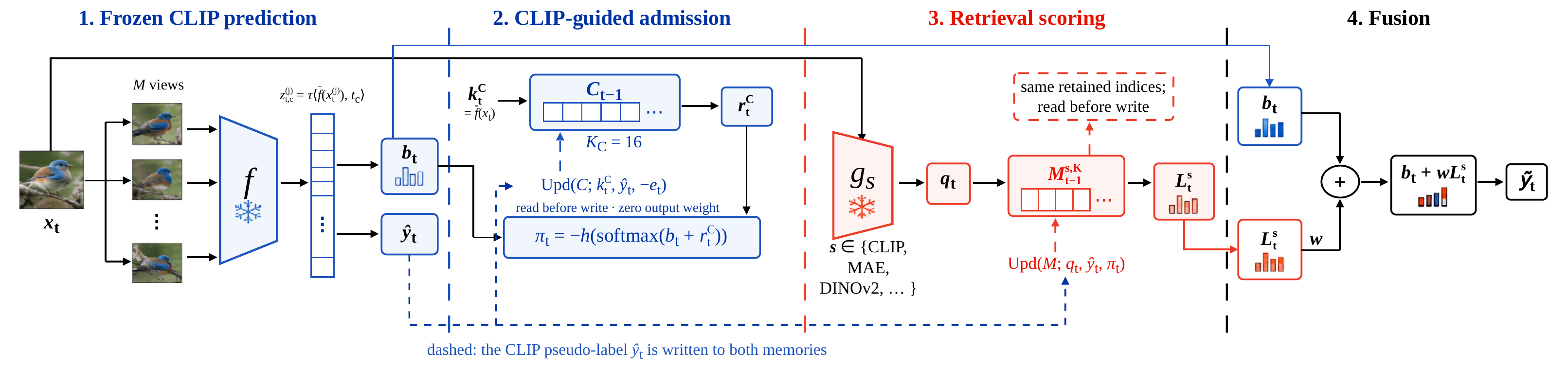}}
  \put(.748,.087){\colorbox{white}{\makebox[.040\linewidth][c]{\rule[-1.5pt]{0pt}{8pt}\tiny $L_t^s$}}}
\end{picture}
\endgroup
\caption{\textbf{The four-stage \method{} workflow.} (1) Frozen CLIP aggregates $M$ views into the base
logit $b_t$ and pseudo-label $\hat y_t$. (2) The blue, zero-output-weight CLIP memory is read
before update to compute $\pi_t$ and guide admission. (3) A frozen retrieval encoder $g_s$
embeds the query and the same admitted samples; the red retrieval memory is read before update
to produce $L_t^s$. Dashed arrows show the CLIP pseudo-label written to both memories.
(4) Fusion returns $b_t+wL_t^s$. All encoder parameters remain frozen throughout.}
\label{fig:pipeline}
\end{figure}

\vspace{-3mm}
\subsection{What the Decoupling Identifies}
\label{sec:space-intervention}
\vspace{-2mm}
To state the identification target, first consider a generic cache whose retained population
$J_{t-1}^s=(J_{t-1}^s(c))_{c=1}^C$ may itself depend on encoder $s$. For fixed predictor,
kernel, and fusion machinery, write $F_t(s,J)=b_t+wL_t^{s,J}$. Adding and subtracting the
matched-population predictor gives the exact decomposition
\begin{equation} \small
F_t(s,J_{t-1}^s)-F_t(s',J_{t-1}^{s'})=\underbrace{F_t(s,J_{t-1}^s)-F_t(s',J_{t-1}^s)}_{\text{encoder similarities on matched keys}}+\underbrace{F_t(s',J_{t-1}^s)-F_t(s',J_{t-1}^{s'})}_{\text{retained-population change}}.
\label{eq:geometry-population-decomp}
\end{equation}
A conventional composite-system comparison can contain both terms and may additionally change
the aggregation machinery. MARC removes the second term by construction: $\hat y_t$ and
$\pi_t$ are CLIP functions, so admission is independent of $s$. Let $I_{t-1}^K(c)$ denote
this common retained index set and write $u_{ti}^s=g_s(x_t)^\top g_s(x_i)$.

\begin{proposition}[Exact space isolation and prediction stability]
\label{prop:space-intervention}
Fix the stream, its augmentation draws, $K_C$, $K$, $w\ge0$, and all tie rules. For any two retrieval
spaces $s$ and $s'$:
\textbf{(i)} the retained index sets $I_{t-1}^K(c)$ are identical for every $t,c$;
\textbf{(ii)} their cache logits satisfy
\begin{equation} \small
L_{t,c}^{s}=\alpha e^{-\beta}\sum_{i\in I_{t-1}^K(c)}e^{\beta u_{ti}^{s}},\quad
\left|L_{t,c}^{s}-L_{t,c}^{s'}\right|\le D_{t,c}^{s,s'}:=\alpha\beta\sum_{i\in I_{t-1}^K(c)}\left|u_{ti}^{s}-u_{ti}^{s'}\right|.
\label{eq:space-perturbation}
\end{equation} \small
Let $F_t^s=b_t+wL_t^s$, $a_t^s=\arg\max_c F_{t,c}^s$, and
$\mathrm{gap}_t^s=F_{t,a_t^s}^s-\max_{c\ne a_t^s}F_{t,c}^s$. Then
\begin{equation}
\mathrm{gap}_t^s>w\!\left(D_{t,a_t^s}^{s,s'}+
                    \max_{c\ne a_t^s}D_{t,c}^{s,s'}\right)
\quad\Longrightarrow\quad
a_t^{s'}=a_t^s.
\label{eq:space-stability}
\end{equation}
\end{proposition}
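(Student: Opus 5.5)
Part (i) I would prove by induction on $t$, showing that every quantity feeding the update $\operatorname{Upd}_K(\mathcal M^{s,K}_{t-1};q_t,\hat y_t,\pi_t)$ other than the stored key is independent of $s$. The base logit $b_t$ and pseudo-label $\hat y_t$ in \cref{eq:view-logit} depend only on the CLIP encoder, the prompts, and the fixed augmentation draws. The CLIP memory $\mathcal C_{t-1}$ is updated with $(k_t^C,\hat y_t,-e_t)$, all CLIP functions, so by induction its trajectory, and therefore $r_t^C$ and $\pi_t$ in \cref{eq:priority-memory}, do not depend on $s$. The retrieval-memory update keeps, for each pseudo-class, the top-$K$ arrivals ranked by $\pi$ under fixed tie rules; this is a deterministic function of the sequence $(\hat y_i,\pi_i)_{i\le t}$ alone, and the key is carried along passively. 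Assuming $I_{t-1}^K(c)$ agrees across spaces, the insertion or eviction decision at step $t$ agrees as well, which gives $I_t^K(c)$ identical. The base case $I_0^K(c)=\emptyset$ is trivial. I would also note that $w$ never enters any update, which matches the remark preceding \cref{prop:traj}.

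For (ii), the closed form follows by substituting $\phi_\beta(u)=e^{-\beta}e^{\beta u}$ into \cref{eq:kernel} and reindexing the sum over the common set $I_{t-1}^K(c)$ from (i). For the perturbation bound, I would apply the mean value theorem to $\phi_\beta$ term by term. Because $g_s$ is normalized, $u_{ti}^s\in[-1,1]$, so $\phi_\beta'(u)=\beta\phi_\beta(u)\le\beta$ on this interval. Hence $|\phi_\beta(u)-\phi_\beta(u')|\le\beta|u-u'|$. Summing over $i$ with the factor $\alpha$ and applying the triangle inequality yields $|L^s_{t,c}-L^{s'}_{t,c}|\le D^{s,s'}_{t,c}$. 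The main point needing care is the Lipschitz constant: it relies on cosine similarities being at most $1$. If that normalization were absent the constant would change, so I would state it explicitly.

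For the stability claim, write $a=a_t^s$. The base term $b_t$ is common to both spaces, so $F^{s'}_{t,c}-F^s_{t,c}=w(L^{s'}_{t,c}-L^s_{t,c})$, and with $w\ge0$ its absolute value is at most $wD_{t,c}^{s,s'}$. Then for any $c\ne a$,
\[
F^{s'}_{t,a}-F^{s'}_{t,c}\ge F^s_{t,a}-F^s_{t,c}-wD_{t,a}^{s,s'}-wD_{t,c}^{s,s'}\ge \mathrm{gap}_t^s-w\bigl(D_{t,a}^{s,s'}+\max_{c'\ne a}D_{t,c'}^{s,s'}\bigr)>0.
\]
So $a$ is the strict maximizer of $F^{s'}_t$, i.e. $a_t^{s'}=a$. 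Strictness makes tie rules irrelevant here.

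I expect no real obstacle. The step requiring the most care is (i): I must make precise that admission and eviction in $\operatorname{Upd}_K$ read only the labels, priorities, and tie-breaks, never the keys or the fused prediction, and that the CLIP memory trajectory is itself $s$-free.
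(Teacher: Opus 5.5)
Your proposal is correct and follows the paper's proof essentially step for step. The steps are the same: an induction showing that $b_t$, $\hat y_t$, the CLIP-memory trajectory, and hence $\pi_t$ do not depend on $s$, so priority-based admission retains the same indices; then substitution of $\phi_\beta(u)=e^{-\beta}e^{\beta u}$ with the mean-value bound $\phi_\beta'(u)=\beta\phi_\beta(u)\le\beta$ on $[-1,1]$; and finally subtracting the worst-case $w$-scaled perturbations from the winning gap, using that $b_t$ is shared.
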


Thus $J_{t-1}^s=J_{t-1}^{s'}=I_{t-1}^K$ and the population term in
\cref{eq:geometry-population-decomp} vanishes exactly. The intervention replaces only the
encoder-induced cosine similarities of matched query--key pairs; it cannot improve by retaining easier samples or
receiving different pseudo-labels. Spaces inducing the same similarities on these pairs
therefore produce identical logits and predictions, irrespective of model name.
\Cref{eq:space-stability} also localizes where an encoder swap can matter: only samples whose
fused decision margin is smaller than the induced kernel perturbation can change prediction.
This result does not order spaces or guarantee gain; it establishes that the controlled sweep
estimates the effect of replacing the encoder-induced similarity matrix at a fixed cache population. The proof is in
App.~\ref{sec:s_protocol}.
\\
\noindent \textbf{Scope of the theoretical results.}
The decomposition and \cref{prop:space-intervention} are exact identities induced by the
update rule and require no distributional assumptions. By contrast, the margin results in
\cref{eq:margins} provide query-level sufficient conditions under explicit separation
assumptions, while retrieval-space selection based on purity or anti-hubness additionally
relies on \cref{asm:identify}. These results characterize conditions under which retrieval
geometry affects predictions; they do not constitute unconditional performance guarantees.

\vspace{-3mm}
\subsection{Retrieval-Geometry Measurements.}
\label{sec:geometry}
\vspace{-2mm}
Before adaptation, let $S$ be a common calibration sample of $n$ target-domain images and let
$N_\kappa^s(i)$ be the $\kappa$ nearest neighbors of $x_i$ in space $s$, excluding $i$.  We
measure
\begin{equation} \small
P_s(\kappa)=\frac{1}{n\kappa}\sum_{i\in S}\sum_{j\in N_\kappa^s(i)}
\mathbf 1[y_j=y_i],\qquad
A_s(\kappa)=\frac{1}{n}\sum_{j\in S}\mathbf 1[d_j^s(\kappa)=0],
\label{eq:geometry-main}
\end{equation}
where $d_j^s(\kappa)=\sum_{i\in S}\mathbf 1[j\in N_\kappa^s(i)]$ is $k$-NN in-degree.
$P_s$ is neighborhood purity and requires labels; $A_s$ is the anti-hub fraction and is
label-free.  The probe radius $\kappa$ is not the per-class cache capacity $K$; our tables use
$\kappa{=}10$.  Purity measures contaminating \emph{edges}, whereas anti-hubness measures the
fraction of keys with no incoming edge.  They are diagnostics of different failure modes, not
surrogate objectives optimized by the method.  App.~\ref{sec:s_capacity} gives their exact
accounting identities and explains the calibration assumptions needed to connect them to an
online, priority-filtered cache.
\\
For the reported benchmark diagnostics, $S$ is the complete target test split---for example,
$n{=}7{,}500$ on ImageNet-A and $n{=}10{,}000$ on ImageNet-V2---and is the same split used for
evaluation rather than a held-out set. Accordingly, labeled purity is an offline benchmarker's
diagnostic. Pseudo-purity replaces $y_i$ in \cref{eq:geometry-main} with frozen CLIP predictions
and can instead be computed on an unlabeled target calibration batch.
\\
\noindent\textbf{Analysis Quantities.}
For a query of true class $y$, the base and cache margins are
\begin{equation} \small
\mu_t=b_{t,y}-\max_{c\ne y}b_{t,c},\qquad
\delta_t^{s,K}=L_y^{\mathcal M}(q_t)-\max_{c\ne y}L_c^{\mathcal M}(q_t).
\label{eq:margins}
\end{equation}
For $w\ge0$, fusion is correct whenever $\mu_t+w\delta_t^{s,K}>0$; retrieval therefore helps
only when its margin aligns with base-model errors. Under a separated-neighborhood condition,
the cache-margin floor increases with true local purity and decreases with pseudo-label error
and tail mass. Increasing capacity can instead expose a wrong-class key that is more similar
than the newly exposed true-class key, lowering the pointwise margin. App.~\ref{sec:s_capacity}
states these conditional results, their calibration assumptions, and their proofs. We use
purity to diagnose attainable signal and anti-hubness to diagnose useful capacity, not as
unqualified guarantees.
\\
\noindent\textbf{Controlled Evaluation of Retrieval Spaces.}
Because cache-logit scale varies across spaces, controlled comparisons use the finite-grid
envelope
\begin{equation} \small
\mathrm{Acc}^{\star}(s,K) \;=\; \max_{w \in \mathcal{W}} \;
\mathrm{Acc}\big(\{b_t+wL^{\mathcal{M}^{s,K}_{t-1}}(q_t)\}_{t=1}^{T}\big),
\label{eq:envelope}
\end{equation}
over $\mathcal W=\{0,0.1,0.3,1,2,3,5,8,10,15,20,30,50,75,100\}$ and
$\mathcal K=\{1,2,3,4,8,16,32,64\}$. Since $0\in\mathcal W$, zero gain is an oracle floor,
not evidence that every weight is harmless. For the deployable check, we select $w$ on
ImageNet-V2 and apply it to ImageNet-A; this costs at most $0.95$ points and preserves the
purity ranking. App.~\ref{sec:s_transfer} gives the complete transfer protocol.

\endgroup

\vspace{-3mm}
\section{Experiments}
\vspace{-2mm}
\label{sec:protocol}
\noindent\textbf{$\checkmark$ Evaluation Protocol.}
We follow TDA~\citep{karmanov2024tda} on four ImageNet shifts (A, V2, R, and Sketch) and six cross-domain datasets: Caltech101~\citep{feifei2004caltech}, DTD~\citep{cimpoi2014dtd}, EuroSAT~\citep{helber2019eurosat}, Flowers102~\citep{nilsback2008flowers}, Pets~\citep{parkhi2012pets}, and UCF101~\citep{soomro2012ucf101}. Controlled runs use one frozen CLIP ViT-B/16 predictor and fixed prompts. The initial sweep compares CLIP, MAE~\citep{he2022mae}, DINOv2-S/B/L~\citep{oquab2023dinov2}, DINO v1~\citep{caron2021dino}, BEiT~\citep{bao2022beit}, and a supervised ViT~\citep{dosovitskiy2021vit}. We add eight retrieval spaces to cover a wider range of training objectives and architectures: OpenCLIP~\citep{cherti2023openclip}, SigLIP~\citep{zhai2023siglip}, EVA-02 CLIP~\citep{sun2023evaclip}, CLIP ViT-L/14, DINOv3-B~\citep{simeoni2025dinov3}, DeiT-III~\citep{touvron2022deit3}, ConvNeXt-B~\citep{liu2022convnext}, and a supervised ResNet-50~\citep{he2016resnet}. App.~\ref{sec:s_spaces16} describes these comparisons. All encoders are frozen. We use three seeds for retrieval-space comparisons and all reported \method{} benchmark rows. Runtimes are measured on one RTX 3060 after 100 warm-up samples; App.~\ref{sec:s_protocol} gives the full protocol.
\\
\noindent \textbf{$\blacklozenge$ Implementation Details.}
All \method{} configurations are training-free, with both prediction and retrieval encoders frozen throughout inference. Auxiliary-encoder groupings report inference resources separately from this designation. For every pair of retrieval spaces on ImageNet-A and ImageNet-V2, stream order, pseudo-labels, base logits, and base predictions are identical. Only the cache logit changes. Thus, the experiments vary only the encoder-induced query--key similarities, as specified in
\cref{prop:space-intervention}.
\\
\noindent\textbf{$\checkmark$ Final Design.}
After fixing DINOv2 as the retrieval space, we test three additional components: pseudo-label refinement, a CLIP-indexed negative memory, and entropy-adaptive fusion, cumulatively. On ImageNet-A at $M{=}64$, accuracy drops from $65.25\pm0.02$ to $64.12\pm0.26$, $64.14\pm0.25$, and $62.12\pm0.17$, respectively, with the reduced configuration performing best in every seed. Across four OOD splits at $M{=}1$, the three additions change accuracy by $-1.94$, $+0.00$, and $-0.09$ points on average. We therefore retain the simpler read-before-write design: frozen CLIP prediction, a zero-output-weight CLIP admission memory, a frozen DINOv2 retrieval cache, and a single fusion weight. App.~\ref{sec:s_components} details the tested components and reports all splits.

\vspace{-4mm}
\section{Experimental Findings}
\label{sec:experimental-findings}
\vspace{-3mm}
We organize our experimental findings into three tiers. \textit{(i) To show why retrieval space matters,} controlled same-trajectory interventions isolate the effect of the retrieval encoder by keeping predictions, admissions, and retained indices identical (\cref{tab:spaces16,tab:swap}). \textit{(ii) To show how retrieval quality translates into practical gains,} we evaluate retrieval-space selection, transfer, robustness, and accuracy--efficiency trade-offs under controlled budgets (\cref{tab:pareto}). \textit{(iii) To place our results in a broader benchmark context,} we report comparisons under the protocols used by prior work.
\vspace{-3mm}
\subsection{Tier 1: Effect of Retrieval Geometry on Cache Gain}
\label{sec:swap}
\vspace{-2mm}
\noindent\textbf{$\checkmark$ Retrieval Space Drives Cache Gain.}
Our central experiment asks how much cache gain changes when we vary only the feature space used for retrieval. We use $M{=}1$ so that the cache is the only additional evidence. In the held-out comparison, all spaces use $K{=}8$, with $w$ selected on ImageNet-V2 and evaluated on ImageNet-A. The gains range from $+0.00$ for MAE and CLIP to $+0.45$ for DINOv2-S, $+9.73$ for DINOv2-B, and $+19.16$ for DINOv2-L. Thus, the same cache construction and retained samples can produce very different gains solely because of the retrieval space. Because all sixteen spaces share one stream and one retained population, these differences are paired per image: at the diagnostic weight the DINOv2-L cache gains $+18.19$ $[+17.21,+19.17]$ points over the same no-cache predictor while the CLIP cache loses $-10.28$ $[-11.38,-9.18]$ (App.~\ref{sec:s_ci}).
\\
\noindent\textbf{$\checkmark$ Cache Gain Persists Across Capacities.}
\cref{tab:swap} reports three cache-gain measures: gain at a fixed weight, the best gain over the weight grid, and gain after selecting $w$ on a held-out stream; App.~\ref{sec:s_transfer} defines these measures and their use in each column. It also reports pre-adaptation retrieval diagnostics and, at each space's best capacity $\Kstar$, both a fixed-$w$ diagnostic and the oracle envelope selected on ImageNet-A. The oracle sweep preserves the same ordering, with attainable gain ranging from $+0.09$ to $+20.06$, while a CLIP cache reaches only $+0.62$ (App.~\ref{sec:s_swap}). DINOv2-S is smaller than CLIP ViT-B/16 yet has more than twice its gain ceiling, showing that model size alone does not explain the effect. Matched-population margin distributions are given in App.~\ref{sec:s_drawn}.
\\
\noindent\textbf{$\checkmark$ Alternative Retrieval Spaces Remain Weaker.}
Modifying CLIP's retrieval geometry closes only a small part of the gap: mean subtraction and whitening reduce hubness and raise CLIP's best gain from $+0.44$ to $+1.07$, still far below DINOv2-L's $+20.06$ (App.~\ref{sec:s_swap}). The pattern also extends beyond a single encoder family: OpenCLIP and SigLIP perform similarly to CLIP, CLIP ViT-L/14 reaches $+5.52$, and ConvNeXt and ResNet show that the effect is not specific to transformer architectures. Overall, cross-modal encoders rank low unless scaled up (\cref{tab:spaces16}).
\\
\noindent\textbf{$\checkmark$ The Ranking Is Robust Across Seeds and Predictors.}
Across three coupled seeds, Spearman $\rho$ is $0.959$, $0.961$, and $0.967$; DINOv2-L leads every seed with $G^\star=19.74\pm0.40$ (Table cells show seed~0) and zero selection regret. The result is consistent with \cref{tab:swap} ($20.06\pm0.34$). With a CLIP ResNet-50 predictor, the same intervention gives $\rho{=}0.968$, again selecting DINOv2-L with zero regret and spanning $0.00$--$19.04$ points of gain. Thus, the purity--gain ordering is robust across seeds and predictors.

\begin{table}[t]
\centering\footnotesize
\setlength{\tabcolsep}{4pt}
\caption{\textbf{Results of controlled same-trajectory intervention, isolating the effect of retrieval space on cache gain (\textit{Tier 1}).} Sixteen retrieval spaces compared under the \textbf{same retained samples} (K=8, M=1), with each gain reported as the evaluation-stream oracle envelope $G^\star$ maximized over the weight grid. \textit{Purity} and \textit{anti-hubness} are measured before adaptation. Paired $95\%$ intervals, McNemar tests, and calibration-subsample intervals, are given in App.~\ref{sec:s_ci}.}
\label{tab:spaces16}
\adjustbox{max width=\linewidth}{%
\begin{tabular}{lcccrr}
\toprule
Cache space & $\purity@10$\,$\uparrow$ (A) & $\purity@10$\,$\uparrow$ (V2) & $\antihub$\,$\downarrow$ (A) & IN-A $G^\star$\,$\uparrow$ & IN-V2 $G^\star$\,$\uparrow$ \\
\midrule
\blk{6}{Self-supervised (self-distillation)} \\
DINOv2-L & 67.3 & 48.7 & 1.67\% & \best{$+19.36$} & $+3.19$ \\
DINOv2-B & 48.9 & 44.4 & 1.20\% & $+10.27$ & $+2.12$ \\
DINOv3-B & 44.6 & 35.6 & 0.79\% & $+6.49$ & $+0.78$ \\
DINOv2-S & 26.1 & 33.1 & 0.69\% & $+1.24$ & $+0.35$ \\
DINO v1 & 16.8 & 29.1 & 1.95\% & $+0.05$ & $+0.13$ \\
\midrule
\blk{6}{Label-supervised} \\
DeiT-III ViT-B/16 & 46.1 & 55.3 & 2.21\% & $+8.01$ & \best{$+5.06$} \\
Supervised ViT-B/16 & 35.0 & 50.8 & 1.37\% & $+4.43$ & $+3.48$ \\
ConvNeXt-B & 31.0 & 42.1 & 1.16\% & $+2.85$ & $+1.63$ \\
ResNet-50 (sup) & 12.3 & 35.8 & 2.73\% & $+0.04$ & $+1.54$ \\
\midrule
\blk{6}{Cross-modal (language-supervised)} \\
CLIP ViT-L/14 & 42.8 & 28.3 & 5.59\% & $+5.52$ & $+0.06$ \\
EVA-02 CLIP & 38.9 & 33.9 & 2.28\% & $+4.15$ & $+0.10$ \\
SigLIP ViT-B/16 & 29.5 & 29.3 & 5.43\% & $+0.99$ & $+0.00$ \\
OpenCLIP ViT-B/16 & 24.3 & 24.4 & 3.35\% & $+0.76$ & $+0.00$ \\
CLIP ViT-B/16 & 29.7 & 22.9 & 6.27\% & $+0.44$ & $+0.00$ \\
\midrule
\blk{6}{Masked image modelling} \\
MAE ViT-B/16 & 3.2 & 1.5 & 3.44\% & $+0.03$ & $+0.00$ \\
BEiT-B/16 & 19.1 & 29.0 & 11.00\% & $+0.00$ & $+0.00$ \\
\midrule
\emph{Spearman $\rho$(purity, gain)} & & & & \gain{$+0.959$} & \gain{$+0.943$} \\
\emph{Spearman $\rho$($-\antihub$, gain)} & & & & $+0.541$ & $+0.756$ \\
\emph{Selection regret of purity} & & & & \best{$0.00$} & \best{$0.00$} \\
\bottomrule
\end{tabular}%
}
\end{table}

\begin{figure}[t]
\centering
\includegraphics[width=\linewidth]{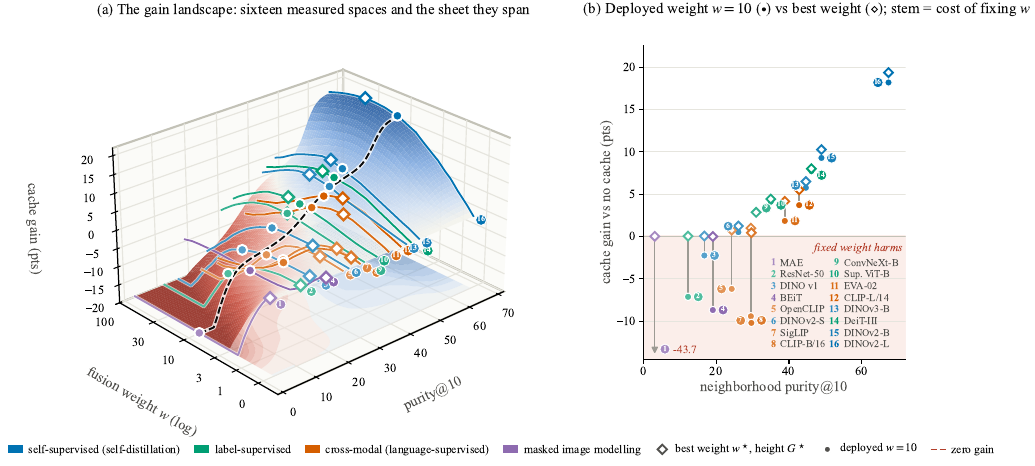}
\caption{\textbf{Cache gain as a function of neighborhood purity and fusion weight on ImageNet-A (\textit{Tier 2}).}
High-purity spaces benefit from a larger cache contribution, whereas low-purity spaces degrade
when the cache is weighted too strongly. Panel (b) compares the fixed choice $w{=}10$ with the
best weight for each space. The surface interpolates between discrete encoder points and does
not assert a continuous functional dependence.}
\label{fig:s_landscape}
\end{figure}

\vspace{-3mm}
\subsection{Tier 2: Retrieval Quality, Transfer, and Efficiency}
\label{sec:capacity}
\vspace{-2mm}
\noindent\textbf{$\checkmark$ Purity predicts gain, while anti-hubness predicts capacity.}
Useful capacity varies by more than an order of magnitude across retrieval spaces (\cref{tab:swap}). Purity strongly ranks attainable gain ($\rho{=}0.90$) but is less predictive of $\Kstar$ ($\rho{=}0.36$), whereas negative anti-hubness ranks $\Kstar$ at $\rho{=}0.87$ but gain at only $\rho{=}0.50$. On held-out ImageNet-V2, the corresponding correlations are $0.98$ and $0.97$. Calibration-subsample intervals separate both arms of this dissociation on ImageNet-A rather than leaving it as an observed ordering: purity ranks gain at $0.900$ $[0.900,0.900]$ against $0.547$ $[0.500,0.718]$ for anti-hubness, while anti-hubness ranks $\Kstar$ at $0.817$ $[0.718,0.872]$ against $0.359$ $[0.359,0.359]$ for purity (App.~\ref{sec:s_ci}). \Cref{fig:s_landscape} shows the joint effect of purity and fusion weight, while \cref{eq:margins} explains the process.
\\
\noindent\textbf{$\checkmark$ Purity predicts cache gain and enables label-free selection.}
Across eight spaces, purity ranks cache gain at $\rho{=}0.93$ on ImageNet-A and $0.98$ on ImageNet-V2, with $\rho\ge0.80$ on five of six cross-domain streams; EuroSAT is the main exception due to similar candidate purity. Purity-based selection incurs only $0.35$ points of mean regret, while pseudo-purity closely matches labeled rankings ($\rho{=}0.98$/$1.00$ on ImageNet-A/V2; $0.93$ across eight streams), selecting the best encoder on $5/8$ streams with $1.48$ points of mean regret. Jointly selecting $s$ by pseudo-purity and transferring $(K,w)$ from a disjoint labeled stream retains about $90\%$ and $98\%$ of oracle gain on ImageNet-A/V2.
\\
\noindent\textbf{$\checkmark$ The trend generalizes across representations, streams, and admission policies.}
Projecting DINOv2-B to $32$--$512$ dimensions preserves a $\rho{=}0.94$ purity--gain correlation. Across sixteen spaces, the correlation is $0.959$ on ImageNet-A and $0.943$ on ImageNet-V2; purity selects different winners across streams, confirming that the best retrieval space is stream-dependent. Among harder ImageNet-A choices spanning $4.4$--$10.0$ points of gain, purity still gives $\rho{=}0.943$ with zero regret. Excluding zero-gain ties, the correlation rises to $0.971$/$0.973$ on ImageNet-A/V2. Finally, across five admission policies, DINOv2-L maintains an approximately $9.6$-point advantage over DINOv2-B, with gains of at least $18.3$ and $8.7$ points, respectively, under random/FIFO admission. Thus, the retrieval-space effect is robust to both representation scale and cache-admission policy; anti-hubness remains useful primarily for diagnosing capacity.
\\
\noindent\textbf{$\checkmark$ Diminishing Cache Gains with More Views.}
We sweep $M\in{1,2,4,8,16,32,64}$ with no cache, a CLIP cache, or a DINOv2 cache and compute the exact difference-in-differences from \cref{eq:substitution}. Across three ImageNet-A seeds, DINOv2 cache gain drops from $+9.30\pm0.38$ points at one view to $+6.56\pm0.10$ at $64$ views, giving an interaction contrast of $-2.74\pm0.33$, with the same direction in every seed; the one-to-four-view contrast is likewise negative at $-2.02\pm0.28$ across all seeds. The repair sets show the same pattern: a single-view cache repairs $1{,}074\pm21$ baseline errors versus $878\pm4$ for $64$ views, at roughly one seventh of the cost, with $426\pm14$ shared repairs ($48.5\pm1.4\%$ overlap). The corresponding regression sets contain $377\pm12$ and $289\pm3$ images, with $80\pm2$ shared, while repair--regression cross-sets have cardinality zero by the disjoint base-error/base-correct partition. App.~\ref{sec:s_substitution} provides the exact contrast, full sweeps, warm-up, and more information.
\\
\noindent\textbf{$\checkmark$ Accuracy--Efficiency Trade-off.}
A single-view DINOv2-B cache exceeds the $64$-view no-cache ensemble ($60.14\pm0.38$ vs.\ $58.69\pm0.09$, both over three seeds) at $6.9\times$ lower measured cost. At matched $M{=}8$, auxiliary-encoder scale, and hardware, \method{} reaches $64.17\pm0.31\%$ at $0.070$~s/image versus $62.75\pm0.15\%$ at $0.181$~s/image for COSMIC. This gives a $1.42$-point accuracy advantage at $2.6\times$ $[2.52,2.64]$ lower latency. A matched evaluation of both systems on three shared stream orders yields a per-image paired advantage of $+1.12$ $[+0.66,+1.58]$ points (App.~\ref{sec:s_ci}); App.~\ref{sec:s_timing} reports the complete Pareto table and three timing sessions (\cref{tab:pareto,tab:s_timing}).

\begin{table}[t]
\centering\footnotesize
\setlength{\tabcolsep}{5.5pt}
\renewcommand{\arraystretch}{1.08}
\caption{\textbf{Comparison with SOTA frameworks under their reported protocols.} Top-1 accuracy (\%) on ImageNet and its OOD variants, evaluating the overall effectiveness of \method{} (\textit{Tier 3}). Prompt sets, view budgets, and implementations follow the cited methods unless marked $\dagger$. \textbf{Bold} denotes the best training-free result overall; underlining denotes the best training-free result using CLIP as the sole visual encoder. \colorbox{bestgreen}{Green} marks the best result among methods using frozen DINOv2-B. Blocks indicate the visual-encoder resources used by each method. Shaded rows denote \method{}, $\dagger$ denotes our runs, and subscripts report standard deviations over three seeds.}
\label{tab:ood}
\adjustbox{max width=\linewidth}{%
\begin{tabular}{lccccccc}
\toprule
Method & ImageNet\,$\uparrow$ & IN-A\,$\uparrow$ & IN-V2\,$\uparrow$ & IN-R\,$\uparrow$ & IN-Sketch\,$\uparrow$ & Average\,$\uparrow$ & OOD Average\,$\uparrow$ \\
\midrule
\blk{8}{CLIP ResNet-50 backbone} \\
CLIP-RN50                                   & 58.16 & 21.83 & 51.41 & 56.15 & 33.37 & 44.18 & 40.69 \\
\midrule
\sub{8}{Methods with parameter adaptation} \\
CoOp~\citep{zhou2022coop}                   & 63.33 & 23.06 & 55.40 & 56.60 & 34.67 & 46.61 & 42.43 \\
CoCoOp~\citep{zhou2022cocoop}               & 62.81 & 23.32 & 55.72 & 57.74 & 34.48 & 46.81 & 42.82 \\
TPT~\citep{shu2022tpt}                      & 60.74 & 26.67 & 54.70 & 59.11 & 35.09 & 47.26 & 43.89 \\
DiffTPT~\citep{feng2023difftpt}             & 60.80 & 31.06 & 55.80 & 58.80 & 37.10 & 48.71 & 45.69 \\
\midrule
\sub{8}{Training-free methods: CLIP visual encoder} \\
TDA~\citep{karmanov2024tda}  & 61.35 & 30.29 & 55.54 & 62.58 & 38.12 & 49.58 & 46.63 \\
ETTA~\citep{etta2025}                        & \underline{61.80} & 31.17 & 55.87 & \underline{62.67} & 38.33 & \underline{49.97} & 47.01 \\
BoostAdapter~\citep{zhang2024boostadapter}   & \na   & \underline{35.12} & \underline{56.14} & 62.66 & \underline{38.87} & \na   & \underline{48.20} \\
\midrule
\sub{8}{Training-free methods: CLIP + frozen DINOv2-B} \\
\ours \textbf{\method{}(Ours)} (DINOv2-B), $M{=}1$$^\dagger$  & 71.48\ci{0.18} & 35.52\ci{0.48} & 58.43\ci{0.17} & 70.16\ci{0.04} & 47.88\ci{0.15} & 56.69\ci{0.03} & 53.00\ci{0.08} \\
\ours \textbf{\method{}(Ours)} (DINOv2-B), $M{=}64$$^\dagger$ & \best{\textbf{71.92\ci{0.18}}} & \best{\textbf{39.07\ci{0.44}}} & \best{\textbf{58.98\ci{0.21}}} & \best{\textbf{70.45\ci{0.17}}} & \best{\textbf{48.65\ci{0.21}}} & \best{\textbf{57.82\ci{0.16}}} & \best{\textbf{54.29\ci{0.23}}} \\
\midrule
\midrule
\blk{8}{CLIP ViT-B/16 backbone} \\
CLIP-ViT-B/16                               & 66.73 & 47.87 & 60.86 & 73.98 & 46.09 & 59.11 & 57.20 \\
\midrule
\sub{8}{Methods with parameter adaptation} \\
CoOp~\citep{zhou2022coop}                   & 71.51 & 49.71 & 64.20 & 75.21 & 47.99 & 61.72 & 59.28 \\
CoCoOp~\citep{zhou2022cocoop}               & 71.02 & 50.63 & 64.07 & 76.18 & 48.75 & 62.13 & 59.91 \\
TPT~\citep{shu2022tpt}                      & 68.98 & 54.77 & 63.45 & 77.06 & 47.94 & 62.44 & 60.81 \\
DiffTPT~\citep{feng2023difftpt}             & 70.30 & 55.68 & 65.10 & 75.00 & 46.80 & 62.58 & 60.65 \\
\midrule
\sub{8}{Training-free methods: CLIP visual encoder} \\
TDA~\citep{karmanov2024tda}  & 69.51 & 60.11 & 64.67 & 80.24 & 50.54 & 65.01 & 63.89 \\
ZERO+Ens~\citep{zero2024} & \underline{71.17} & 62.75 & 65.23 & 80.75 & 50.59 & 66.10 & 64.83 \\

BoostAdapter~\citep{zhang2024boostadapter}   & \na   & \underline{64.53} & \underline{65.51} & 80.95 & 51.28 & \na   & 65.57 \\
SCAP~\citep{zhang2025scap}                   & \na   & 64.52 & 64.65 & 81.68 & 51.65 & \na   & \underline{65.63} \\
TaTa~\citep{zhang2026tata}                   & 70.63 & 61.87 & 65.37 & \underline{81.78} & \underline{52.39} & \underline{66.41} & 65.35 \\
\midrule
\sub{8}{Training-free methods: CLIP + frozen DINOv2-B} \\
COSMIC~\citep{huang2025cosmic} (DINOv2-B), $M{=}8$$^\dagger$ & \best{\textbf{76.39}} & 62.75\ci{0.15} & \best{\textbf{67.54}} & 82.39 & \best{\textbf{57.95}} & \best{\textbf{69.40}} & 67.66 \\
\ours \textbf{\method{}(Ours)} (DINOv2-B), $M{=}4$$^\dagger$  & 74.53\ci{0.04} & 63.00\ci{0.42} & 65.73\ci{0.08} & 82.18\ci{0.09} & 56.26\ci{0.04} & 68.34\ci{0.08} & 66.79\ci{0.10} \\
\ours \textbf{\method{}(Ours)} (DINOv2-B), $M{=}64$$^\dagger$ & 74.84\ci{0.07} & \best{\textbf{65.25\ci{0.02}}} & 66.50\ci{0.09} & \best{\textbf{83.15\ci{0.02}}} & 56.76\ci{0.15} & 69.30\ci{0.04} & \best{\textbf{67.91\ci{0.06}}} \\
\bottomrule
\end{tabular}%
}
\end{table}

\vspace{-3mm}
\subsection{Tier 3: Benchmark Comparison}
\vspace{-2.5mm}
\label{sec:benchmark}
\noindent\textbf{$\checkmark$ \method{} Improves OOD Accuracy and Transfers Across Domains.}
\Cref{tab:ood} compares \method{} with CoOp, CoCoOp, TPT, DiffTPT, and other training-free methods under their reported protocols. Under reported protocols, \method{} reaches a $53.00\pm0.08$ OOD average with a ResNet-50 predictor and one view, versus $48.20$ for the strongest listed training-free baseline; with ViT-B/16, four views reach $66.79$. Across six cross-domain datasets, it averages $74.96$ at four views and $75.45$ at $64$ views, with cache gains ranging from $-0.48$ to $+11.20$ points (App.~\ref{sec:s_benchmark}, \cref{tab:crossdomain}); on Caltech101 and DTD the paired interval includes zero at both view budgets, so the cache has no detectable effect there (App.~\ref{sec:s_ci}). On two additional streams, the average cache gain falls to $+0.84$ versus $+3.32$ on the original six (\cref{tab:cd_extra}).
\\
\noindent\textbf{$\checkmark$ Gains Depend on View Budget and Evaluation Protocol.}
For ResNet-50, increasing from one to $64$ views adds only $1.29$ points, while four views reach $71.48$ on ImageNet-val. The dense $50$-per-class stream favors cache methods with $K{=}16$, so we treat this in-distribution result as protocol-dependent rather than evidence of robustness. Likewise, SOTA comparisons follow each method's reported protocol; notably, the $2.28$-point comparison with SCAP uses auxiliary DINOv2-B retrieval for \method{} versus a CLIP-only baseline, with an auxiliary-matched comparison reported separately.

\vspace{-4mm}
\section{Discussion}
\label{sec:discussion}\label{sec:limitations}
\vspace{-3mm}
Our findings suggest a broader view of cache-based adaptation: the value of a memory depends not only on what it stores, but also on the geometry through which its contents become accessible. This separation is practically important because retrieval can adapt the evidence available to a frozen foundation model without parameter updates, making it attractive for changing visual environments where repeated retraining is costly or unavailable. Across domains, cache contributions range from $-0.48$ to $+11.20$ points, indicating greater utility when the target shift leaves correction headroom and incoming data provide complementary neighborhood structure. This setting is relevant to remote sensing, scientific imaging, robotics, and other streaming applications where recent observations can provide useful local evidence. Retrieval also changes compute allocation: its gain decreases from $+9.30$ points with one view to $+6.56$ with $64$ views, suggesting partial overlap between retrieval and augmentation and favoring retrieval under constrained inference budgets.\\
Purity and anti-hubness provide complementary diagnostics of gain and useful capacity, while closely matched spaces motivate richer label-free selection criteria. Our evaluation spans two predictor backbones, diverse retrieval spaces, four ImageNet shifts, and multiple cross-domain datasets; broader model families, modalities, and non-stationary streams offer natural extensions. These findings suggest establishing retrieval quality before allocating additional compute to large view ensembles or complex adaptation machinery. Ultimately, our results position retrieval geometry as a practical interface between a fixed foundation model and a changing world, providing a path toward adaptation that is lightweight, modular, and responsive to the structure of incoming data.

\vspace{-4mm}
\section{Conclusion}
\label{sec:conclusion}
\vspace{-3mm}
Cache-based CLIP adaptation typically uses the same representation for prediction and retrieval, despite their different geometric requirements. We show that changing only the retrieval space can shift cache gain from nearly zero to about $20$ points, with consistent effects across seeds, architectures, scales, and admission policies. Stronger retrieval also reduces the value of additional views, suggesting that better memory geometry can outperform added inference complexity. Future work should study label-free retrieval selection across shifts, modalities, and applications such as remote sensing and scientific imaging. More broadly, retrieval space should be treated as a first-order design dimension for memory-based adaptation, particularly in changing visual environments where retraining is impractical, opening a new direction for building memory-based adaptation systems.

\vspace{-2mm}
\section*{LLM Usage Disclosure}
\vspace{-2mm}
The authors used large language model (LLM)-assisted tools for non-experimental support tasks during research and manuscript preparation. Specifically, these tools were used for grammar checking, language editing, code debugging, figure ideation, and visualization assistance. In addition, agentic review tools, including the ICLR-provided PAT framework and Stanford Agentic AI tools, were used to support manuscript review and revision by identifying potential issues, suggesting improvements, and providing additional critical feedback on the presentation and clarity of the work. All AI-assisted suggestions, reviews, and outputs were manually evaluated by the authors and, where relevant, independently tested or verified before incorporation. The authors retain full responsibility for the integrity and correctness of the work and for the interpretation of all reported results and analyses.

\section*{Acknowledgements}

We sincerely thank \textbf{\href{https://scholar.google.com/citations?user=4YhNJBEAAAAJ&hl=en}{Prof. Pietro Liò, University of Cambridge}} for his early supervision, insightful discussions about this research, and valuable support and feedback. His guidance helped shape the direction of the research and improve the quality of the manuscript.

We also thank \textbf{Sharjil Khan} and \textbf{Mahiyat Nawar Mantaqa} for their contributions during the initial phase and exploration of the ideas that led to this work.

\bibliography{marc}

\begin{thebibliography}{45}
\providecommand{\natexlab}[1]{#1}
\providecommand{\url}[1]{\texttt{#1}}
\expandafter\ifx\csname urlstyle\endcsname\relax
  \providecommand{\doi}[1]{doi: #1}\else
  \providecommand{\doi}{doi: \begingroup \urlstyle{rm}\Url}\fi

\bibitem[Bao et~al.(2022)Bao, Dong, Piao, and Wei]{bao2022beit}
Hangbo Bao, Li~Dong, Songhao Piao, and Furu Wei.
\newblock {BEiT}: {BERT} pre-training of image transformers.
\newblock In \emph{International Conference on Learning Representations
  (ICLR)}, 2022.

\bibitem[Boudiaf et~al.(2022)Boudiaf, Mueller, Ayed, and
  Bertinetto]{boudiaf2022lame}
Malik Boudiaf, Romain Mueller, Ismail~Ben Ayed, and Luca Bertinetto.
\newblock Parameter-free online test-time adaptation.
\newblock In \emph{2022 IEEE/CVF Conference on Computer Vision and Pattern
  Recognition (CVPR)}, pp.\  8334--8343, 2022.
\newblock \doi{10.1109/CVPR52688.2022.00816}.

\bibitem[Caron et~al.(2021)Caron, Touvron, Misra, J{\'e}gou, Mairal,
  Bojanowski, and Joulin]{caron2021dino}
Mathilde Caron, Hugo Touvron, Ishan Misra, Herv{\'e} J{\'e}gou, Julien Mairal,
  Piotr Bojanowski, and Armand Joulin.
\newblock Emerging properties in self-supervised vision transformers.
\newblock In \emph{Proceedings of the IEEE/CVF international conference on
  computer vision}, pp.\  9650--9660, 2021.

\bibitem[Cherti et~al.(2023)Cherti, Beaumont, Wightman, Wortsman, Ilharco,
  Gordon, Schuhmann, Schmidt, and Jitsev]{cherti2023openclip}
Mehdi Cherti, Romain Beaumont, Ross Wightman, Mitchell Wortsman, Gabriel
  Ilharco, Cade Gordon, Christoph Schuhmann, Ludwig Schmidt, and Jenia Jitsev.
\newblock Reproducible scaling laws for contrastive language-image learning.
\newblock In \emph{2023 IEEE/CVF Conference on Computer Vision and Pattern
  Recognition (CVPR)}, pp.\  2818--2829, 2023.
\newblock \doi{10.1109/CVPR52729.2023.00276}.

\bibitem[Cimpoi et~al.(2014)Cimpoi, Maji, Kokkinos, Mohamed, and
  Vedaldi]{cimpoi2014dtd}
Mircea Cimpoi, Subhransu Maji, Iasonas Kokkinos, Sammy Mohamed, and Andrea
  Vedaldi.
\newblock Describing textures in the wild.
\newblock In \emph{2014 IEEE Conference on Computer Vision and Pattern
  Recognition}, pp.\  3606--3613, 2014.
\newblock \doi{10.1109/CVPR.2014.461}.

\bibitem[Dastmalchi et~al.(2025)Dastmalchi, An, and Cheraghian]{etta2025}
Hamidreza Dastmalchi, Aijun An, and Ali Cheraghian.
\newblock Etta: Efficient test-time adaptation for vision-language models
  through dynamic embedding updates.
\newblock In \emph{36th British Machine Vision Conference 2025, {BMVC} 2025,
  Sheffield, UK, November 24-27, 2025}. BMVA, 2025.

\bibitem[Deng et~al.(2009)Deng, Dong, Socher, Li, Li, and
  Fei-Fei]{deng2009imagenet}
Jia Deng, Wei Dong, Richard Socher, Li-Jia Li, Kai Li, and Li~Fei-Fei.
\newblock Imagenet: A large-scale hierarchical image database.
\newblock In \emph{2009 IEEE Conference on Computer Vision and Pattern
  Recognition}, pp.\  248--255, 2009.
\newblock \doi{10.1109/CVPR.2009.5206848}.

\bibitem[Farina et~al.(2024)Farina, Franchi, Iacca, Mancini, and
  Ricci]{zero2024}
Matteo Farina, Gianni Franchi, Giovanni Iacca, Massimiliano Mancini, and Elisa
  Ricci.
\newblock Frustratingly easy test-time adaptation of vision-language models.
\newblock In \emph{Proceedings of the 38th International Conference on Neural
  Information Processing Systems}, NIPS '24, Red Hook, NY, USA, 2024. Curran
  Associates Inc.
\newblock ISBN 9798331314385.

\bibitem[Fei-Fei et~al.(2004)Fei-Fei, Fergus, and Perona]{feifei2004caltech}
Li~Fei-Fei, R.~Fergus, and P.~Perona.
\newblock Learning generative visual models from few training examples: An
  incremental bayesian approach tested on 101 object categories.
\newblock In \emph{2004 Conference on Computer Vision and Pattern Recognition
  Workshop}, pp.\  178--178, 2004.
\newblock \doi{10.1109/CVPR.2004.383}.

\bibitem[Feng et~al.(2023)Feng, Yu, Liu, Khan, and Zuo]{feng2023difftpt}
Chun-Mei Feng, Kai Yu, Yong Liu, Salman Khan, and Wangmeng Zuo.
\newblock Diverse data augmentation with diffusions for effective test-time
  prompt tuning.
\newblock In \emph{Proceedings of the IEEE/CVF International Conference on
  Computer Vision}, pp.\  2704--2714, 2023.

\bibitem[Han et~al.(2025)Han, Yang, Wang, Li, Xu, Shou, and Zhang]{han2024dota}
Zongbo Han, Jialong Yang, Guangyu Wang, Junfan Li, Qianli Xu, Mike~Zheng Shou,
  and Changqing Zhang.
\newblock Dota: Distributional test-time adaptation of vision-language models.
\newblock In D.~Belgrave, C.~Zhang, H.~Lin, R.~Pascanu, P.~Koniusz,
  M.~Ghassemi, and N.~Chen (eds.), \emph{Advances in Neural Information
  Processing Systems}, volume~38, pp.\  142772--142798. Curran Associates,
  Inc., 2025.

\bibitem[He et~al.(2016)He, Zhang, Ren, and Sun]{he2016resnet}
Kaiming He, Xiangyu Zhang, Shaoqing Ren, and Jian Sun.
\newblock Deep residual learning for image recognition.
\newblock In \emph{2016 IEEE Conference on Computer Vision and Pattern
  Recognition (CVPR)}, pp.\  770--778, 2016.
\newblock \doi{10.1109/CVPR.2016.90}.

\bibitem[He et~al.(2022)He, Chen, Xie, Li, Doll{\'a}r, and Girshick]{he2022mae}
Kaiming He, Xinlei Chen, Saining Xie, Yanghao Li, Piotr Doll{\'a}r, and Ross
  Girshick.
\newblock Masked autoencoders are scalable vision learners.
\newblock In \emph{Proceedings of the IEEE/CVF conference on computer vision
  and pattern recognition}, pp.\  16000--16009, 2022.

\bibitem[Helber et~al.(2019)Helber, Bischke, Dengel, and
  Borth]{helber2019eurosat}
Patrick Helber, Benjamin Bischke, Andreas Dengel, and Damian Borth.
\newblock Eurosat: A novel dataset and deep learning benchmark for land use and
  land cover classification.
\newblock \emph{IEEE Journal of Selected Topics in Applied Earth Observations
  and Remote Sensing}, 12\penalty0 (7):\penalty0 2217--2226, 2019.
\newblock \doi{10.1109/JSTARS.2019.2918242}.

\bibitem[Hendrycks et~al.(2020)Hendrycks, Mu, Cubuk, Zoph, Gilmer, and
  Lakshminarayanan]{hendrycks2020augmix}
Dan Hendrycks, Norman Mu, Ekin~D. Cubuk, Barret Zoph, Justin Gilmer, and Balaji
  Lakshminarayanan.
\newblock {AugMix}: A simple data processing method to improve robustness and
  uncertainty.
\newblock \emph{Proceedings of the International Conference on Learning
  Representations (ICLR)}, 2020.

\bibitem[Hendrycks et~al.(2021{\natexlab{a}})Hendrycks, Basart, Mu, Kadavath,
  Wang, Dorundo, Desai, Zhu, Parajuli, Guo, et~al.]{hendrycks2021imagenetr}
Dan Hendrycks, Steven Basart, Norman Mu, Saurav Kadavath, Frank Wang, Evan
  Dorundo, Rahul Desai, Tyler Zhu, Samyak Parajuli, Mike Guo, et~al.
\newblock The many faces of robustness: A critical analysis of
  out-of-distribution generalization.
\newblock In \emph{Proceedings of the IEEE/CVF international conference on
  computer vision}, pp.\  8340--8349, 2021{\natexlab{a}}.

\bibitem[Hendrycks et~al.(2021{\natexlab{b}})Hendrycks, Zhao, Basart,
  Steinhardt, and Song]{hendrycks2021imageneta}
Dan Hendrycks, Kevin Zhao, Steven Basart, Jacob Steinhardt, and Dawn Song.
\newblock Natural adversarial examples.
\newblock In \emph{Proceedings of the IEEE/CVF conference on computer vision
  and pattern recognition}, pp.\  15262--15271, 2021{\natexlab{b}}.

\bibitem[Hu et~al.(2024)Hu, Zhang, Sun, Chen, Kuo, and Nevatia]{hu2024bafta}
Xuefeng Hu, Ke~Zhang, Min Sun, Albert Chen, Cheng-Hao Kuo, and Ram Nevatia.
\newblock Bafta: Backprop-free test-time adaptation for zero-shot
  vision-language models.
\newblock \emph{arXiv preprint arXiv:2406.11309}, 2024.

\bibitem[Huang et~al.(2025)Huang, Jiang, Jiang, Li, Khan, and
  Wang]{huang2025cosmic}
Fanding Huang, Jingyan Jiang, Qinting Jiang, Hebei Li, Faisal~Nadeem Khan, and
  Zhi Wang.
\newblock Cosmic: Clique-oriented semantic multi-space integration for robust
  clip test-time adaptation.
\newblock In \emph{Proceedings of the Computer Vision and Pattern Recognition
  Conference}, pp.\  9772--9781, 2025.

\bibitem[Karmanov et~al.(2024)Karmanov, Guan, Lu, El~Saddik, and
  Xing]{karmanov2024tda}
Adilbek Karmanov, Dayan Guan, Shijian Lu, Abdulmotaleb El~Saddik, and Eric
  Xing.
\newblock { Efficient Test-Time Adaptation of Vision-Language Models }.
\newblock In \emph{2024 IEEE/CVF Conference on Computer Vision and Pattern
  Recognition (CVPR)}, pp.\  14162--14171. IEEE Computer Society, 2024.
\newblock \doi{10.1109/CVPR52733.2024.01343}.

\bibitem[Kolesnikov et~al.(2021)Kolesnikov, Dosovitskiy, Weissenborn, Heigold,
  Uszkoreit, Beyer, Minderer, Dehghani, Houlsby, Gelly, Unterthiner, and
  Zhai]{dosovitskiy2021vit}
Alexander Kolesnikov, Alexey Dosovitskiy, Dirk Weissenborn, Georg Heigold,
  Jakob Uszkoreit, Lucas Beyer, Matthias Minderer, Mostafa Dehghani, Neil
  Houlsby, Sylvain Gelly, Thomas Unterthiner, and Xiaohua Zhai.
\newblock An image is worth 16x16 words: Transformers for image recognition at
  scale.
\newblock In \emph{International Conference on Learning Representations}, 2021.

\bibitem[Liang et~al.(2020)Liang, Hu, and Feng]{liang2020shot}
Jian Liang, Dapeng Hu, and Jiashi Feng.
\newblock Do we really need to access the source data? source hypothesis
  transfer for unsupervised domain adaptation.
\newblock In \emph{International Conference on Machine Learning (ICML)}, pp.\
  6028--6039, 2020.

\bibitem[Liu et~al.(2022)Liu, Mao, Wu, Feichtenhofer, Darrell, and
  Xie]{liu2022convnext}
Zhuang Liu, Hanzi Mao, Chao-Yuan Wu, Christoph Feichtenhofer, Trevor Darrell,
  and Saining Xie.
\newblock A convnet for the 2020s.
\newblock In \emph{Proceedings of the IEEE/CVF conference on computer vision
  and pattern recognition}, pp.\  11976--11986, 2022.

\bibitem[Nilsback \& Zisserman(2008)Nilsback and
  Zisserman]{nilsback2008flowers}
Maria-Elena Nilsback and Andrew Zisserman.
\newblock Automated flower classification over a large number of classes.
\newblock In \emph{2008 Sixth Indian conference on computer vision, graphics \&
  image processing}, pp.\  722--729. IEEE, 2008.

\bibitem[Oquab et~al.(2023)Oquab, Darcet, Moutakanni, Vo, Szafraniec, Khalidov,
  Fernandez, Haziza, Massa, El-Nouby, et~al.]{oquab2023dinov2}
Maxime Oquab, Timoth{\'e}e Darcet, Th{\'e}o Moutakanni, Huy Vo, Marc
  Szafraniec, Vasil Khalidov, Pierre Fernandez, Daniel Haziza, Francisco Massa,
  Alaaeldin El-Nouby, et~al.
\newblock Dinov2: Learning robust visual features without supervision.
\newblock \emph{arXiv preprint arXiv:2304.07193}, 2023.

\bibitem[Parkhi et~al.(2012)Parkhi, Vedaldi, Zisserman, and
  Jawahar]{parkhi2012pets}
Omkar~M Parkhi, Andrea Vedaldi, Andrew Zisserman, and CV~Jawahar.
\newblock Cats and dogs.
\newblock In \emph{2012 IEEE conference on computer vision and pattern
  recognition}, pp.\  3498--3505. IEEE, 2012.

\bibitem[Radford et~al.(2021)Radford, Kim, Hallacy, Ramesh, Goh, Agarwal,
  Sastry, Askell, Mishkin, Clark, Krueger, and Sutskever]{radford2021clip}
Alec Radford, Jong~Wook Kim, Chris Hallacy, Aditya Ramesh, Gabriel Goh,
  Sandhini Agarwal, Girish Sastry, Amanda Askell, Pamela Mishkin, Jack Clark,
  Gretchen Krueger, and Ilya Sutskever.
\newblock Learning transferable visual models from natural language
  supervision.
\newblock In Marina Meila and Tong Zhang (eds.), \emph{Proceedings of the 38th
  International Conference on Machine Learning}, volume 139 of
  \emph{Proceedings of Machine Learning Research}, pp.\  8748--8763. PMLR,
  18--24 Jul 2021.

\bibitem[Radovanovic et~al.(2010)Radovanovic, Nanopoulos, and
  Ivanovic]{radovanovic2010hubness}
Milos Radovanovic, Alexandros Nanopoulos, and Mirjana Ivanovic.
\newblock Hubs in space: Popular nearest neighbors in high-dimensional data.
\newblock \emph{Journal of machine learning research}, 11\penalty0
  (86):\penalty0 2487--2531, 2010.

\bibitem[Recht et~al.(2019)Recht, Roelofs, Schmidt, and
  Shankar]{recht2019imagenetv2}
Benjamin Recht, Rebecca Roelofs, Ludwig Schmidt, and Vaishaal Shankar.
\newblock Do imagenet classifiers generalize to imagenet?
\newblock In \emph{International conference on machine learning}, pp.\
  5389--5400. PMLR, 2019.

\bibitem[Schuhmann et~al.(2022)Schuhmann, Beaumont, Vencu, Gordon, Wightman,
  Cherti, Coombes, Katta, Mullis, Wortsman, Schramowski, Kundurthy, Crowson,
  Schmidt, Kaczmarczyk, and Jitsev]{schuhmann2022laion5b}
Christoph Schuhmann, Romain Beaumont, Richard Vencu, Cade Gordon, Ross
  Wightman, Mehdi Cherti, Theo Coombes, Aarush Katta, Clayton Mullis, Mitchell
  Wortsman, Patrick Schramowski, Srivatsa Kundurthy, Katherine Crowson, Ludwig
  Schmidt, Robert Kaczmarczyk, and Jenia Jitsev.
\newblock Laion-5b: an open large-scale dataset for training next generation
  image-text models.
\newblock In \emph{Proceedings of the 36th International Conference on Neural
  Information Processing Systems}, NIPS '22, 2022.
\newblock ISBN 9781713871088.

\bibitem[Shu et~al.(2022)Shu, Nie, Huang, Yu, Goldstein, Anandkumar, and
  Xiao]{shu2022tpt}
Manli Shu, Weili Nie, De-An Huang, Zhiding Yu, Tom Goldstein, Anima Anandkumar,
  and Chaowei Xiao.
\newblock Test-time prompt tuning for zero-shot generalization in
  vision-language models.
\newblock NIPS '22, Red Hook, NY, USA, 2022. Curran Associates Inc.
\newblock ISBN 9781713871088.

\bibitem[Sim{\'e}oni et~al.(2025)Sim{\'e}oni, Vo, Khalidov, Seitzer,
  Baldassarre, Oquab, Jose, Szafraniec, Yi, Ramamonjisoa,
  et~al.]{simeoni2025dinov3}
Oriane Sim{\'e}oni, Huy~V Vo, Vasil Khalidov, Maximilian Seitzer, Federico
  Baldassarre, Maxime Oquab, Cijo Jose, Marc Szafraniec, Seungeun Yi,
  Micha{\"e}l Ramamonjisoa, et~al.
\newblock Dinov3.
\newblock \emph{arXiv preprint arXiv:2508.10104}, 2025.

\bibitem[Soomro et~al.(2012)Soomro, Zamir, and Shah]{soomro2012ucf101}
Khurram Soomro, Amir~Roshan Zamir, and Mubarak Shah.
\newblock Ucf101: A dataset of 101 human actions classes from videos in the
  wild.
\newblock \emph{arXiv preprint arXiv:1212.0402}, 2012.

\bibitem[Sun et~al.(2023)Sun, Fang, Wu, Wang, and Cao]{sun2023evaclip}
Quan Sun, Yuxin Fang, Ledell Wu, Xinlong Wang, and Yue Cao.
\newblock Eva-clip: Improved training techniques for clip at scale.
\newblock \emph{arXiv preprint arXiv:2303.15389}, 2023.

\bibitem[Touvron et~al.(2022)Touvron, Cord, and J{\'e}gou]{touvron2022deit3}
Hugo Touvron, Matthieu Cord, and Herv{\'e} J{\'e}gou.
\newblock Deit iii: Revenge of the vit.
\newblock In \emph{European Conference on Computer Vision}, pp.\  516--533.
  Springer, 2022.

\bibitem[Wang et~al.(2021)Wang, Shelhamer, Liu, Olshausen, and
  Darrell]{wang2021tent}
Dequan Wang, Evan Shelhamer, Shaoteng Liu, Bruno Olshausen, and Trevor Darrell.
\newblock Tent: Fully test-time adaptation by entropy minimization.
\newblock In \emph{International Conference on Learning Representations}, 2021.

\bibitem[Wang et~al.(2019)Wang, Ge, Lipton, and Xing]{wang2019imagenetsk}
Haohan Wang, Songwei Ge, Zachary Lipton, and Eric~P Xing.
\newblock Learning robust global representations by penalizing local predictive
  power.
\newblock \emph{Advances in neural information processing systems}, 32, 2019.

\bibitem[Zhai et~al.(2023)Zhai, Mustafa, Kolesnikov, and Beyer]{zhai2023siglip}
Xiaohua Zhai, Basil Mustafa, Alexander Kolesnikov, and Lucas Beyer.
\newblock Sigmoid loss for language image pre-training.
\newblock In \emph{2023 IEEE/CVF International Conference on Computer Vision
  (ICCV)}, pp.\  11941--11952, 2023.
\newblock \doi{10.1109/ICCV51070.2023.01100}.

\bibitem[Zhang et~al.(2025)Zhang, Xu, Liu, Peng, and Zhou]{zhang2025scap}
Chenyu Zhang, Kunlun Xu, Zichen Liu, Yuxin Peng, and Jiahuan Zhou.
\newblock Scap: Transductive test-time adaptation via supportive clique-based
  attribute prompting.
\newblock In \emph{Proceedings of the Computer Vision and Pattern Recognition
  Conference}, pp.\  30032--30041, 2025.

\bibitem[Zhang et~al.(2022{\natexlab{a}})Zhang, Levine, and
  Finn]{zhang2022memo}
Marvin Zhang, Sergey Levine, and Chelsea Finn.
\newblock Memo: test time robustness via adaptation and augmentation.
\newblock In \emph{Proceedings of the 36th International Conference on Neural
  Information Processing Systems}, 2022{\natexlab{a}}.

\bibitem[Zhang et~al.(2022{\natexlab{b}})Zhang, Zhang, Fang, Gao, Li, Dai,
  Qiao, and Li]{zhang2022tipadapter}
Renrui Zhang, Wei Zhang, Rongyao Fang, Peng Gao, Kunchang Li, Jifeng Dai,
  Yu~Qiao, and Hongsheng Li.
\newblock Tip-adapter: Training-free adaption of clip for few-shot
  classification.
\newblock In \emph{Computer Vision -- ECCV 2022}, pp.\  493--510. Springer
  Nature Switzerland, 2022{\natexlab{b}}.

\bibitem[Zhang et~al.(2024)Zhang, Wang, Guo, Dai, Chen, and
  Xia]{zhang2024boostadapter}
Taolin Zhang, Jinpeng Wang, Hang Guo, Tao Dai, Bin Chen, and Shu-Tao Xia.
\newblock Boostadapter: Improving vision-language test-time adaptation via
  regional bootstrapping.
\newblock \emph{Advances in Neural Information Processing Systems},
  37:\penalty0 67795--67825, 2024.

\bibitem[Zhang et~al.(2026)Zhang, Cheng, Aviles-Rivero, He, and
  Zhang]{zhang2026tata}
Yi~Zhang, Chun-Wun Cheng, Angelica~I. Aviles-Rivero, Zhihai He, and Liang-Jie
  Zhang.
\newblock Training-free test-time adaptation with brownian distance covariance
  in vision-language models.
\newblock In \emph{ICASSP 2026 - 2026 IEEE International Conference on
  Acoustics, Speech and Signal Processing (ICASSP)}, pp.\  12782--12786, 2026.
\newblock \doi{10.1109/ICASSP55912.2026.11462803}.

\bibitem[Zhou et~al.(2022{\natexlab{a}})Zhou, Yang, Loy, and
  Liu]{zhou2022cocoop}
Kaiyang Zhou, Jingkang Yang, Chen~Change Loy, and Ziwei Liu.
\newblock Conditional prompt learning for vision-language models.
\newblock In \emph{Proceedings of the IEEE/CVF conference on computer vision
  and pattern recognition}, pp.\  16816--16825, 2022{\natexlab{a}}.

\bibitem[Zhou et~al.(2022{\natexlab{b}})Zhou, Yang, Loy, and Liu]{zhou2022coop}
Kaiyang Zhou, Jingkang Yang, Chen~Change Loy, and Ziwei Liu.
\newblock Learning to prompt for vision-language models.
\newblock \emph{Int. J. Comput. Vision}, 130\penalty0 (9):\penalty0
  2337–2348, 2022{\natexlab{b}}.
\newblock ISSN 0920-5691.
\newblock \doi{10.1007/s11263-022-01653-1}.
\newblock URL \url{https://doi.org/10.1007/s11263-022-01653-1}.

\end{thebibliography}
\bibliographystyle{iclr2027_conference}

\appendix
\clearpage
\newpage

\begin{center}
{\Large\bf Appendix}\\[3mm]
{\large\bf Table of Contents}
\end{center}
\vspace{1em}
\startcontents[sections]
\printcontents[sections]{l}{1}{\setcounter{tocdepth}{3}}

\vspace{2em}
\noindent\fbox{\parbox{\dimexpr\linewidth-2\fboxsep-2\fboxrule\relax}{\small
\textbf{Table conventions.} The following apply to every table in the paper and appendix;
where a caption gives a table-specific meaning, the caption takes precedence.\\[0.5em]
\begin{tabular}{@{}p{0.2\linewidth}p{0.74\linewidth}@{}}
\colorbox{bestgreen}{Green} & Best entry in its column or block. \\
\textcolor{gaingreen}{Green}\,/\,\textcolor{red}{Red} & Favourable\,/\,unfavourable value, e.g.\ a gain versus a loss, or a strong versus a weak correlation. \\
$XX.XX\ci{Y.YY}$ & Mean with standard deviation across seeds. \\
$\uparrow$\,/\,$\downarrow$ & Higher\,/\,lower is better. \\
$\dagger$ & Our runs of the method. \\
\na & Not applicable or not reported. \\
\end{tabular}}}

\clearpage
\newpage

\raggedbottom
\setcounter{table}{0}
\setcounter{figure}{0}
\renewcommand{\thetable}{A\arabic{table}}
\renewcommand{\thefigure}{A\arabic{figure}}
\renewcommand{\theHtable}{appx.\arabic{table}}
\renewcommand{\theHfigure}{appx.\arabic{figure}}


\section{More Details on Related Work}
\label{sec:apx-related}

\subsection{Cache based Adaptation}
Test time adaptation (TTA) traditionally adapts a model to the target stream by updating its parameters, for example through entropy minimization~\citep{wang2021tent} or source free hypothesis transfer~\citep{liang2020shot}. Cache based methods offer a different paradigm: they keep the pretrained model frozen and perform adaptation through an external memory constructed from target examples. Tip Adapter~\citep{zhang2022tipadapter} establishes a representative formulation of this approach. It stores image features and their labels in a key value cache, retrieves examples similar to the current query, and combines the retrieved evidence with the zero shot CLIP logits. TDA~\citep{karmanov2024tda} extends this formulation to the online, unlabeled setting. Instead of relying on labeled support examples, it populates the cache with entropy prioritized pseudo labels from the incoming test stream and maintains separate positive and negative memories for each class. Our method adopts TDA's cache construction and update rule, allowing us to focus specifically on how the choice of retrieval space affects adaptation.
\\
Several subsequent methods improve other components of the cache adaptation pipeline. These include regional bootstrapping~\citep{zhang2024boostadapter}, online per class Gaussian modeling~\citep{han2024dota}, clique based attribute prompting~\citep{zhang2025scap}, SVD based clustering~\citep{hu2024bafta}, and Brownian distance covariance~\citep{zhang2026tata}. COSMIC~\citep{huang2025cosmic}, in particular, combines CLIP and DINOv2 caches through dual semantic graphs and clique guided hyper class queries. While these approaches modify cache construction, representation, or aggregation jointly, our goal is more controlled: we isolate retrieval geometry by varying only the encoder that defines similarity, while keeping cache construction and admission fixed.

\subsection{Self supervised Representations as Retrieval Spaces}
The representation used for retrieval directly determines which cached examples are considered neighbors. Self supervised vision models therefore provide a useful family of alternative retrieval spaces. DINO~\citep{caron2021dino} and DINOv2~\citep{oquab2023dinov2} learn representations through cross view consistency and exhibit strong transfer to tasks that depend directly on image to image similarity, including nearest neighbor classification and retrieval. Their embeddings can consequently induce neighborhood structures that differ substantially from those produced by CLIP's vision encoder.
\\
MAE~\citep{he2022mae}, in contrast, learns representations through masked image reconstruction and generally exhibits weaker $k$ NN structure. This difference gives us a useful testbed for separating representation quality from the mechanics of cache adaptation. Rather than treating the retrieval encoder as an interchangeable implementation detail, we study how encoder induced similarity structures affect cache gain, the number of useful augmented views, and the effective memory capacity.

\subsection{Geometry of Nearest neighbor Retrieval}
The effectiveness of a cache depends not only on what it stores, but also on the geometry that determines which entries are retrieved. Neighborhood purity and hubness~\citep{radovanovic2010hubness} are established measures for characterizing $k$ NN behavior in high dimensional representation spaces. High purity indicates that nearby samples tend to share semantic labels. Hubness captures an orthogonal phenomenon in which a small subset of points appears in many neighbor lists, while anti hubs are rarely or never retrieved.
\\
Prior adaptation work touches on these geometric properties without explicitly treating retrieval space as an experimental variable. BaFTA~\citep{hu2024bafta}, for example, uses $k$ NN accuracy to assess the quality of a CLIP projection. Cache based TTA methods also show that increasing memory capacity introduces a tradeoff between greater sample diversity and increased pseudo label contamination~\citep{han2024dota,karmanov2024tda}. LAME~\citep{boudiaf2022lame} further demonstrates that neighborhood structure itself can support adaptation by constructing a $k$ NN affinity graph over test features without updating model parameters. However, these works largely operate within a fixed feature space. We instead compare retrieval spaces explicitly, characterize them through neighborhood purity and anti hubness, and test whether these geometric properties predict both cache gain and useful memory capacity.

\subsection{Prompt based and Training free CLIP Adaptation}
Test time augmentation provides another mechanism for extracting useful information from a single target image. Augmentation has long been used to improve robustness under distribution shift~\citep{hendrycks2020augmix}, and recent TTA methods use multiple augmented views as an explicit adaptation signal. MEMO~\citep{zhang2022memo}, for instance, adapts predictions by minimizing the marginal entropy across augmented views. In the vision language setting, TPT~\citep{shu2022tpt} and DiffTPT~\citep{feng2023difftpt} use view ensembles for test time prompt optimization, while ZERO~\citep{zero2024} explores training free adaptation from multiple views.
\\
Across these methods, the number of augmented views is typically treated primarily as a protocol or computational choice. In a cache based system, however, additional views can also influence the reliability of pseudo labels and the evidence entering the memory. We therefore study view budget as part of the retrieval problem and examine whether the benefit of additional views changes systematically with the effectiveness of the underlying retrieval space.

\section{Retrieval-Space Comparison: Additional Controls}
\label{sec:s_swap}

\begin{table}[H]
\centering
\scriptsize
\setlength{\tabcolsep}{2pt}
\renewcommand{\arraystretch}{1.05}
\caption{Tier~(i), controlled same-trajectory comparison on ImageNet-A with a CLIP ViT-B/16
predictor, $M{=}1$, and a $50.84\%$ no-cache baseline; only encoder-induced query--key
similarities change. Purity and anti-hubness are measured at $K{=}10$ before adaptation, and fixed-$w$ entries
are separate diagnostic runs. $G^\star$ is the evaluation-stream oracle envelope, not a
deployable result. Oracle
subscripts denote standard deviations over three seeds; transfer uses $K{=}8$ and weights
selected on ImageNet-V2, averaged over three seeds. \Cref{tab:s_spaces16_seeds} reports a second, independent three-seed sweep of the same
protocol, differing only in that the stream permutation is drawn from an explicit
generator rather than the global RNG; the two means agree within seed dispersion
($+20.06\pm0.34$ here versus $+19.74\pm0.40$ there for DINOv2-L; $+9.88\pm0.38$ versus
$+9.97\pm0.29$ for DINOv2-B).
}
\label{tab:swap}
\adjustbox{max width=\linewidth}{
\begin{tabular}{lccccrrr}
\toprule
Cache space & $\purity@10$\,$\uparrow$ & $\antihub$\,$\downarrow$ & $\Kstar$ & $w$
& accuracy ($G_{10}$) & $G^\star$\,$\uparrow$ & $G^{\mathrm{tr}}$\,$\uparrow$ \\
\midrule
\blk{8}{Control spaces} \\
MAE ViT-B/16 & 3.2 & 3.4 & 1 & 0.1 & 32.93 \small{($-17.91$)} & $+0.09$\ci{0.07} & $+0.00$ \\
CLIP ViT-B/16 & 29.7 & 6.3 & 2 & 1 & 46.84 \small{($-4.00$)} & $+0.62$\ci{0.10} & $+0.00$ \\
\midrule
\blk{8}{Self-supervised spaces} \\
DINOv2-S & 26.1 & \best{0.7} & \best{16} & 3 & 50.40 \small{($-0.44$)} & $+1.33$\ci{0.09} & $+0.45$ \\
DINOv2-B & 48.9 & 1.2 & 8 & 15 & 60.01 \small{($+9.17$)} & $+9.88$\ci{0.38} & $+9.73$ \\
\ours DINOv2-L & \best{67.3} & 1.7 & 8 & 20 & 68.99 \small{($+18.15$)} & \best{$+20.06$\ci{0.34}} & \best{$+19.16$} \\
\midrule
\emph{Span, weakest to strongest} & & & & & & \gain{$19.97$} & \gain{$19.16$} \\
\bottomrule
\end{tabular}
}
\end{table}

\subsection{Single versus Parallel Caches}

Every row of Tab.~\ref{tab:swap} uses the named representation as the \emph{only} cache.
A parallel-cache
pipeline instead gives the CLIP arm $51.59$ at $K{=}2$; because this is a different system,
the controlled comparisons use the single-cache result.

\subsection{Why DINOv2 Is Not a Zero-Shot Predictor}

The converse experiment DINOv2 for zero-shot prediction is not directly defined because
DINOv2 has no language supervision or text-aligned head. An unlearned projection into CLIP
text space would evaluate the map rather than the encoder; a fixed random projection obtains
$0.44\%$, approximately chance.

A better-defined test discards CLIP logits at query time and scores only the cache.
\Cref{tab:s_cachealone} shows strong dependence on encoder scale.

\begin{table}[H]
\centering\footnotesize
\setlength{\tabcolsep}{4pt}
\caption{Cache-only prediction on ImageNet-A at $M{=}1$, with CLIP logits discarded at query
time. CLIP still supplies stored pseudo-labels, so this is not an independent classifier.}
\label{tab:s_cachealone}
\begin{tabular}{lcc}
\toprule
Cache scored alone & Top-1\,$\uparrow$ & vs.\ no-cache prompt ensemble \\
\midrule
No-cache, prompt ensemble & 50.84 & \na \\
\midrule
CLIP cache      & 27.00 & \loss{$-23.84$} \\
DINOv2-B cache  & 49.40 & \loss{$-1.44$} \\
\ours DINOv2-L cache & \best{64.20} & \gain{$+13.36$} \\
\bottomrule
\end{tabular}
\end{table}

CLIP still supplies the pseudo-labels, while DINOv2 supplies the image neighborhoods; the
result supports their complementary roles rather than a universally stronger encoder.

\section{Diminishing Marginal Cache Gain: Additional Results}
\label{sec:s_substitution}

Let $A_{M,r}$ be accuracy at view budget $M$ with retrieval indicator $r\in\{0,1\}$. For
$M_1<M_2$, our exact interaction contrast is
\begin{equation}
\begin{aligned}
\operatorname{Sub}(M_1,M_2)
&=[A_{M_2,1}-A_{M_2,0}]-[A_{M_1,1}-A_{M_1,0}]\\
&=[A_{M_2,1}-A_{M_1,1}]-[A_{M_2,0}-A_{M_1,0}].
\end{aligned}
\label{eq:substitution}
\end{equation}
Values below zero mean that retrieval contributes less after increasing the view budget. The
contrast is an algebraic difference in differences that summarizes the measured interaction.

\noindent\textbf{Measurement.}
For fixed stream, retrieval space, capacity, and weight, let
$\Delta_M=\mathrm{Acc}_{\mathrm{cache}}(M)-\mathrm{Acc}_0(M)$. Then
$\operatorname{Sub}(M_1,M_2)=\Delta_{M_2}-\Delta_{M_1}$: values below, at, and above zero
represent decreasing, unchanged, and increasing marginal cache gain, respectively.

\begin{table}[H]
\centering\footnotesize
\setlength{\tabcolsep}{4pt}
\caption{ImageNet-A accuracy across view budgets and cache spaces (CLIP ViT-B/16). $\Delta$
is the fixed-weight gain $G_w$ of the DINOv2-B cache over the same no-cache ensemble; the shaded row uses one view.
The bottom-right entry is $\operatorname{Sub}(1,64)$ from \cref{eq:substitution}. This table
shows the complete seed-0 budget sweep; \cref{tab:s_substitution_seeds} reports three-seed
replication at the anchor budgets. Costs are reported in \cref{tab:pareto}.}
\label{tab:grid}
\begin{tabular}{rcccr}
\toprule
$M$ & no cache\,$\uparrow$ & +CLIP cache\,$\uparrow$ & +DINOv2 cache\,$\uparrow$ & $G_w{=}\Delta$\,$\uparrow$ \\
\midrule
\ours 1  & 50.84 & 49.59 & 59.73 & \best{\gain{$+8.89$}} \\
2  & 53.76 & 52.85 & 61.52 & \gain{$+7.76$} \\
4  & 55.77 & 55.23 & 62.93 & \gain{$+7.16$} \\
8  & 57.20 & 56.20 & 63.96 & \gain{$+6.76$} \\
16 & \best{59.03} & 57.85 & 64.40 & \gain{$+5.37$} \\
32 & 58.21 & 57.69 & 64.72 & \gain{$+6.51$} \\
64 & 58.77 & \best{58.21} & \best{65.23} & \gain{$+6.45$} \\
\midrule
\emph{$\Delta$, $M{=}1 \rightarrow 64$} & \gain{$+7.93$} & \gain{$+8.62$} & \gain{$+5.50$} & \loss{$-2.44$} \\
\bottomrule
\end{tabular}
\end{table}

\begin{table}[H]
\centering\footnotesize
\setlength{\tabcolsep}{5pt}
\caption{Three-seed replication of the ImageNet-A augmentation--retrieval measurements.
Cache gains are $G_w(M)$ and $\operatorname{Sub}(1,M)=G_w(M)-G_w(1)$. Repair counts compare
the $64$-view ensemble and the single-view DINOv2-B cache with the single-view predictor.}
\label{tab:s_substitution_seeds}
\begin{tabular}{lrrrr}
\toprule
Quantity & seed 0 & seed 1 & seed 2 & mean $\pm$ sd \\
\midrule
$G_w(1)$ & $+8.89$ & $+9.65$ & $+9.35$ & $+9.30\pm0.38$ \\
$G_w(4)$ & $+7.16$ & $+7.60$ & $+7.06$ & $+7.27\pm0.29$ \\
$G_w(64)$ & $+6.45$ & $+6.56$ & $+6.66$ & $+6.56\pm0.10$ \\
$\operatorname{Sub}(1,4)$ & $-1.73$ & $-2.05$ & $-2.29$ & $-2.02\pm0.28$ \\
$\operatorname{Sub}(1,64)$ & $-2.44$ & $-3.09$ & $-2.69$ & $-2.74\pm0.33$ \\
Augmentation repairs & 881 & 880 & 873 & $878\pm4$ \\
Retrieval repairs & 1,057 & 1,097 & 1,069 & $1,074\pm21$ \\
Shared repairs & 425 & 440 & 412 & $426\pm14$ \\
Shared / augmentation & $48.2\%$ & $50.0\%$ & $47.2\%$ & $48.5\pm1.4\%$ \\
\bottomrule
\end{tabular}
\end{table}

\noindent\textbf{Repair-set accounting.}
Relative to the one-view no-cache predictor $f_0$, define
\[
E_0=\{t:f_0(x_t)\ne y_t\},\quad
R_u=\{t\in E_0:f_u(x_t)=y_t\},\quad
B_u=\{t\notin E_0:f_u(x_t)\ne y_t\}.
\]
Then
$\mathrm{Acc}(f_u)-\mathrm{Acc}(f_0)=(|R_u|-|B_u|)/T$.
Across three seeds, retrieval repairs $1{,}074\pm21$ baseline errors versus $878\pm4$ for
augmentation, while their intersection contains $426\pm14$ images
($48.5\pm1.4\%$ of augmentation repairs). Together with the gain contrasts, these counts
provide directional evidence that the two sources exploit partially shared errors.

\begin{table}[H]
\centering\footnotesize
\setlength{\tabcolsep}{5pt}
\caption{Joint repair and regression accounting on ImageNet-A over three seeds. The reference
is the one-view cache-free predictor; augmentation uses $64$ views, and retrieval uses a
one-view DINOv2-B cache. Values are mean $\pm$ standard deviation.}
\label{tab:s_repair_regression}
\begin{tabular}{lcccc}
\toprule
Set & Augmentation & Retrieval & CLIP cache & Aug. $\cap$ retrieval \\
\midrule
Repairs $|R_u|$ & $878\pm4$ & $1{,}074\pm21$ & $378\pm30$ & $426\pm14$ \\
Regressions $|B_u|$ & $289\pm3$ & $377\pm12$ & $429\pm16$ & $80\pm2$ \\
\bottomrule
\end{tabular}
\end{table}

The mean repair partition contains $452$ augmentation-only, $648$ retrieval-only, and $426$
shared images. By definition, $R_u\subseteq E_0$ and $B_u\subseteq E_0^c$, so
$|R_{\mathrm{aug}}\cap B_{\mathrm{ret}}|=|R_{\mathrm{ret}}\cap B_{\mathrm{aug}}|=0$.
The identity $(|R_u|-|B_u|)/T$ recovers the measured accuracy change for each source.

\subsection{Results Across OOD Splits}

\Cref{tab:s_ood_sub} repeats the comparison on all four OOD splits; cache gain decreases from
one to four views on each.

\begin{table}[H]
\centering\scriptsize
\setlength{\tabcolsep}{3pt}
\caption{Single-run DINOv2-B cache gain at one and four views across OOD splits (CLIP
ViT-B/16). Cells show no cache\,$\rightarrow$\,+DINOv2; $\Delta_M=G_w(M)$ is fixed-weight
gain, and \emph{shrink} is $\Delta_4-\Delta_1$, which is negative on every split.}
\label{tab:s_ood_sub}
\adjustbox{max width=\linewidth}{%
\begin{tabular}{lccrrr}
\toprule
Split & $M{=}1$ & $M{=}4$ & $G_w(1)$\,$\uparrow$ & $G_w(4)$\,$\uparrow$ & shrink \\
\midrule
IN-A      & 50.84\,$\rightarrow$\,59.73 & 55.77\,$\rightarrow$\,62.93 & \best{\gain{$+8.89$}} & \best{\gain{$+7.16$}} & \loss{$-1.73$} \\
IN-Sketch & 49.27\,$\rightarrow$\,55.49 & 50.63\,$\rightarrow$\,56.23 & \gain{$+6.22$} & \gain{$+5.60$} & \loss{$-0.62$} \\
IN-R      & 77.37\,$\rightarrow$\,81.34 & 79.07\,$\rightarrow$\,82.12 & \gain{$+3.97$} & \gain{$+3.05$} & \loss{$-0.92$} \\
IN-V2     & 63.66\,$\rightarrow$\,65.25 & 64.49\,$\rightarrow$\,65.64 & \gain{$+1.59$} & \gain{$+1.15$} & \loss{$-0.44$} \\
\bottomrule
\end{tabular}%
}
\end{table}

Gain follows shift severity more closely than the CLIP-to-DINOv2 purity gap. ImageNet-V2 has
the largest gap but smallest gain, plausibly because its predictor already reaches $63.66\%$.

\subsection{Joint View-Budget Sweeps}

\Cref{fig:s_substitution} sweeps the view budget for two cache spaces. Both gains decline
overall, giving the same directional pattern, with the stronger space starting higher and
losing more as views increase.

\begin{figure}[H]
\centering
\includegraphics[width=\linewidth]{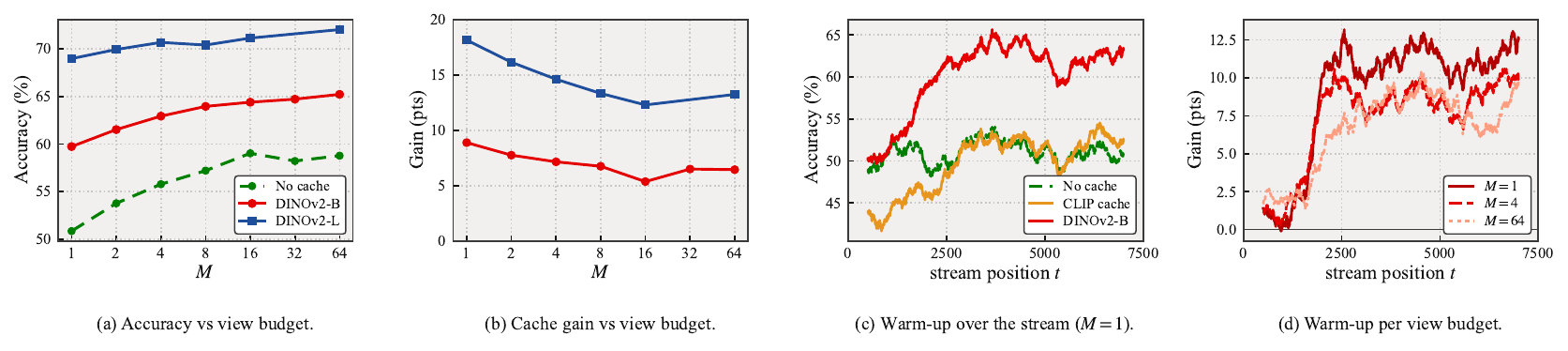}
\caption{View budget and cache warm-up on ImageNet-A, at the fixed weights of
Tab.~\ref{tab:swap}. (a) One-view DINOv2-L reaches $68.99$, above $64$-view DINOv2-B
($65.25$) and no cache ($58.77$). (b) Three-seed mean cache gain changes
$9.30\!\rightarrow\!6.56$ for
DINOv2-B and $18.15\!\rightarrow\!13.24$ for DINOv2-L. (c) At $M{=}1$ the DINOv2-B arm
climbs $13.9$ points from the first to the last decile of the stream while the no-cache
control moves $+0.9$; the climb is the cache being built. (d) The same climb shrinks as
views are added, consistent with augmentation--retrieval overlap inside the stream.}
\label{fig:s_substitution}
\label{fig:s_warmup}
\end{figure}

\subsection{Cache Warm-Up over the Stream}

Because aggregate accuracy mixes retrieval quality with cache maturity, we also examine gain
over the stream (\cref{fig:s_warmup}, panels c and d).

This warm-up also clarifies the ImageNet-val caveat in
Sec.~\ref{sec:benchmark}. With $50$ images per class and
$K{=}16$, the cache is populated for most of the stream; all cache methods benefit from this
density, and high-purity retrieval uses it more effectively.

\subsection{Interaction with a ResNet-50 Predictor}

With a CLIP ResNet-50 predictor on ImageNet-A, $64$ views improve accuracy from $24.12$ to
$28.97$ ($+4.85$), whereas a single-view DINOv2-B cache reaches $34.99$ ($+10.87$); the full
configuration reaches $39.07$. The sweep uses one run; the replicated single-view benchmark
result is $35.52 \pm 0.48$.

\paragraph{Scope of the $\pm0.4$ band.} The dispersion quoted throughout is measured at
$M{=}64$, where the cache-only configuration on ImageNet-A spans $0.04$ points across three
seeds. It does not apply uniformly at $M{=}1$. On ImageNet-A, which has $7{,}500$ images, the
cache-only configuration spans $0.76$ points across seeds, and the configurations that add
pseudo-label refinement span $1.08$--$1.61$. Refinement is therefore itself a source of
run-to-run variance, consistent with its negative mean effect, and comparisons at $M{=}1$ on
this split should be read against the wider band. This does not affect the conclusion of
Sec.~\ref{sec:protocol}, because the cache-only configuration is the most accurate in
every seed.

\subsection{Oracle Coverage Analysis}

An oracle retaining every prediction that either source gets right would reach $71.01\%$,
versus $65.23\%$ for their actual combination. For the CLIP cache, the corresponding values
are $64.99\%$ and $58.21\%$. Because these gaps mix source interactions with regressions on
previously correct images, the main paper uses the directly observed $48\%$ repair-set overlap.

\section{Geometry and Capacity: Additional Results}
\label{sec:s_capacity}

\Cref{fig:s_geometry} visualizes the measurements from
Sec.~\ref{sec:geometry}; \cref{fig:s_capacity} gives
the associated capacity-response curves. The t-SNE panels use the same $800$ images from
eight classes, with perplexity $30$, PCA initialization, and seed $0$. Their quoted purity is
within those classes and is not comparable to the full $200$-class measurement.

\begin{figure}[H]
\centering
\includegraphics[width=\linewidth]{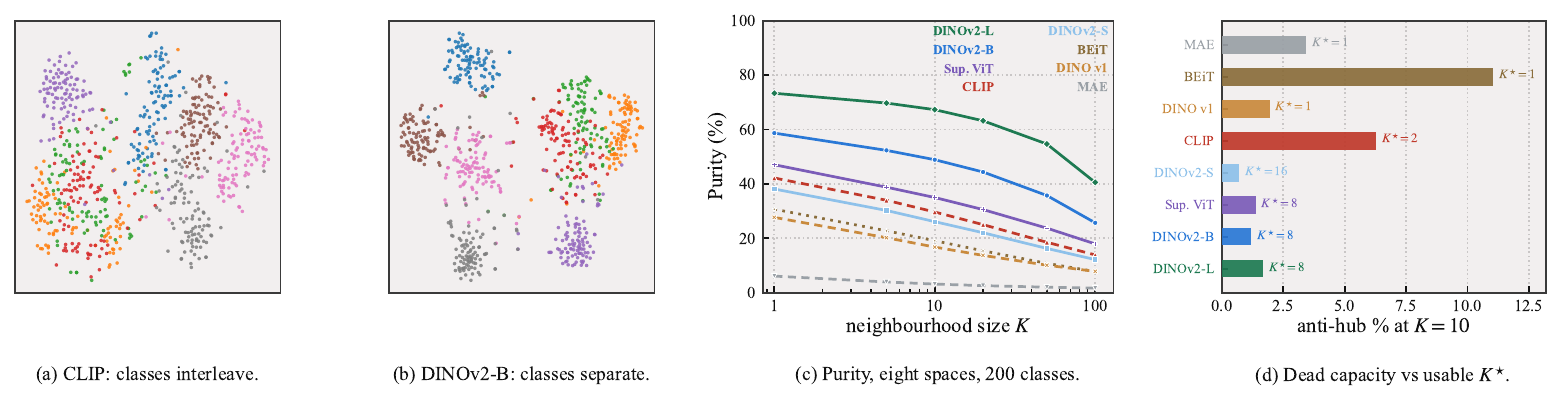}
\caption{Retrieval geometry on ImageNet-A before adaptation. The t-SNE projections
illustrate the cleaner DINOv2-B neighborhoods; the unprojected full-feature result is the
purity curve over all eight retrieval spaces, including BEiT, DINO~v1, and the supervised
ViT. CLIP has $6.3\%$ anti-hubs and
$\Kstar{=}2$, versus $0.7$--$1.7\%$ and $\Kstar{=}8$--$16$ for DINOv2; BEiT is
the extreme case, $11.1\%$ anti-hubs and $\Kstar{=}1$, while DINO~v1 ($2.0\%$) and the
supervised ViT ($1.4\%$) sit between CLIP and the DINOv2 family.}
\label{fig:s_geometry}
\end{figure}

Tab.~\ref{tab:swap} reports each optimum $\Kstar$; \cref{fig:s_capacity} shows the full
curves.

\begin{figure}[H]
\centering
\includegraphics[width=\linewidth]{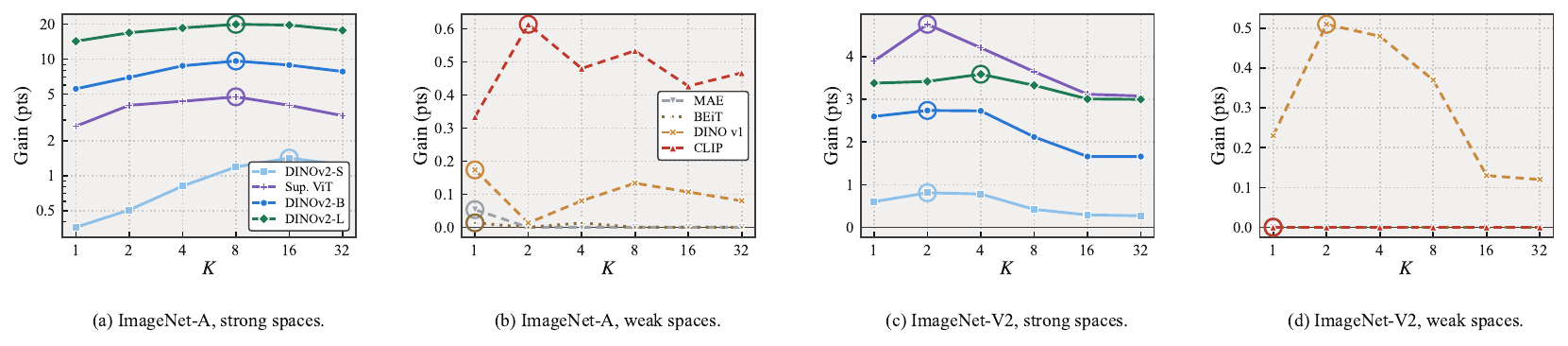}
\caption{Full capacity response: all eight retrieval spaces on both ImageNet streams.
Rings mark each space's optimum $\Kstar$, the summary statistic reported for each curve in
Tab.~\ref{tab:swap}. Curves are weight-swept envelopes, so a space that cannot be scaled usefully
sits at $+0.00$ rather than going negative; this isolates capacity behaviour from the scaling
failure that dominates a common-weight view. Each stream is split into its four
strong and four weak spaces so no curve hides at the bottom of another's scale; the strong
ImageNet-A panel uses a log ordinate because DINOv2-L spans twenty points. The optimum
shifts with the stream, not
only with the space: every DINOv2 space peaks lower on ImageNet-V2 than on ImageNet-A.}
\label{fig:s_capacity}
\end{figure}

Beyond the optimum, more memory can reduce accuracy. At $K{=}64$, the CLIP cache's best
weight is $0$, so the oracle ignores it.

\subsection{Estimators and conditional interpretation}
On a shared calibration set $S$ of $n$ images, let $N_\kappa^s(i)$ be the $\kappa$ nearest
neighbors of $i$ in space $s$, excluding $i$, and let
$d_j^s=|\{i:j\in N_\kappa^s(i)\}|$. We compute
\begin{equation}
P_s(\kappa)=\frac{1}{n\kappa}\sum_{i\in S}\sum_{j\in N_\kappa^s(i)}
\mathbf1[y_j=y_i],\qquad
A_s(\kappa)=\frac{1}{n}\sum_{j\in S}\mathbf1[d_j^s=0].
\label{eq:geom-estimators}
\end{equation}
The probe radius $\kappa$ is distinct from cache capacity $K$. Purity requires labels;
anti-hubness does not. The graph has exactly $n\kappa$ edges and
$n\kappa(1-P_s)$ cross-class edges, while $n(1-A_s)$ points receive any edge. Thus purity
measures contamination and anti-hubness measures how broadly retrieval mass is distributed.

For query $t$, let $\mu_t$ and $\delta_t^{s,K}$ be the base and cache margins from
\cref{eq:margins}. If an influential band contains $m_t$ entries, has true-label purity
$p_t^\circ$, pseudo-label error $\eta_t$, minimum within-band weight $\rho_t$, and tail mass
bounded by $N_t\varepsilon_t$, then
\begin{equation}
\delta_t^{s,K}\ge
\alpha\!\left\{m_t\big[(1+\rho_t)[p_t^\circ-\eta_t]_+-1\big]
-N_t\varepsilon_t\right\}.
\label{eq:margin-bound}
\end{equation}
\begin{corollary}[Sufficient local-purity threshold]\label{cor:threshold}
A sufficient condition for a positive cache margin is
\begin{equation}
p_t^\circ-\eta_t>
\frac{1}{1+\rho_t}+\frac{N_t\varepsilon_t}{m_t(1+\rho_t)}.
\label{eq:purity-threshold}
\end{equation}
\end{corollary}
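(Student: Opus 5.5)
The plan is to derive the corollary directly from the margin lower bound \cref{eq:margin-bound}, which we take as given. Since $\alpha>0$, a positive cache margin $\delta_t^{s,K}>0$ follows whenever the braced quantity is strictly positive:
\[
m_t\big[(1+\rho_t)[p_t^\circ-\eta_t]_+-1\big]-N_t\varepsilon_t>0 .
\]
The remaining work is algebraic: isolate $[p_t^\circ-\eta_t]_+$, then show that under the stated hypothesis the positive-part operator can be dropped.

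The steps, in order, are as follows.

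First, assume $m_t>0$, so the influential band is nonempty, and $\rho_t\ge0$. Both are implicit in the statement, since the right-hand side of \cref{eq:purity-threshold} divides by $m_t(1+\rho_t)$. Dividing by $m_t>0$ gives the equivalent condition
\[
(1+\rho_t)[p_t^\circ-\eta_t]_+>1+\frac{N_t\varepsilon_t}{m_t}.
\]
Dividing again by $1+\rho_t>0$ gives
\[
[p_t^\circ-\eta_t]_+>\frac{1}{1+\rho_t}+\frac{N_t\varepsilon_t}{m_t(1+\rho_t)}.
\]

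Second, compare this with the hypothesis \cref{eq:purity-threshold}. Its right-hand side is strictly positive, because $N_t\varepsilon_t\ge0$ as a bound on nonnegative tail mass. So the hypothesis forces $p_t^\circ-\eta_t>0$, which means $[p_t^\circ-\eta_t]_+=p_t^\circ-\eta_t$. The hypothesis is therefore exactly the displayed inequality above.

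Third, reverse the chain of equivalences. This recovers positivity of the braced term, and hence, by \cref{eq:margin-bound}, $\delta_t^{s,K}>0$.

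I expect the main obstacle to be bookkeeping of sign conditions rather than any real estimate. We must make explicit that $\alpha>0$, $m_t\ge1$, $\rho_t\ge0$ and $N_t\varepsilon_t\ge0$, since each division and the removal of $[\cdot]_+$ depend on them. I would state these as the standing conventions of the separated-neighborhood setting behind \cref{eq:margin-bound}.

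Finally, I would remark that the condition is only sufficient. Because \cref{eq:margin-bound} is a lower bound, the cache margin can be positive even when \cref{eq:purity-threshold} fails. If desired, the result can be combined with the fusion criterion $\mu_t+w\delta_t^{s,K}>0$ from \cref{eq:margins} to give a sufficient condition for correct fused prediction whenever $w\ge0$ and $\mu_t\ge0$.
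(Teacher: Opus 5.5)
Your derivation is correct and is exactly the intended route: the paper gives no separate argument and presents \cref{cor:threshold} as a direct rearrangement of \cref{eq:margin-bound}, requiring the braced term to be positive and dividing by $m_t(1+\rho_t)>0$. One small simplification is that dropping the positive part needs only $[x]_+\ge x$, so the hypothesis already forces $[p_t^\circ-\eta_t]_+$ to exceed the right-hand side of \cref{eq:purity-threshold}, without invoking $N_t\varepsilon_t\ge0$.
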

These are conditional, query-level statements, not performance guarantees.

\begin{assumption}[Calibration representativeness]\label{asm:identify}
For each space $s$, the mean true-label purity $\bar p_s$ of influential online
neighborhoods satisfies $|\bar p_s-P_s(\kappa)|\le\xi_s$, where $\xi_s$ absorbs sampling,
online-prefix, per-class truncation, priority-selection, and probe-versus-kernel discrepancies.
\end{assumption}
This is the empirical link tested by rank correlation, held-out transfer, and the EuroSAT
failure case; it is not implied by the kernel.

Capacity is non-monotone. If increasing capacity exposes weights $a_c^{K+1}$ for class $c$,
and $c_K^\star$ is the strongest wrong class at capacity $K$, then
\begin{equation}
\delta_t^{s,K+1}-\delta_t^{s,K}
\le\alpha\big(a_y^{K+1}-a_{c_K^\star}^{K+1}\big).
\label{eq:cap-increment}
\end{equation}
The margin can fall when the newly exposed confuser is more similar than the new true-class
key. Anti-hubness therefore diagnoses potentially wasted capacity; it does not directly
estimate $\Kstar$ or select a retrieval space.

\section{Encoder-Induced Retrieval Similarities at Query Level}
\label{sec:s_drawn}

The aggregate measurements above are reflected in individual margins, retrieved keys, and
kernel weights. Statistics use the full embedding; projections are visual only, and matched
panels use the same retained indices.

\begin{figure}[H]
\centering
\includegraphics[width=\linewidth]{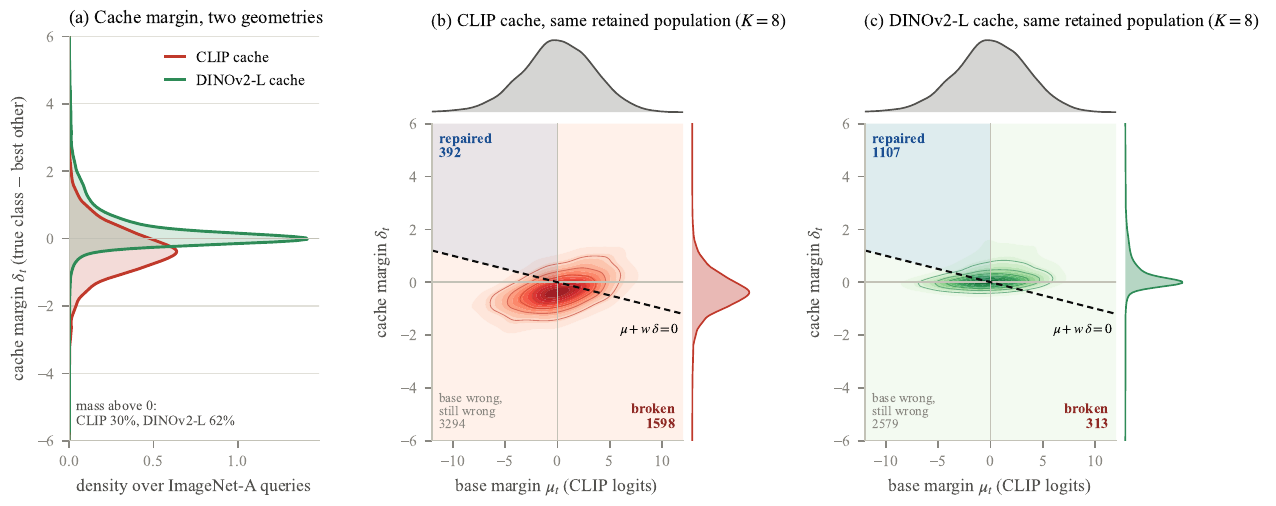}
\caption{Base and cache margins on ImageNet-A for matched CLIP and DINOv2-L caches at
$K{=}8$. Fusion at $w{=}10$ is correct above $\mu+w\delta=0$; DINOv2-L places many more
base errors in the correctable region.}
\label{fig:s_margin}
\end{figure}

\begin{figure}[H]
\centering
\includegraphics[width=\linewidth]{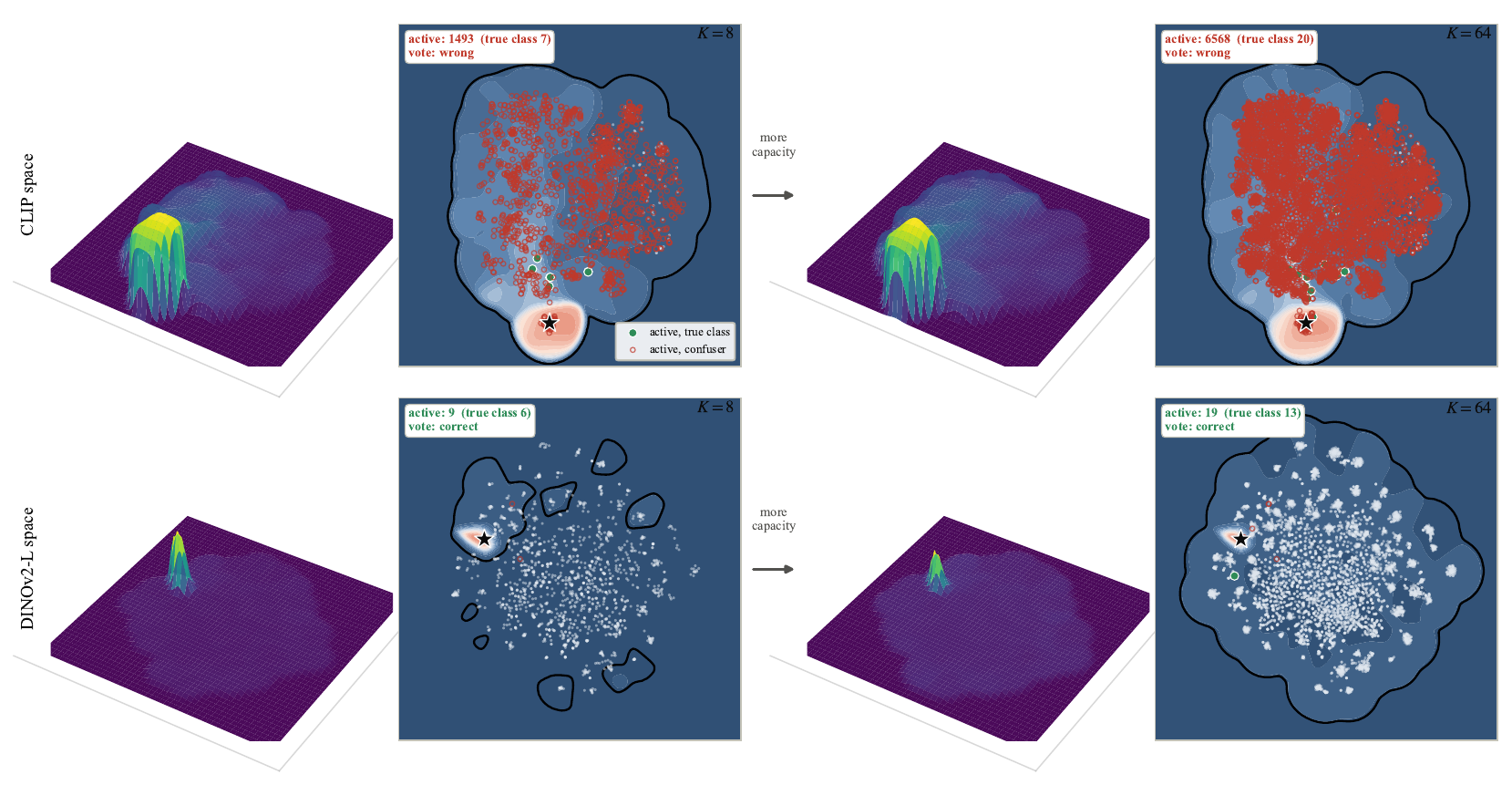}
\caption{Kernel weights for one ImageNet-A query that CLIP retrieval misses and DINOv2-L
retrieval corrects. CLIP activates $1{,}493$ of $1{,}586$ keys and is dominated by confusers;
DINOv2-L activates nine keys, six with the true label.}
\label{fig:s_kernel}
\end{figure}

\begin{figure}[H]
\centering
\includegraphics[width=\linewidth]{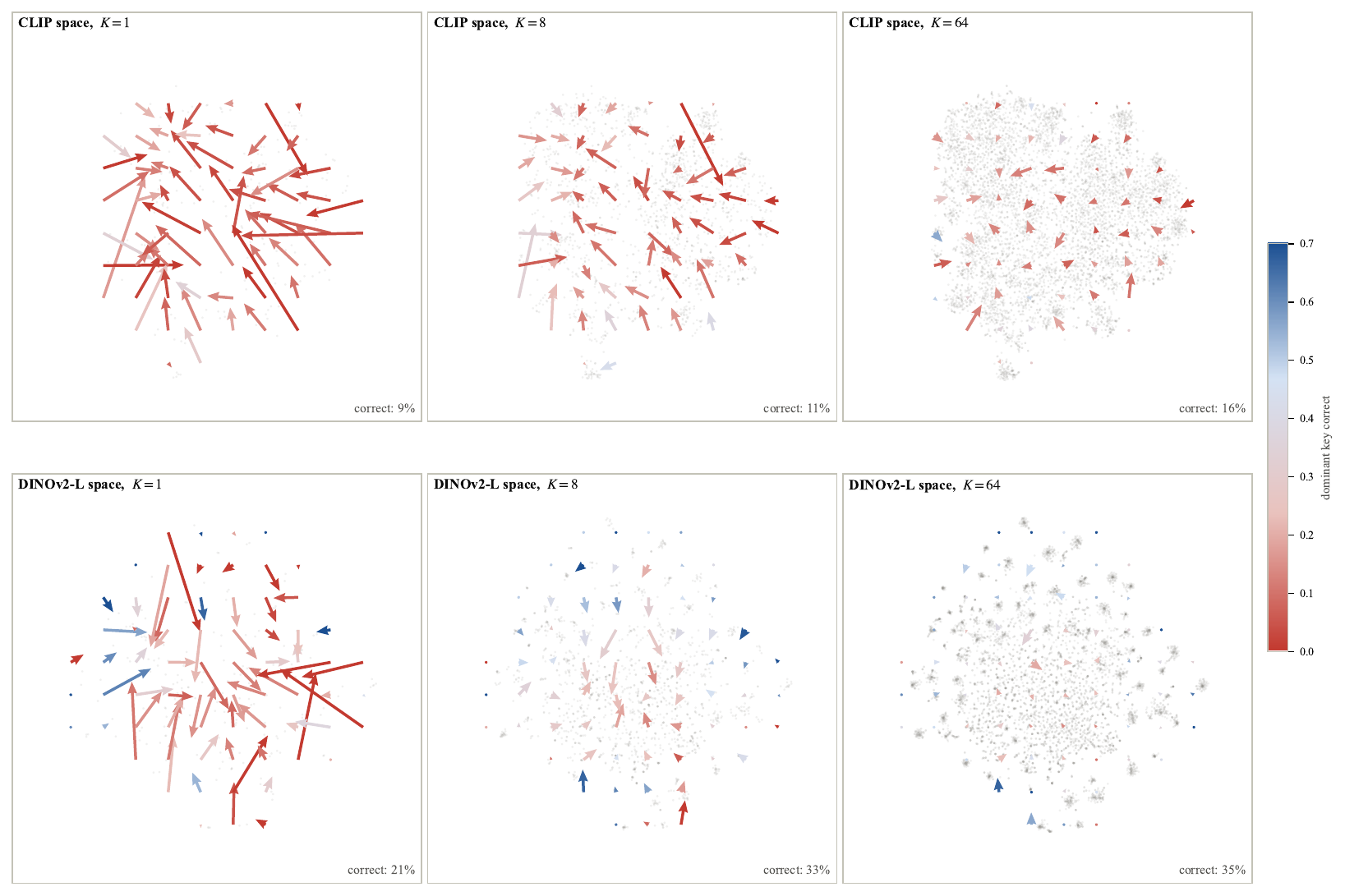}
\caption{Dominant keys for ImageNet-A base errors. The strongest key has the true label for
$11\%$ of queries in CLIP space and $33\%$ in DINOv2-L space; increasing $K$ makes retrieval
local in both spaces but does not repair CLIP's label disagreement.}
\label{fig:s_flow}
\end{figure}

\begin{figure}[H]
\centering
\includegraphics[width=\linewidth]{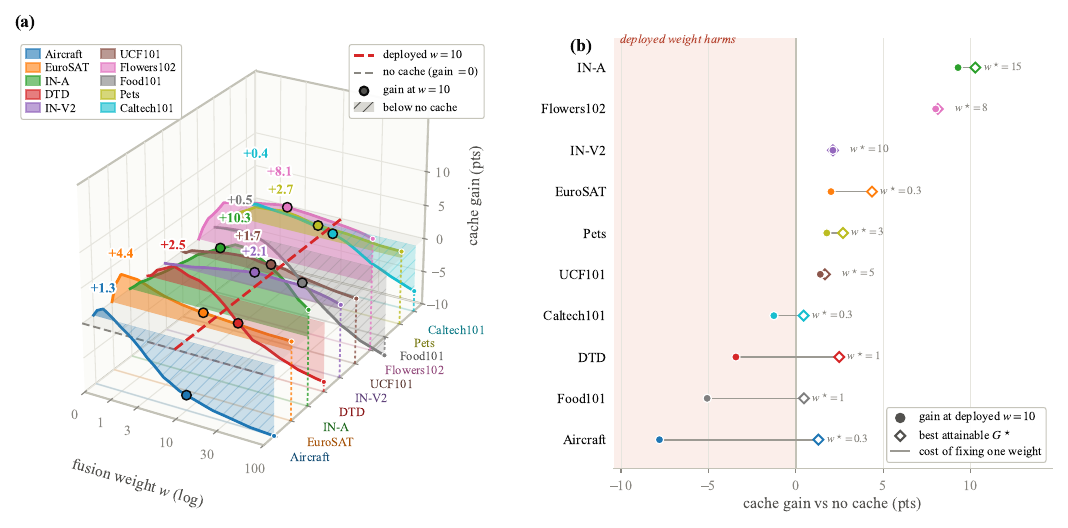}
\caption{Cache gain across fusion weights on ten streams. The selected $w{=}10$ lies near the
best region on the ImageNet shifts but is too large for Aircraft and Food101, motivating the
held-out transfer analysis below.}
\label{fig:hazard}
\end{figure}

Together, these views show the same mechanism at four scales: label-consistent encoder-induced
similarities create positive cache margins by concentrating weight on suitable neighbors, while
diffuse similarities spread weight across confusers and can amplify base errors.

\section{Limits of a Composite Geometry Diagnostic}
\label{sec:s_discarded}

This negative result bounds the rule in Sec.~\ref{sec:capacity}. On ImageNet-A, purity
reverses one pair:
DINOv2-S has lower purity than CLIP ($26.1$ vs.\ $29.7$) but a higher ceiling ($+1.33$ vs.\
$+0.62$). Its lower anti-hub fraction ($0.7\%$ vs.\ $6.3\%$) motivates a
hubness-discounted purity score, which perfectly orders the five spaces on ImageNet-A
($\rho{=}1.00$). On ImageNet-V2, however, it falls to $0.87$, below purity alone at $0.98$.
We therefore discard the composite as overfit to a single inversion.

The fitted family
\begin{equation}
\mathcal C_\lambda(s)=P_s(\kappa)-\lambda A_s(\kappa),\qquad \lambda\ge0,
\label{eq:composite}
\end{equation}
can change order only when a pair of affine scores crosses. Five spaces therefore permit at
most $\binom{5}{2}+1=11$ rankings. Selecting $\lambda$ to repair the one ImageNet-A
inversion and evaluating on the same points gives an in-sample $\rho{=}1.00$; without
refitting on ImageNet-V2 it falls to $0.87$, below purity alone at $0.98$. We discard the
composite rather than report this overfit improvement.

\begin{definition}[Resolution-aware separation]\label{def:separation}
If $s_{(1)}$ has the largest observed purity $\widehat P_s$ and $e_s$ are simultaneous
measurement-error radii, its lead is resolved when
\begin{equation}
\widehat P_{s_{(1)}}-\widehat P_s>e_{s_{(1)}}+e_s
\quad\text{for every }s\ne s_{(1)}.
\label{eq:sep}
\end{equation}
\end{definition}
We use $B{=}300$ calibration-subsample bootstrap replicates at each
$n\in\{250,500,1000,2000\}$, resampling purity against the fixed measured gains.
\Cref{tab:s_bootstrap_selection} reports central $95\%$ intervals for $\rho$ and the frequency
with which the selected encoder attains zero regret. ImageNet-A selection is exact at every
tested size; ImageNet-V2 reaches probability $0.99$ at $n{=}1000$ and $1.00$ at $n{=}2000$.

\begin{table}[H]
\centering\footnotesize
\setlength{\tabcolsep}{5pt}
\caption{Calibration-subsample uncertainty over sixteen retrieval spaces using $B{=}300$
replicates. Intervals are central $95\%$ intervals; $\Pr(\mathrm{Reg}{=}0)$ is the fraction of
replicates selecting an oracle-gain leader.}
\label{tab:s_bootstrap_selection}
\begin{tabular}{lccc}
\toprule
Stream & $n$ & $\rho$ [95\% interval] & $\Pr(\mathrm{Reg}{=}0)$ \\
\midrule
ImageNet-A & 250  & 0.953 [0.925, 0.974] & 1.00 \\
           & 500  & 0.959 [0.944, 0.971] & 1.00 \\
           & 1000 & 0.959 [0.950, 0.968] & 1.00 \\
           & 2000 & 0.957 [0.956, 0.962] & 1.00 \\
\midrule
ImageNet-V2 & 250  & 0.749 [0.420, 0.932] & 0.60 \\
            & 500  & 0.871 [0.753, 0.951] & 0.88 \\
            & 1000 & 0.917 [0.864, 0.952] & 0.99 \\
            & 2000 & 0.937 [0.906, 0.958] & 1.00 \\
\bottomrule
\end{tabular}
\end{table}

\paragraph{Evaluation-blind joint selection.}
We select $s$ using pseudo-purity on the unlabeled target batch, select $(K,w)$ on a disjoint
labeled calibration stream, and transfer those hyperparameters unchanged to the evaluation
stream. Evaluation-stream labels are used only to compute final accuracy.
\Cref{tab:s_joint_selection} compares this protocol with the oracle over the evaluated
configuration set for each stream.

\begin{table}[H]
\centering\footnotesize
\setlength{\tabcolsep}{4pt}
\caption{Evaluation-blind joint selection of $(s,K,w)$. ImageNet-A receives $(K,w)$ calibrated
on ImageNet-V2, and ImageNet-V2 receives $(K,w)$ calibrated on ImageNet-A. The oracle column is
the envelope over the evaluated configuration set (58 configurations for ImageNet-A and 56
for ImageNet-V2).}
\label{tab:s_joint_selection}
\adjustbox{max width=\linewidth}{%
\begin{tabular}{lclcccc}
\toprule
Evaluation stream & Pseudo-purity choice $s$ & Transferred $(K,w)$ & Accuracy & Gain & Oracle & Regret \\
\midrule
ImageNet-A  & DINOv2-L & $(4,20)$ from V2 & 68.39 & $+17.55$ & 70.45 & 2.07 \\
ImageNet-V2 & DeiT-III & $(8,15)$ from A & 68.62 & $+4.96$  & 68.72 & 0.10 \\
\bottomrule
\end{tabular}%
}
\end{table}

\subsection{EuroSAT Failure Case}
\label{sec:s_eurosat}

EuroSAT is the only stream with an unreliable space ranking. Candidate spaces have similar
purity, so small run-to-run changes alter their order; on the other five streams, the leading
spaces remain separated across seeds. This subsection concerns the five spaces of
Tab.~\ref{tab:swap}; over all eight, the best space on EuroSAT is DINO v1
(Sec.~\ref{sec:s_newspaces}), which does not change the point being made here.

In \cref{fig:s_seedspread}, DINOv2-S and DINOv2-L have disjoint three-seed ranges on every
stream except EuroSAT. There, the ranges overlap ($+4.91$--$+8.36$ versus
$+2.89$--$+11.40$): DINOv2-S leads for seeds $0$ and $2$, and DINOv2-L for seed $1$.
DINOv2-B, previously omitted for lacking a third seed, is now measured at all
three; it tracks DINOv2-S closely on most streams but spans $-3.43$ to $+1.54$ on DTD,
where its rank against DINOv2-S flips with the seed while DINOv2-L stays above it in every
seed. The ranking claim this section defends concerns the DINOv2-S/DINOv2-L pair and is
unchanged.

Converting these ranges to intervals sharpens the point. EuroSAT carries the two widest seed
intervals in the paper: $[47.63,70.06]$ for DINOv2-L at $M{=}4$, a span of $22.4$ points, and
a $38.6$-point span for DINOv2-B at $M{=}64$. No ranking claim can be supported on a column
that wide, which is why EuroSAT is presented as a failure case rather than as a measurement.
The cache \emph{gain} on EuroSAT is a separate quantity and is well resolved
($+7.32$ $[+6.21,+8.43]$ at $M{=}4$); it is the ordering among candidate spaces, not the
benefit of caching, that EuroSAT cannot settle.

\begin{figure}[H]
\centering
\includegraphics[width=\linewidth]{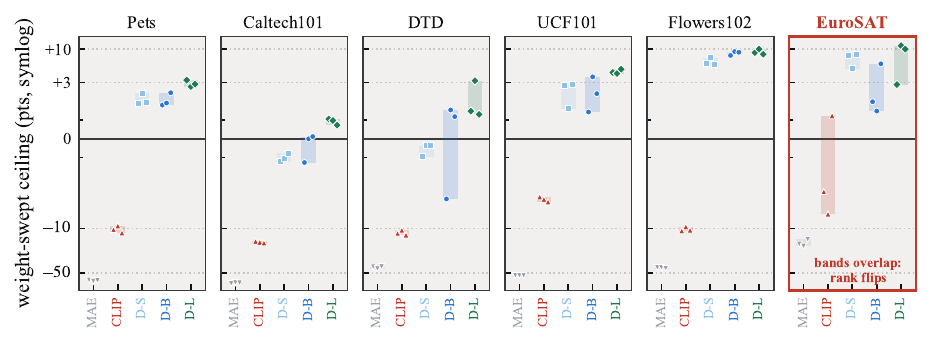}
\caption{Seed spread of cross-domain space rankings at $M{=}4$, for the five spaces of
Tab.~\ref{tab:swap}; markers are seeds and bands
show their range. All five spaces now have three seeds, the missing DINOv2-B
runs having since completed. The DINOv2-S/DINOv2-L ordering is seed-independent on five
streams and flips on EuroSAT, where their bands overlap; the new DINOv2-B measurements add
one caveat, a wide band on DTD ($-3.43$ to $+1.54$) that overlaps both neighbours'
bands.}
\label{fig:s_seedspread}
\end{figure}

The separation condition distinguishes stable from unstable rankings. EuroSAT's purity range
is $14.4$ points, versus $47.3$--$85.6$ elsewhere; four spaces lie within $4.4$ points, and
even MAE reaches $74.6\%$ purity.

We did not estimate the error radii $e_s$, so we cannot formally certify separation on
any stream. EuroSAT's narrow purity range and overlapping gain ranges show that its leading
candidates are difficult to distinguish, but do not tell us whether the cause is sampling
variation or a failure of the diagnostic. EuroSAT is also a counterexample to a simple scale account:
DINOv2-L has the highest purity but the third-best ceiling, $5.0$ points below DINOv2-S,
despite ranking first on every other stream.

\subsection{Limits of Cross-Stream Prediction}

Cache gain varies by an order of magnitude across streams, but DINOv2-B purity at $K{=}5$
does not predict it ($\rho{=}+0.03$). Purity saturates at $0.87$--$1.00$ on five streams,
versus $0.52$--$0.56$ on ImageNet-A and ImageNet-V2. Anti-hubness and skewness correlate more
strongly ($\rho{=}+0.71$ and $+0.66$), but $n{=}6$ supports no conclusion. Thus,
Sec.~\ref{sec:capacity} ranks
candidate \emph{spaces} within a fixed stream; it does not predict which streams benefit.
Predictor headroom appears relevant, but these experiments do not establish its causal role.

\section{Held-Out Fusion-Weight Transfer}
\label{sec:s_transfer}

Eq.~\ref{eq:envelope} selects the fusion weight on the evaluation stream and is therefore an oracle. We
quantify its advantage by selecting the weight on one stream and applying it unchanged to
another, re-scoring stored logits over the same $\mathcal{W}$ without new inference.

Because stored labels and priorities are independent of $w$, one online pass produces base
logits $b_t$ and cache logits $L_t$ that can be re-scored at every tested weight.
\begin{proposition}[Offline re-scoring]\label{prop:traj}
For fixed stream, views, retrieval space, capacity, and random draw, the memory trajectory is
identical for all $w$. Hence $b_t+wL_t$ can be evaluated for every $w\in\mathcal W$ without
rerunning either encoder or changing the cache.
\end{proposition}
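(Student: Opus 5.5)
The plan is an induction on the stream index $t$. The claim is that the pair of memory states $(\mathcal C_t,\mathcal M_t^{s,K})$ is a deterministic function of quantities that never involve $w$: the stream order, the augmentation draws, the frozen encoders $f$ and $g_s$, the prompts, the constants $K_C,K,\alpha,\beta$, and the tie rules. For the base case, both memories are empty at $t=0$ for every $w$.

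For the inductive step, assume $\mathcal C_{t-1}$ and $\mathcal M_{t-1}^{s,K}$ are the same for all $w\in\mathcal W$, and walk through one iteration of \cref{eq:online-loop} in order.
\begin{enumerate}[leftmargin=1.2em,itemsep=0pt,topsep=2pt]
\item The view logits $z_t^{(j)}$, the selected set $J_t$, the base logit $b_t$, and the pseudo-label $\hat y_t=\arg\max_c b_{t,c}$ in \cref{eq:view-logit} depend only on $f$, the text prototypes, and the view draws.
\item The entropy $e_t$ in \cref{eq:priority-memory} depends only on these view logits.
\item The admission read $r_t^C=L^{\mathcal C_{t-1}}(k_t^C)$ depends on $\mathcal C_{t-1}$, which is $w$-free by hypothesis, so $\pi_t=-h(\operatorname{softmax}(b_t+r_t^C))$ is $w$-free.
\item The retrieval key $q_t=g_s(x_t)$ is $w$-free.
\item $\operatorname{Upd}_{K_C}$ and $\operatorname{Upd}_K$ are deterministic top-$K$-by-priority rules applied per pseudo-class, with fixed tie-breaking. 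Their inputs $(k_t^C,\hat y_t,-e_t)$ and $(q_t,\hat y_t,\pi_t)$ are all $w$-free, so $\mathcal C_t$ and $\mathcal M_t^{s,K}$ are identical across $w$.
\end{enumerate}
The fused prediction $\widetilde y_t$ is the only quantity that depends on $w$. The key observation is that $\widetilde y_t$ is an output only: it never feeds back into any update.

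Since $\mathcal M_{t-1}^{s,K}$ and $q_t$ are $w$-free, the cache logit $L_t=L^{\mathcal M_{t-1}}(q_t)$ from \cref{eq:kernel} is a fixed vector per $t$. One pass can therefore store the pair $(b_t,L_t)$ for every $t$, and the fused logit $b_t+wL_t$ for any $w\in\mathcal W$ is an affine recombination of stored vectors. Evaluating it requires no re-encoding and no cache change.

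The main obstacle is definitional rather than technical. I must confirm that no hidden $w$-dependence enters through the admission channel, for example that $\pi_t$ uses $b_t+r_t^C$ from the zero-output-weight CLIP memory and not the fused logit. I also need the update operator to be a pure function, with randomness only in the fixed draws. I would make both points explicit by citing the read-before-write order and the statement that the CLIP memory "has no output weight." With these in place, the induction closes immediately.
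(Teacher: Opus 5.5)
Your proposal is correct and follows the paper's argument: an induction over the stream showing that the stored pseudo-label $\hat y_t$ and the CLIP-derived priority $\pi_t$ never depend on the fused prediction. Hence the retained indices and keys are identical for every $w$, so $L_t$ is $w$-free and $b_t+wL_t$ can be re-scored offline; your version spells out the checks (empty base case, $e_t$, $r_t^C$ via the zero-output-weight CLIP memory, $q_t$, deterministic updates) that the paper's two-line proof leaves implicit.
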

\begin{proof}
The update stores the base pseudo-label and CLIP-derived priority, neither of which depends on
the fused prediction. Induction over the stream therefore gives the same retained indices and
keys for every $w$.
\end{proof}

For a common no-cache accuracy $A_0$, distinguish fixed-weight gain
$G_w=\mathrm{Acc}(w)-A_0$, the evaluation-stream oracle
$G^\star=\max_{w\in\mathcal W}\mathrm{Acc}(w)-A_0$, and held-out gain
$G^{\mathrm{tr}}=\mathrm{Acc}(\widehat w_{\mathrm{sel}})-A_0$. Oracle gain is non-negative
only because $0\in\mathcal W$; $+0.00$ means no tested weight helps. Held-out selection uses no
evaluation labels. Its cost $G^\star-G^{\mathrm{tr}}$ is non-negative when both entries use
the same stream, capacity, grid, and seeds; unmatched oracle comparisons do not have that
interpretation.

For \cref{tab:s_transfer}, we select $w$ on ImageNet-V2 at a common $K{=}8$ and apply it to
ImageNet-A, averaging three stored-logit seeds. This is more conservative than using each
space's $\Kstar$. MAE and CLIP are harmful on V2 at $K{=}8$, so transfer selects $w{=}0$.

\begin{table}[H]
\centering\footnotesize
\setlength{\tabcolsep}{4pt}
\caption{Oracle and held-out fusion weights on ImageNet-A at $M{=}1$, averaged over two
seeds. Transfer selects $w$ on ImageNet-V2 and preserves both the five-space ordering and its
$0.90$ rank correlation with $\purity@10$. Columns report $G^\star$ and $G^{\mathrm{tr}}$,
respectively.}
\label{tab:s_transfer}
\begin{tabular}{lrrr}
\toprule
Cache space & $G^\star$ oracle\,$\uparrow$ & $G^{\mathrm{tr}}$ held out\,$\uparrow$ & cost \\
\midrule
MAE      & \gain{$+0.09$}  & $+0.00$         & \loss{$0.09$} \\
CLIP     & \gain{$+0.56$}  & $+0.00$         & \loss{$0.56$} \\
DINOv2-S & \gain{$+1.33$}  & \gain{$+0.45$}  & \loss{$0.88$} \\
DINOv2-B & \gain{$+9.88$}  & \gain{$+9.73$}  & \loss{$0.16$} \\
DINOv2-L & \best{\gain{$+20.06$}} & \best{\gain{$+19.16$}} & \loss{$0.90$} \\
Span     & \gain{$19.97$} & \gain{$19.16$} & \loss{$0.81$} \\

\emph{Spearman with $\purity@10$} & $0.90$ & $0.90$ & $0.00$ \\
\bottomrule
\end{tabular}
\end{table}

For cross-domain transfer, each space selects $w$ on the mean curve of five streams and
applies it to the sixth. Across $30$ cells, transfer costs $0.37$ points on average. Mean
purity--gain correlation falls from $0.87$ to $0.73$ but remains positive on every stream.
The largest loss is $-2.44$ for CLIP on EuroSAT, the only cell where a CLIP cache contributes
over one point and its weight does not transfer.

Tab.~\ref{tab:swap} therefore reports upper bounds, but ImageNet-A's ordering does not
depend on tuning
$w$ on the evaluation stream: held-out selection loses under one point of a roughly
$20$-point retrieval-space span.

\section{Component Ablations Across OOD Splits}
\label{sec:s_components}

\Cref{tab:s_components_ood} gives the cumulative ablation from
Sec.~\ref{sec:protocol} by OOD split.

\begin{table}[H]
\centering\footnotesize
\setlength{\tabcolsep}{4pt}
\caption{Incremental accuracy change from each component at $M{=}1$. Refinement consistently
hurts, the negative memory is neutral, and adaptive fusion helps only on ImageNet-Sketch.}
\label{tab:s_components_ood}
\begin{tabular}{lccccr}
\toprule
Component & A & V2 & R & Sketch & Mean \\
\midrule
Cross-modal refinement      & \loss{$-3.27$} & \loss{$-0.58$} & \loss{$-0.10$} & \loss{$-3.81$} & \loss{$-1.94$} \\
Entropy-band neg.\ memory   & \gain{$+0.01$} & $+0.00$ & $+0.00$ & $+0.00$ & $+0.00$ \\
Uncertainty-adaptive fusion & \loss{$-0.32$} & \loss{$-0.22$} & \loss{$-1.22$} & \gain{$+1.41$} & \loss{$-0.09$} \\
\bottomrule
\end{tabular}
\end{table}

\paragraph{Entropy and admission-priority controls.}
\Cref{tab:s_protocol_controls} reports paired three-seed comparisons on ImageNet-A. Replacing
the inherited entropy score with entropy of the view posterior preserves accuracy to within
$0.04$ points. Retrieval-aware priority, computed from fused pre-update evidence, gives closely
matched accuracy for both auxiliary encoders. These controls support the shared admission
trajectory used throughout the encoder intervention; the admission rows retain CLIP-derived
stored pseudo-labels.

\begin{table}[H]
\centering\footnotesize
\setlength{\tabcolsep}{4pt}
\caption{Paired protocol controls on ImageNet-A at $M{=}1$ and $w{=}10$, averaged over three
seeds. The entropy control uses $K{=}8$; the admission-priority controls use $K{=}16$.
Subscripts are standard deviations across seeds, and $\Delta$ is paired variant minus reference.}
\label{tab:s_protocol_controls}
\begin{tabular}{llccc}
\toprule
Control & Retrieval encoder & Reference & Variant & paired $\Delta$ \\
\midrule
Entropy: inherited $\rightarrow$ standard & DINOv2-B & 60.14\ci{0.38} & 60.10\ci{0.41} & $-0.04$ \\
Priority: CLIP $\rightarrow$ retrieval-aware & DINOv2-B & 60.14\ci{0.38} & 60.24\ci{0.14} & $+0.11$\ci{0.26} \\
Priority: CLIP $\rightarrow$ retrieval-aware & DINOv2-L & 69.93\ci{0.23} & 70.01\ci{0.28} & $+0.08$\ci{0.22} \\
\bottomrule
\end{tabular}
\end{table}

\Cref{tab:s_admission_policies} broadens the population intervention to confidence-only CLIP
admission, random per-pseudo-class reservoir sampling, and chronological FIFO caching. Across
all five policies, the DINOv2-L--DINOv2-B accuracy gap stays near $9.6$ points. Random and FIFO
retain at least $8.73$ points of gain with DINOv2-B and $18.33$ with DINOv2-L, establishing the
encoder ordering across retained populations.

\begin{table}[H]
\centering\footnotesize
\setlength{\tabcolsep}{5pt}
\caption{Admission-policy comparison on ImageNet-A at $M{=}1$, $K{=}16$, and $w{=}10$.
All policies use CLIP pseudo-labels; values are mean $\pm$ standard deviation over three seeds.
The cache-free reference is $50.84\%$.}
\label{tab:s_admission_policies}
\begin{tabular}{lcc}
\toprule
Admission policy & DINOv2-B & DINOv2-L \\
\midrule
Entropy priority & $60.14\pm0.38$ & $69.93\pm0.23$ \\
Confidence-only CLIP $p_1$ & $60.28\pm0.35$ & $70.14\pm0.17$ \\
Random reservoir & $59.60\pm0.28$ & $69.27\pm0.16$ \\
Chronological FIFO & $59.57\pm0.15$ & $69.17\pm0.37$ \\
Retrieval-aware & $60.24\pm0.14$ & $70.01\pm0.28$ \\
\bottomrule
\end{tabular}
\end{table}

Refinement is most harmful where CLIP is weakest: $-3.27$ on ImageNet-A and $-3.81$ on
ImageNet-Sketch, versus $-0.10$ on ImageNet-R. Refinement and negative memory both use the
weak CLIP neighborhoods identified in Tab.~\ref{tab:swap}; adaptive fusion does not, so
retrieval space
alone does not determine its effect.

The optional mechanisms have three distinct pathways. With CLIP-memory evidence $r_t^C$,
refinement replaces only the stored retrieval label by
$\hat y_t^\star=\arg\max_c[b_t+r_t^C]_c$. The entropy-band negative memory instead emits
\[
\widetilde b_t=b_t+wL^{\mathcal M_{t-1}}(q_t)-\lambda_-R_t,
\]
and adaptive fusion replaces $w$ by $w_t=w\bar h_t^\gamma$, where $\bar h_t$ is normalized
view entropy. The latter two change the emitted logit but not the positive-cache trajectory.
The reduced configuration disables all three output paths while retaining the $K_C{=}16$ CLIP
memory for admission priority; the strict one-memory control sets $K_C{=}0$.

\section{Full Rank-Correlation Results}
\label{sec:s_figures}

\Cref{tab:s_correlations} gives the rank correlations quoted in
Sec.~\ref{sec:capacity}.

Spearman $\rho$ is the Pearson correlation of the two midrank vectors; tied gains receive
midranks. We use it descriptively because the encoders form a designed candidate set rather
than a random sample, and report no significance tests. With untied rankings,
$\rho=1-6\sum_i d_i^2/[n(n^2-1)]$: attainable values are spaced by $0.1$ at $n{=}5$ and
$1/42\approx0.024$ at $n{=}8$. We therefore interpret the ordering, held-out replication,
and selection regret rather than small numerical differences. Midranks matter wherever gains
tie at zero: on ImageNet-V2 five of the sixteen spaces do, and breaking those ties by index
order instead of averaging them would raise $\rho(-\antihub,G^\star)$ from $0.756$ to $0.771$.
Every correlation reported in this paper uses the midrank definition stated above.

\begin{table}[H]
\centering\footnotesize
\setlength{\tabcolsep}{4pt}
\caption{Spearman $\rho$ between geometry and outcome across five spaces. ImageNet-V2 is
held out with no refitting. \colorbox{bestgreen}{Green} marks the stronger association per
outcome. Over eight spaces, $\purity@10$ correlates with gain at $0.93$ on ImageNet-A and
$0.98$ on ImageNet-V2 (Sec.~\ref{sec:s_newspaces}). Over sixteen spaces, the correlations
are $0.959$ and $0.943$ (\cref{tab:spaces16}); retaining positive-gain spaces gives $0.971$
and $0.973$, respectively.}
\label{tab:s_correlations}
\adjustbox{max width=\linewidth}{%
\begin{tabular}{llcc}
\toprule
Dataset & Predictor & vs.\ ceiling\,$\uparrow$ & vs.\ $\Kstar$\,$\uparrow$ \\
\midrule
\blk{4}{ImageNet-A (predictors selected here)} \\
& $\purity@10$   & \best{$0.90$ $[0.900,0.900]$} & $0.36$ $[0.359,0.359]$ \\
& $-\antihub$    & $0.50$ $[0.500,0.718]$ & \best{$0.87$ $[0.718,0.872]$} \\
\midrule
\blk{4}{ImageNet-V2 (held out, nothing refitted)} \\
& $\purity@10$   & \best{$0.98$ $[0.975,0.975]$} & $0.95$ $[0.949,0.949]$ \\
& $-\antihub$    & $0.95$ $[0.564,0.872]$ & \best{$0.97$ $[0.632,0.949]$} \\
\bottomrule
\end{tabular}%
}
\end{table}

Bracketed values are central $95\%$ calibration-subsample intervals at the largest tested
size, $n{=}2000$, from $B{=}300$ replicates that resample calibration images and recompute the
geometry statistic while holding the measured ceilings and $\Kstar$ fixed. The encoder set is
held fixed throughout: these five spaces are a designed candidate set rather than a draw from
a population of encoders, so an interval taken over encoders would have no sampling model
behind it, which is why the correlations themselves remain descriptive. What the intervals do
establish is that both arms of the dissociation are resolved on ImageNet-A: purity's lead on
the ceiling and anti-hubness's lead on $\Kstar$ are each separated, and the weak entries are
weak precisely rather than uncertainly. Note that $\purity$ is close to size-invariant while
$\antihub$ counts zero-in-degree points in an $n$-point graph and is therefore not, so the
anti-hubness intervals describe the subsample regime and need not bracket the full-split point
estimate printed beside them; the two ImageNet-V2 anti-hubness rows are the clearest instance.

\begin{table}[H]
\centering\footnotesize
\setlength{\tabcolsep}{6pt}
\caption{Three-seed replication of the sixteen-space ImageNet-A oracle-envelope ranking.
Each seed uses a coupled stream across all encoders. DINOv2-L leads every seed, and offline
purity selects that leader with zero regret. Streams use an explicitly seeded permutation; \cref{tab:swap} reports an independent
three-seed sweep of the same protocol drawn from the global RNG.
}
\label{tab:s_spaces16_seeds}
\begin{tabular}{lccc}
\toprule
Seed & $\rho(\purity@10,G^\star)$ & DINOv2-L $G^\star$ & selection regret \\
\midrule
0 & 0.959 & $+19.36$ & 0.00 \\
1 & 0.961 & $+19.71$ & 0.00 \\
2 & 0.967 & $+20.16$ & 0.00 \\
\midrule
Mean & 0.962 & $+19.74\pm0.40$ & 0.00 \\
\bottomrule
\end{tabular}
\end{table}

\begin{table}[H]
\centering\footnotesize
\setlength{\tabcolsep}{4pt}
\caption{Correlation checks for the sixteen-space intervention. Tie-excluded rows retain spaces
with positive oracle gain. The ResNet-50 row repeats the complete intervention with a second
frozen predictor and pseudo-label source; the near-matched row uses the six ImageNet-A spaces
whose gains lie between $4.4$ and $10.0$ points.}
\label{tab:s_rho_checks}
\adjustbox{max width=\linewidth}{%
\begin{tabular}{llcclc}
\toprule
Predictor & Stream/candidate set & $n$ & $\rho(\purity@10,G^\star)$ & Selected encoder & Regret \\
\midrule
CLIP ViT-B/16 & ImageNet-A, all & 16 & 0.959 & DINOv2-L & 0.00 \\
CLIP ViT-B/16 & ImageNet-A, positive gain & 15 & 0.971 & DINOv2-L & 0.00 \\
CLIP ViT-B/16 & ImageNet-V2, all & 16 & 0.943 & DeiT-III & 0.00 \\
CLIP ViT-B/16 & ImageNet-V2, positive gain & 11 & 0.973 & DeiT-III & 0.00 \\
CLIP ViT-B/16 & ImageNet-A, near-matched & 6 & 0.943 & DINOv2-B & 0.00 \\
CLIP ResNet-50 & ImageNet-A, all & 16 & 0.968 & DINOv2-L & 0.00 \\
\bottomrule
\end{tabular}%
}
\end{table}

With the ResNet-50 predictor, the cache-free accuracy is $24.1\%$ and the encoder-envelope
gain spans $0.00$--$19.04$ points. The same training-regime ordering appears under both
predictors, complementing the coupled-seed replication in \cref{tab:s_spaces16_seeds}.

\section{Additional Benchmark Baselines}
\label{sec:s_benchmark}

\Cref{tab:s_extra} collects the parameter-updating OOD references in compact form and reports
the larger auxiliary scale.

\begin{table}[H]
\centering\footnotesize
\setlength{\tabcolsep}{5pt}
\renewcommand{\arraystretch}{1.08}
\caption{Top-1 accuracy (\%) on the cross-domain benchmark with CLIP ViT-B/16 under the
protocol of~\citep{karmanov2024tda}. \colorbox{bestgreen}{Green} marks the best comparable
training-free result per dataset; shaded rows are \method{}. Every \method{} row is
training-free and uses a separate frozen DINOv2-B retrieval encoder. The lower block removes the
cache from our pipeline, so $\Delta$ isolates its contribution. Subscripts on \method{} rows
are standard deviations over three seeds; averages and deltas use unrounded values.
Aircraft and Food101 are evaluated separately in \cref{tab:cd_extra}. The average here
uses the six streams shown; \cref{tab:cd_extra} also reports the average over all eight.}
\label{tab:crossdomain}
\adjustbox{max width=\linewidth}{%
\begin{tabular}{lccccccc}
\toprule
Method & Caltech101\,$\uparrow$ & DTD\,$\uparrow$ & EuroSAT\,$\uparrow$ & Flowers102\,$\uparrow$ & Pets\,$\uparrow$ & UCF101\,$\uparrow$ & Average\,$\uparrow$ \\
\midrule
CLIP-ViT-B/16              & 93.55 & 45.04 & 50.42 & 66.99 & 86.92 & 65.16 & 68.01 \\
\midrule
\sub{8}{Methods with parameter adaptation} \\
CoOp~\citep{zhou2022coop}       & 93.70 & 41.92 & 46.39 & 68.71 & 89.14 & 66.55 & 67.74 \\
CoCoOp~\citep{zhou2022cocoop}   & 93.79 & 45.45 & 39.23 & 70.85 & 90.46 & 68.44 & 68.04 \\
TPT~\citep{shu2022tpt}          & 94.16 & 47.75 & 42.44 & 68.98 & 87.79 & 68.04 & 68.19 \\
DiffTPT~\citep{feng2023difftpt} & 92.49 & 47.00 & 43.13 & 70.10 & 88.22 & 62.67 & 67.27 \\
\midrule
\sub{8}{Training-free methods: CLIP visual encoder} \\
TDA~\citep{karmanov2024tda} & 94.24 & 47.40 & 58.00 & 71.42 & 88.63 & 70.66 & 71.73 \\
BoostAdapter~\citep{zhang2024boostadapter} & \best{94.77} & 45.69 & \best{61.22} & 71.66 & 89.51 & 71.93 & 72.46 \\
TaTa~\citep{zhang2026tata} & 93.82 & 49.57 & 59.67 & 71.28 & 89.91 & \best{73.54} & 72.97 \\
ETTA~\citep{etta2025} & 94.74 & 49.64 & 59.86 & 73.86 & 90.08 & 71.95 & 73.36 \\
\midrule
\sub{8}{Training-free methods: CLIP + frozen DINOv2-B} \\
\ours \textbf{\method{}(Ours)} (DINOv2-B), $M{=}4$ & 94.73\ci{0.18} & 53.76\ci{0.54} & 55.60\ci{2.93} & \best{81.27\ci{0.12}} & 92.03\ci{0.32} & 72.36\ci{1.01} & 74.96\ci{0.53} \\
\ours \textbf{\method{}(Ours)} (DINOv2-B), $M{=}64$ & 94.69\ci{0.08} & \best{54.37\ci{0.87}} & 58.67\ci{1.52} & 80.67\ci{0.33} & \best{92.39\ci{0.19}} & 71.94\ci{0.33} & \best{75.45\ci{0.14}} \\
\midrule
\sub{8}{Ensemble-only controls: same pipeline without the cache} \\
Ensemble only, $M{=}4$ & 94.65 & 52.36 & 50.37 & 72.80 & 89.89 & 69.73 & 71.63 \\
Ensemble only, $M{=}64$ & 95.17 & 53.66 & 47.47 & 72.15 & 89.70 & 70.39 & 71.42 \\
\emph{$\Delta$ contributed by the cache, $M{=}64$} & \loss{$-0.48$} & \gain{$+0.71$} & \gain{$+11.20$} & \gain{$+8.52$} & \gain{$+2.69$} & \gain{$+1.55$} & \gain{$+4.03$} \\
\bottomrule
\end{tabular}%
}
\end{table}

The cross-domain table likewise includes CoOp, CoCoOp, TPT, and DiffTPT as parameter-adapting
literature references and separates them from the training-free comparison. On this benchmark,
\method{} exceeds the best published results by
$4.7$, $6.8$, and $2.3$ points on DTD, Flowers102, and Pets, and trails on
Caltech101, EuroSAT, and UCF101. Reducing $M$ from $64$ to $4$ costs
$0.50$ average points, driven primarily by EuroSAT, which alone loses
$3.07$ points while three of the other five datasets improve;
the ensemble-only baseline instead changes from $71.42$ to $71.63$.
TPT, DiffTPT, and ZERO+Ens average $68.19$, $67.27$, and $67.77$, respectively.

\paragraph{The transferred weight hurts on two additional streams.}
We extend the six-dataset benchmark with Aircraft and Food101, using three seeds for each
configuration. Cars is no longer distributed, and we did not evaluate SUN397. At $w{=}10$,
the cache reduces accuracy on both added datasets. Including them lowers the mean cache
contribution at $M{=}4$ from $+3.32$ to $+0.84$ points. The cache itself retains a small
positive gain when the weight is selected per stream: $+1.39$ on Aircraft and $+0.50$ on
Food101.

\begin{table}[H]
\centering\footnotesize
\setlength{\tabcolsep}{5pt}
\renewcommand{\arraystretch}{1.08}
\caption{Results on Aircraft and Food101 at $M{=}4$ and $w{=}10$, averaged over three seeds.
Subscripts are standard deviations. Comparable published baselines are unavailable under this
protocol, so we compare with the same pipeline without the cache. The lower rows show how the
two datasets change the cross-domain average; $\Delta=G_{10}$. Aircraft has too many incorrect pseudo-labels
for reliable retrieval, while Food101 leaves few base errors to correct (\cref{cor:threshold}). The two arms are separate runs, so $\Delta$ here is an unpaired difference of means and
differs slightly from the paired per-image losses in App.~\ref{sec:s_ci}.
Table \ref{tab:spaces16} uses \(M=1\), so its values should not be compared directly with the Aircraft and Food101 results from this run.
}
\label{tab:cd_extra}
\begin{tabular}{lccr}
\toprule
Stream & \method{} (DINOv2-B)\,$\uparrow$ & Ensemble only, no cache\,$\uparrow$ & $G_{10}$ \\
\midrule
\blk{4}{Streams added to the suite} \\
Aircraft & $19.25$\ci{0.93} & $27.71$\ci{0.62} & \loss{$-8.46$} \\
Food101 & $78.98$\ci{0.14} & $83.77$\ci{0.05} & \loss{$-4.79$} \\
\midrule
\blk{4}{Effect on the cross-domain average, $M{=}4$} \\
\emph{Six streams (\cref{tab:crossdomain})} & $74.96$ & $71.63$ & \gain{$+3.32$} \\
\emph{Eight streams} & $68.50$ & $67.66$ & \gain{$+0.84$} \\
\bottomrule
\end{tabular}
\end{table}

\begin{table}[H]
\centering\scriptsize
\setlength{\tabcolsep}{3pt}
\caption{Compact OOD reference for parameter-updating methods and the larger auxiliary scale.
The method groups keep training protocol and auxiliary scale explicit; $\dagger$ marks our runs.}
\label{tab:s_extra}
\adjustbox{max width=\linewidth}{%
\begin{tabular}{lcccccc}
\toprule
Method & IN & IN-A & IN-V2 & IN-R & IN-Sk & OOD Avg \\
\midrule
\blk{7}{ResNet-50: methods that update parameters} \\
CoOp~\citep{zhou2022coop}       & 63.33 & 23.06 & 55.40 & 56.60 & 34.67 & 42.43 \\
CoCoOp~\citep{zhou2022cocoop}   & 62.81 & 23.32 & 55.72 & 57.74 & 34.48 & 42.82 \\
TPT~\citep{shu2022tpt}          & 60.74 & 26.67 & 54.70 & 59.11 & 35.09 & 43.89 \\
DiffTPT~\citep{feng2023difftpt} & 60.80 & 31.06 & 55.80 & 58.80 & 37.10 & 45.69 \\
\midrule
\blk{7}{ViT-B/16: methods that update parameters} \\
TPT~\citep{shu2022tpt}          & 68.98 & 54.77 & 63.45 & 77.06 & 47.94 & 60.81 \\
DiffTPT~\citep{feng2023difftpt} & 70.30 & 55.68 & 65.10 & 75.00 & 46.80 & 60.65 \\
\midrule
\blk{7}{ViT-B/16: larger auxiliary scale} \\
COSMIC~\citep{huang2025cosmic} (DINOv2-L-reg) & 78.19 & 73.32 & 69.62 & 85.60 & 62.79 & 72.83 \\
\ours \textbf{\method{}} (DINOv2-L), $M{=}64$$^\dagger$ & 77.03 & 72.04 & 66.95 & 84.62 & 59.83 & 70.86 \\
\bottomrule
\end{tabular}%
}
\end{table}

At DINOv2-L scale, \method{} reaches a $70.86$ OOD average versus COSMIC's $72.83$. The
configurations differ in unisolated factors, so we do not attribute the gap. Within
\method{}, changing only DINOv2-B to DINOv2-L adds $2.95$ points
($67.91 \rightarrow 70.86$).

\section{Cross-Domain Retrieval-Space Rankings}
\label{sec:s_scope}

\Cref{tab:s_scope} gives the per-stream ceilings underlying Sec.~\ref{sec:capacity}.

\begin{table}[H]
\centering\footnotesize
\setlength{\tabcolsep}{4pt}
\caption{Selecting among sixteen retrieval spaces on three cross-domain streams at $M{=}4$.
\emph{Range} is the purity spread, and \emph{ties} counts spaces with zero gain. Purity selects
the best space on Flowers102 and DTD but not on EuroSAT, where its values are tightly grouped.
All gains used for ranking and regret are oracle envelopes $G^\star$.}
\label{tab:cd16}
\adjustbox{max width=\linewidth}{%
\begin{tabular}{lccrllr}
\toprule
Stream & range & ties & $\rho(\text{purity},G^\star)$\,$\uparrow$ & best space & purity selects & regret\,$\downarrow$ \\
\midrule
\blk{7}{Purity is separated: the rule applies} \\
Flowers102 & 85.6 & 2 & $+0.922$ & DINOv2-L & DINOv2-L & \best{$0.00$} \\
DTD & 50.5 & 4 & $+0.736$ & DINOv3-B & DINOv3-B & \best{$0.00$} \\
\midrule
\blk{7}{Precondition violated: purity has no spread to rank with} \\
EuroSAT & 16.7 & 1 & $+0.400$ & DINOv3-B & EVA-02 CLIP & \loss{$1.98$} \\
\bottomrule
\end{tabular}%
}
\end{table}

\begin{table}[H]
\centering\footnotesize
\setlength{\tabcolsep}{3pt}
\caption{Oracle-envelope cache gains $G^\star$ for five spaces on six cross-domain streams at $M{=}4$.
$\rho$ correlates $\purity@10$ with $G^\star$; \emph{range} is the purity spread.
\colorbox{bestgreen}{Green} marks the best of these five spaces. The three spaces
added in Sec.~\ref{sec:s_newspaces} are not listed here; they enter the eight-space selection
analysis of Tab.~\ref{tab:s_selection}, where DINO v1 takes the lead on EuroSAT.}
\label{tab:s_scope}
\adjustbox{max width=\linewidth}{%
\begin{tabular}{lrrrrrcr}
\toprule
Stream & MAE & CLIP & D-S & D-B & D-L & $\rho$\,$\uparrow$ & range \\
\midrule
\blk{8}{Purity is discriminative: the rule applies} \\
Pets       & $+0.03$ & $+0.11$ & $+1.94$ & $+2.45$ & \best{$+3.60$} & $1.00$ & 82.2 \\
Caltech101 & $+0.00$ & $+0.00$ & $+0.28$ & $+0.89$ & \best{$+1.46$} & $1.00$ & 63.1 \\
DTD        & $+0.00$ & $+0.00$ & $+2.19$ & $+3.25$ & \best{$+4.14$} & $0.90$ & 47.3 \\
UCF101     & $+0.05$ & $+1.24$ & $+2.80$ & $+2.62$ & \best{$+4.55$} & $0.90$ & 50.0 \\
Flowers102 & $+0.08$ & $+0.00$ & $+6.33$ & $+8.61$ & \best{$+9.26$} & $0.80$ & 85.6 \\
\emph{Mean} & & & & & & \gain{$0.92$} & 65.6 \\
\midrule
\blk{8}{Precondition violated: purity has no spread to rank with} \\
EuroSAT    & $+0.00$ & $+0.07$ & \best{$+8.63$} & $+7.54$ & $+3.60$ & \loss{$0.10$} & \textbf{14.4} \\
\bottomrule
\end{tabular}%
}
\end{table}

EuroSAT is the only stream with a narrow purity range; the next-narrowest is over three times
wider. Separation therefore tests ranking stability, not correctness. EuroSAT also remains a
counterexample: DINOv2-L has the highest purity but only the third-best gain among these
five, and the sixth-best of all eight.

The two reported EuroSAT correlations use different candidate sets: $\rho{=}0.400$ in
\cref{tab:cd16} ranks sixteen spaces, whereas $\rho{=}0.10$ in \cref{tab:s_scope} ranks the
original five.

\Cref{tab:s_scope8} extends the sweep to the three spaces of
Sec.~\ref{sec:s_newspaces}. Unlike on the ImageNet streams, adding candidates lowers the
cross-domain rank correlation: on the matched single-seed basis of this table, the mean
$\rho$ over the five well-separated streams falls from $0.91$ to $0.82$, though it remains at
least $0.78$ on each.
The added spaces introduce many ties in cache gain: BEiT~\citep{bao2022beit} contributes exactly $+0.00$ on all
five well-separated streams, and DINO v1 does so on two of them. No predictor can order tied
outcomes; as a diagnostic, restricting each stream to the spaces that deliver a
non-negligible gain returns the mean to $0.89$. That restriction conditions on the outcome
and is therefore not a correction we adopt, only evidence that the loss is concentrated in
unrankable ties.

Despite the lower rank correlation, the rule's selection performance does not degrade.
Across the same eight streams, selection from eight spaces has lower regret than selection
from the original five (\cref{tab:s_selection}). Ties among spaces that contribute nothing
reduce rank correlation without increasing practical regret.

\begin{table}[H]
\centering\footnotesize
\setlength{\tabcolsep}{3.5pt}
\caption{Oracle-envelope cache gains $G^\star$ for the three added spaces on the six cross-domain
streams at $M{=}4$, with $\rho$ recomputed over all eight. To keep the eight comparable, every
entry here and every $\rho$ in this table is a single seed-$0$ run;
Tab.~\ref{tab:s_scope} averages two seeds for DINOv2-B, which is why its $\rho$ column differs
by up to $0.03$. \colorbox{bestgreen}{Green} marks the best space over all eight.}
\label{tab:s_scope8}
\begin{tabular}{lrrrc}
\toprule
Stream & Sup.\ ViT & DINO v1 & BEiT & $\rho$ over $8$\,$\uparrow$ \\
\midrule
\blk{5}{Purity is discriminative} \\
Pets       & $+2.59$ & $+1.94$ & $+0.00$ & $0.84$ \\
Caltech101 & $+0.00$ & $+0.00$ & $+0.00$ & $0.79$ \\
DTD        & $+0.06$ & $+0.00$ & $+0.00$ & $0.91$ \\
UCF101     & $+2.83$ & $+2.67$ & $+0.00$ & $0.79$ \\
Flowers102 & $+5.64$ & $+2.15$ & $+0.00$ & $0.78$ \\
\emph{Mean} & & & & \gain{$0.82$} \\
\midrule
\blk{5}{Precondition violated} \\
EuroSAT    & $+5.31$ & \best{$+9.72$} & $+6.35$ & \loss{$0.26$} \\
\bottomrule
\end{tabular}
\end{table}

EuroSAT is again the exception, and the added spaces sharpen rather than soften it.
DINO v1 gives $+9.72$ there, the largest gain achieved by any space on that stream and more
than it achieves elsewhere; on the other five streams, its gain ranges from $+0.00$ to
$+2.67$. BEiT shows an even sharper contrast: $+6.35$ on EuroSAT and $+0.00$ elsewhere. A
stream on which otherwise weak spaces become competitive may reward structure other than
class-level neighborhood purity, consistent with EuroSAT's
purity range of $14.4$ points and with its being the one stream where our rule does not
apply.

\section{Qualitative Examples}
\label{sec:s_figures2}

Four ImageNet-A queries that the base predictor mislabels and the fused rule
repairs, chosen by the data rather than by hand: base prediction wrong, cache margin
negative in CLIP space, positive in DINOv2-L space, fused prediction correct. Each row of
\cref{fig:s_qualitative} shows the query and its ten nearest retained keys in the two
spaces; the memory is the same $K{=}8$ population in both
(\cref{prop:space-intervention}), and neighbours are ranked by cosine in the full
embedding.

\begin{figure}[H]
\centering
\includegraphics[width=\linewidth]{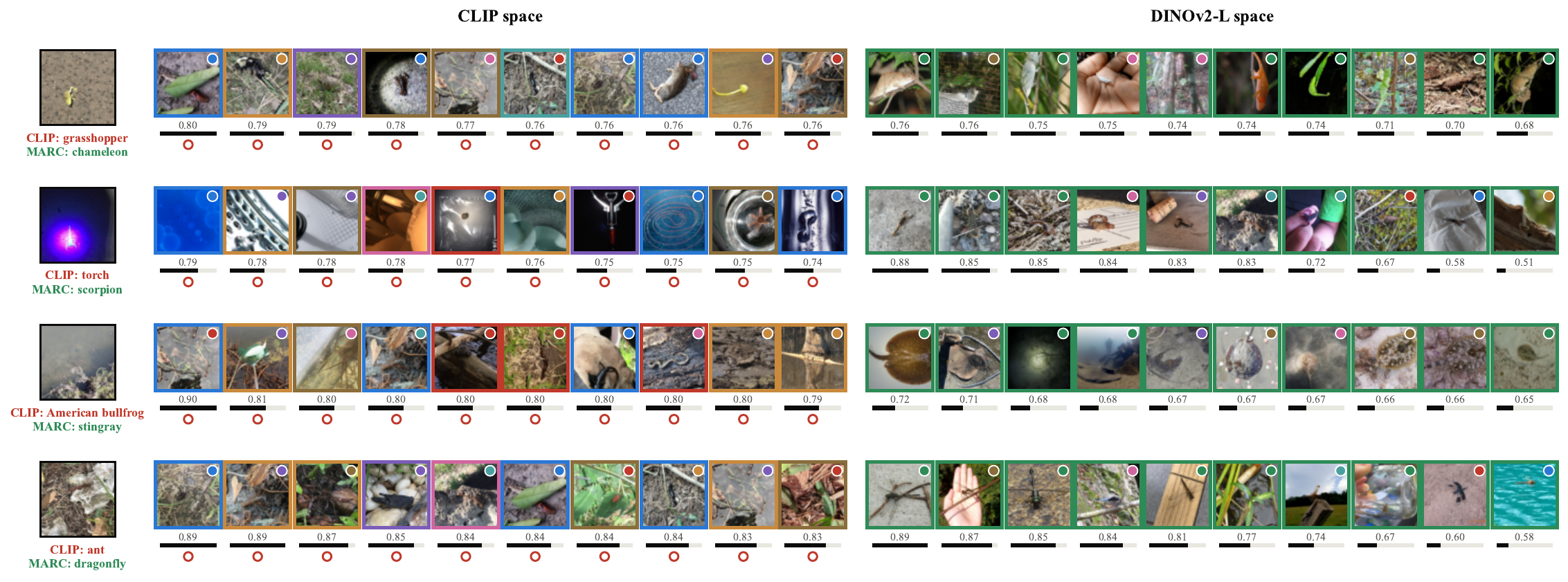}
\caption{Queries repaired after the retrieval-encoder swap. Left block: the ten nearest
retained keys in CLIP space; right block: the same memory re-embedded by DINOv2-L. Border
colour is the neighbour's true label (green matches the query), the corner dot its stored
pseudo-label, the number the cosine similarity, the bar the kernel weight on a scale
shared across both spaces, and a red ring marks a neighbour of a different true class. In
CLIP space the nearest keys mix classes at nearly uniform similarity, so the vote goes to
a confuser; re-embedded by DINOv2-L, the same memory returns neighbours that are almost
all of the query's class.}
\label{fig:s_qualitative}
\end{figure}

\section{Cache Capacity at a Large View Budget}
\label{sec:s_cd_capacity}

The capacity results in Sec.~\ref{sec:capacity} are measured at $M{=}1$, leaving
open whether the same optimum holds once many views are averaged. We therefore repeat
the six cross-domain streams at $M{=}64$ under both $K{=}8$ and $K{=}16$, with three seeds in
each arm, so that capacity is the only factor that varies.

\begin{table}[H]
\centering\footnotesize
\setlength{\tabcolsep}{4pt}
\caption{Cross-domain accuracy at $M{=}64$ under two cache capacities, three seeds per
arm. \emph{Ratio} is the change divided by the pooled dispersion of the two arms. Means and
deltas are computed from unrounded values.}
\label{tab:s_cd_capacity}
\begin{tabular}{lccrr}
\toprule
Stream & $K{=}8$ & $K{=}16$ & $\Delta$ & ratio \\
\midrule
Caltech101 & 94.69\ci{0.08} & 93.51\ci{0.14} & \loss{$-1.18$} & $10.3\times$ \\
DTD        & 54.37\ci{0.87} & 50.18\ci{1.24} & \loss{$-4.20$} & $3.9\times$ \\
Flowers102 & 80.67\ci{0.33} & 79.71\ci{0.39} & \loss{$-0.96$} & $2.6\times$ \\
Pets       & 92.39\ci{0.19} & 91.66\ci{0.05} & \loss{$-0.73$} & $5.2\times$ \\
UCF101     & 71.94\ci{0.33} & 70.65\ci{0.52} & \loss{$-1.29$} & $3.0\times$ \\
\midrule
EuroSAT    & 58.67\ci{1.52} & 58.67\ci{0.42} & $+0.00$ & $0.0\times$ \\
\midrule
\emph{Mean} & 75.45 & 74.06 & \loss{$-1.39$} & \\
\bottomrule
\end{tabular}
\end{table}

Increasing capacity from $K{=}8$ to $K{=}16$ remains harmful at a large view budget,
costing $1.39$
points on average. Five streams lose accuracy and every loss exceeds twice the pooled
dispersion of its two arms, reaching ten times on Caltech101. EuroSAT is the exception for the
third time: the two capacities agree to two decimals, and its $K{=}8$ dispersion of $1.52$
points is the widest in the table, so the stream that defeats the geometry ranking is also the
one whose capacity cannot be resolved. On the other five streams, the $M{=}1$ preference for
$K{=}8$ over $K{=}16$ transfers to $M{=}64$. This was not obvious in advance: averaging many
views could have compensated for a diluted cache, but it does not.

\section{Additional ViT-B/16 Pretraining Regimes}
\label{sec:s_newspaces}

The five spaces compared in Sec.~\ref{sec:capacity} jointly vary the pretraining
objective, data source, dataset scale, and encoder scale. DINOv2 is self-supervised on curated
LVD-142M, CLIP is language-supervised on WIT-400M, and MAE uses masked image modeling on
ImageNet-1k~\citep{deng2009imagenet}; three of the five are scale variants of DINOv2. The
observed association between
encoder-induced retrieval similarities and gain may therefore reflect the pretraining data and recipe as well as
the objective category.

We add three retrieval spaces that hold the architecture fixed at ViT-B/16 with
$768$-dimensional features while broadening the pretraining regimes represented. They are
measured under the same protocol as the other spaces: frozen encoder, CLS token,
$\ell_2$-normalized keys, and the
same cache and fusion configuration. Each model uses its own input normalization rather than
a shared one.

\begin{table}[H]
\centering\scriptsize
\setlength{\tabcolsep}{3pt}
\renewcommand{\arraystretch}{1.15}
\caption{Additional ViT-B/16 retrieval spaces spanning different pretraining regimes.}
\label{tab:s_newspaces}
\begin{tabular}{@{}>{\raggedright\arraybackslash}p{1.85cm}
                  >{\raggedright\arraybackslash}p{2.35cm}
                  >{\raggedright\arraybackslash}p{3.35cm}@{}}
\toprule
Space & Pretraining & Contrast provided \\
\midrule
Supervised ViT-B/16 & labels, IN-21k\,$\rightarrow$\,IN-1k & supervised objective with fixed architecture \\
DINO v1 ViT-B/16    & self-distillation, IN-1k             & related objective family, ${\sim}100\times$ less data than DINOv2 \\
BEiT-B/16           & masked image modeling, IN-22k        & alternative masked-image pretraining recipe \\
\bottomrule
\end{tabular}
\end{table}

DINO v1 provides the closest comparison to DINOv2 among the added spaces. It uses a
related self-distillation approach but is trained on ImageNet-1k rather than LVD-142M. A
result near DINOv2-B would support an explanation based on the broad objective family,
whereas a result near MAE would point to differences in data scale or training recipe. This
comparison is informative but cannot isolate those factors completely.

\begin{table}[H]
\centering\footnotesize
\setlength{\tabcolsep}{3.5pt}
\caption{Comparison of the three added spaces with the five spaces already reported, on ImageNet-A at
$M{=}1$. Purity and anti-hubness are measured at $K{=}10$ before adaptation, and the ceiling is the
oracle envelope $G^\star$ at each space's own $\Kstar$. Rows are ordered by ceiling.
$\Kstar$ and the ceiling are computed from seed $0$; the three-seed figures in the text
use the common $K{=}8$. Because those figures average different stream orders, DINO v1's
three-seed mean at $K{=}8$ can exceed its seed-$0$ ceiling.}
\label{tab:s_newspaces_results}
\begin{tabular}{lccrr}
\toprule
Cache space & $\purity@10$\,$\uparrow$ & $\antihub$\,$\downarrow$ & $\Kstar$ & $G^\star$ ceiling\,$\uparrow$ \\
\midrule
BEiT-B/16                      & 19.1 & 11.1 & 1 & $+0.01$ \\
MAE ViT-B/16                  &  3.2 & 3.4 &  1 & $+0.09$ \\
DINO v1                        & 16.8 & 1.9 & 1 & $+0.17$ \\
CLIP ViT-B/16                 & 29.7 & 6.3 &  2 & $+0.62$ \\
DINOv2-S                      & 26.1 & \best{0.7} & 16 & $+1.33$ \\
Supervised ViT                 & 35.0 & 1.4 & 8 & $+4.75$ \\
DINOv2-B                      & 48.9 & 1.2 &  8 & $+9.88$ \\
\ours DINOv2-L                & \best{67.3} & 1.7 & 8 & \best{$+20.06$} \\
\bottomrule
\end{tabular}
\end{table}

DINO v1 reaches $+0.17$ points at its best seed-$0$ capacity and $+0.25 \pm 0.20$
over three seeds at the matched $K{=}8$, placing it with MAE and BEiT rather than with
DINOv2. Thus, membership in the same broad self-distillation family is not sufficient to
produce a useful retrieval space. The supervised ViT reaches $+4.75$ at its seed-$0$
optimum and $+4.82 \pm 0.20$ over three seeds at $K{=}8$, roughly three times the DINOv2-S
gain despite using no self-supervision. The same ordering appears on ImageNet-V2, where the
supervised ViT reaches $+4.76$ and DINO v1 reaches $+0.51$. Together, these results implicate
pretraining data scale and recipe, rather than the objective category alone, but do not
separate their individual effects.

The geometry rule remains effective after the three structurally different spaces
are added.
Across eight spaces, purity ranks the gain at $\rho{=}0.93$ on ImageNet-A and $\rho{=}0.98$ on
ImageNet-V2, against $0.90$ and $0.98$ over the original five. Selection regret falls from
$0.98$ to $0.35$ points, and its worst case from $5.02$ to $1.65$
(\cref{tab:s_selection}). Rather than degrading as the candidate set broadens, the
association strengthens on ImageNet-A, supporting the interpretation that purity tracks
retrieval quality rather than merely DINOv2 model identity.
With sixteen spaces, the correlation remains $\rho{=}0.959$, and purity selects the best
space on both ImageNet streams (Sec.~\ref{sec:s_spaces16}).

\begin{table}[H]
\centering\footnotesize
\setlength{\tabcolsep}{4pt}
\caption{Selecting a retrieval space by a single measurement, over eight streams and
eight candidate spaces. \emph{Hits} counts streams where the rule picks the best space;
\emph{regret} is $G^\star$ lost relative to oracle selection. Among the tabulated rules, only
purity requires labels.}
\label{tab:s_selection}
\begin{tabular}{llccc}
\toprule
Rule & Labels & hits & mean regret\,$\downarrow$ & worst\,$\downarrow$ \\
\midrule
$\purity@5$        & yes & \best{$7/8$} & \best{$0.14$} & \best{$1.14$} \\
$\purity@10$       & yes & $6/8$ & $0.35$ & $1.65$ \\
\midrule
lowest skewness    & no  & $4/8$ & $4.14$ & $19.89$ \\
lowest $\antihub$  & no  & $0/8$ & $6.45$ & $18.71$ \\
\bottomrule
\end{tabular}
\end{table}

Among the original label-free geometry summaries, none selects reliably. Anti-hubness
prefers MAE on almost every stream because a space whose neighborhoods carry no class
information can still leave few unretrieved points. Purity at $K{=}5$ is slightly better than
at $K{=}10$, although the paper reports $K{=}10$ consistently.
For a measurement $\phi$, let $\sigma_\phi(\mathcal D)$ be its selected space. Selection
regret is
\begin{equation}
\operatorname{Reg}_\phi(\mathcal D)=
\max_s G^\star(s,\mathcal D)-G^\star(\sigma_\phi(\mathcal D),\mathcal D),
\label{eq:selection-regret}
\end{equation}
and a hit has zero regret. Replacing ground-truth agreement with agreement against CLIP
predictions gives label-free \emph{pseudo-purity}. It reproduces the labeled ordering closely
($\rho{=}0.98$ on ImageNet-A, $1.00$ on ImageNet-V2, and $0.93$ averaged over eight streams),
selects the best space on $5/8$ streams, and incurs $1.48$ points of mean regret. This is worse
than labeled $\purity@10$ ($0.35$) but substantially better than anti-hubness ($6.45$); its
worst case is EuroSAT, where even the labeled rule is unresolved.

\section{The Full Sixteen-Space Candidate Set}
\label{sec:s_spaces16}

\Cref{tab:spaces16} compares sixteen retrieval spaces. Eight were introduced earlier: CLIP
ViT-B/16, MAE, three DINOv2 variants, a supervised ViT, DINO v1, and BEiT. The remaining eight
test whether the result depends on a particular training objective, architecture, or scale.

OpenCLIP~\citep{cherti2023openclip}, SigLIP~\citep{zhai2023siglip}, and EVA-02
CLIP~\citep{sun2023evaclip} test three additional forms of cross-modal training, while
CLIP ViT-L/14 tests the effect of scale. ConvNeXt-B~\citep{liu2022convnext} and
ResNet-50~\citep{he2016resnet} test convolutional
architectures. DINOv3-B~\citep{simeoni2025dinov3} adds a newer self-supervised model, and
DeiT-III~\citep{touvron2022deit3} adds a second label-supervised transformer.

Every space is used under the protocol of \cref{sec:protocol}: frozen encoder, a single
pooled or CLS embedding, $\ell_2$-normalised keys, each model's own input normalisation, and
the same cache, kernel and fusion configuration. Only the retrieval encoder changes, so the
controlled estimand is the resulting replacement of the query--key similarity matrix.

\begin{table}[H]
\centering\scriptsize
\setlength{\tabcolsep}{4pt}
\renewcommand{\arraystretch}{1.12}
\caption{The eight retrieval spaces added to reach the sixteen of \cref{tab:spaces16}, with
the purpose of each addition in the controlled comparison.}
\label{tab:s_spaces16_models}
\adjustbox{max width=\linewidth}{%
\begin{tabular}{@{}>{\raggedright\arraybackslash}p{3.6cm}
                  >{\raggedright\arraybackslash}p{9.4cm}@{}}
\toprule
Retrieval space & Purpose in the comparison \\
\midrule
OpenCLIP ViT-B/16 & same objective and architecture as CLIP, trained on LAION-2B~\citep{schuhmann2022laion5b} rather than WIT-400M \\
SigLIP ViT-B/16 & tests a cross-modal model trained with a different loss \\
EVA-02 CLIP & adds an independent cross-modal training recipe \\
CLIP ViT-L/14 & tests whether greater model scale improves cross-modal retrieval \\
DINOv3-B & adds a later generation of self-supervised training \\
DeiT-III ViT-B/16 & adds a second label-supervised vision transformer \\
ConvNeXt-B & tests whether the result extends beyond transformer architectures \\
ResNet-50 (sup.) & repeats that test with a smaller convolutional model \\
\bottomrule
\end{tabular}%
}
\end{table}

CLIP ViT-L/14 is the only cross-modal space with a substantial improvement, reaching $+5.52$
on ImageNet-A; scale therefore explains part of the gap. The supervised ResNet-50 also changes
rank across streams, with gains of $+0.04$ on ImageNet-A and $+1.54$ on ImageNet-V2. This is
another example of the stream dependence studied in \cref{tab:cd16}.

\section{Hyperparameter Sensitivity}
\label{sec:s_sensitivity}

The following one-at-a-time sweeps vary $(K,w,\beta,\alpha)$ around the deployed setting
$(8,10,5,2)$. They are placed after the broader space comparisons because they test local
robustness of the fixed scoring rule rather than the paper's central retrieval-space claim.

\begin{figure}[H]
\centering
\includegraphics[width=\linewidth]{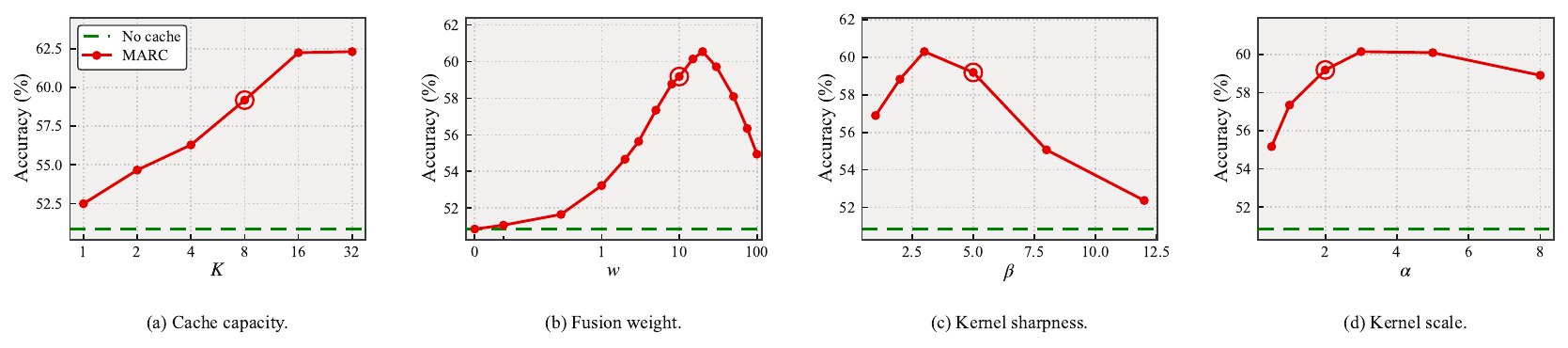}
\caption{ImageNet-A sensitivity with DINOv2-B retrieval. The deployed setting lies in the
broad high-accuracy region; capacity and fusion weight dominate the response.}
\label{fig:s_sens_ab}
\end{figure}

\begin{figure}[H]
\centering
\includegraphics[width=\linewidth]{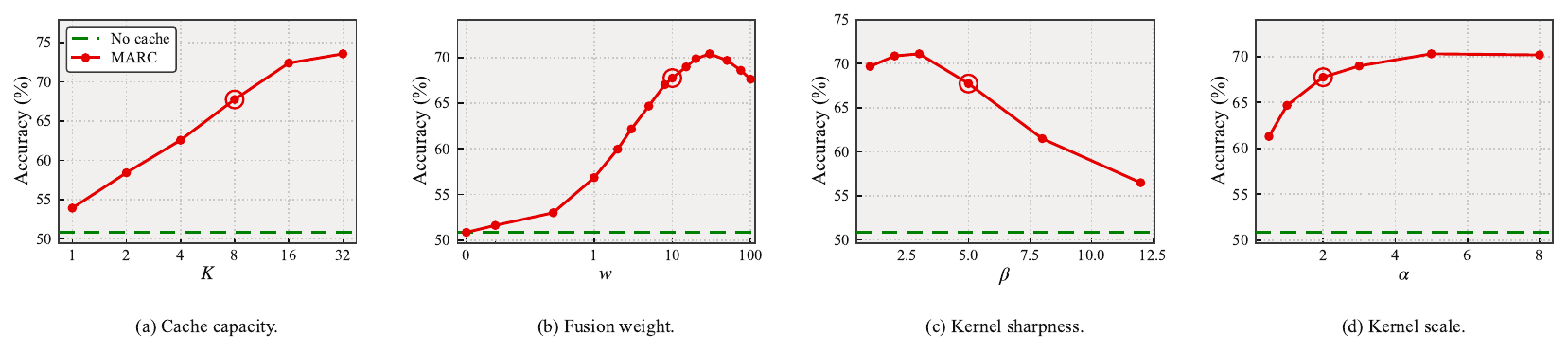}
\caption{ImageNet-A sensitivity with DINOv2-L retrieval. Its similarities support a
wider range of cache weights while retaining a large gain.}
\label{fig:s_sens_al}
\end{figure}

\begin{figure}[H]
\centering
\includegraphics[width=\linewidth]{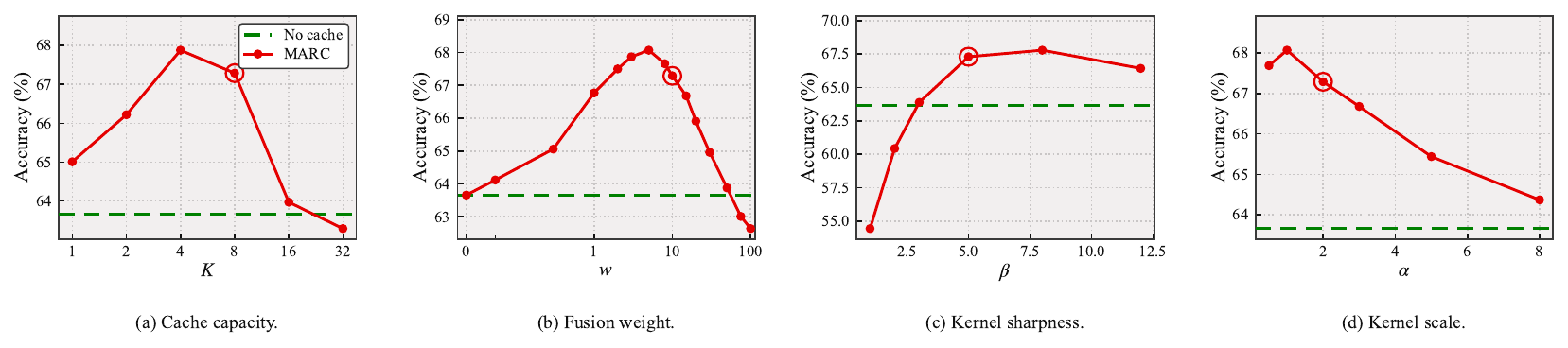}
\caption{ImageNet-V2 sensitivity with DINOv2-B retrieval. Smaller predictor headroom yields a
flatter and lower-gain response than ImageNet-A.}
\label{fig:s_sens_vb}
\end{figure}

\begin{figure}[H]
\centering
\includegraphics[width=\linewidth]{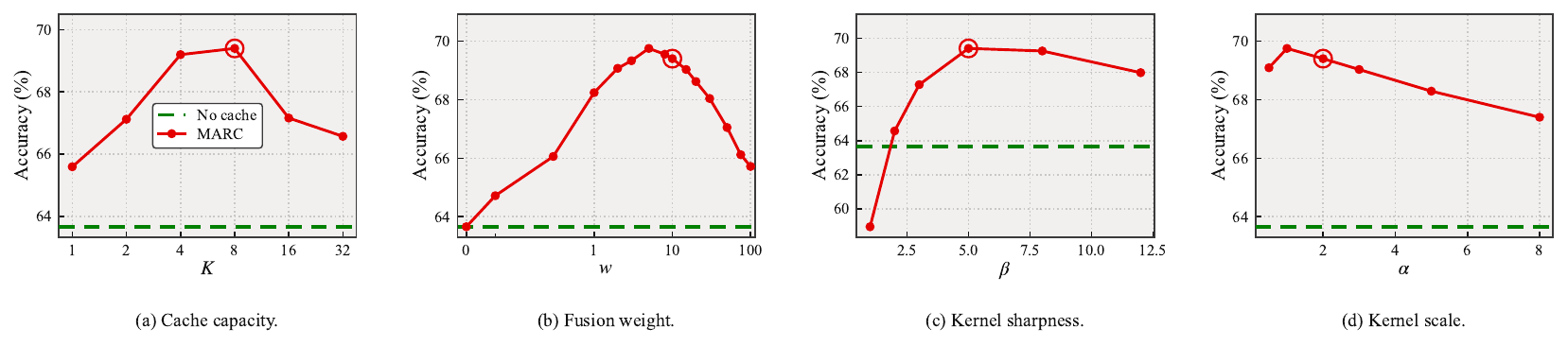}
\caption{ImageNet-V2 sensitivity with DINOv2-L retrieval. The useful region remains broad,
although the attainable gain is smaller than on ImageNet-A.}
\label{fig:s_sens_vl}
\end{figure}

\begin{figure}[H]
\centering
\includegraphics[width=\linewidth]{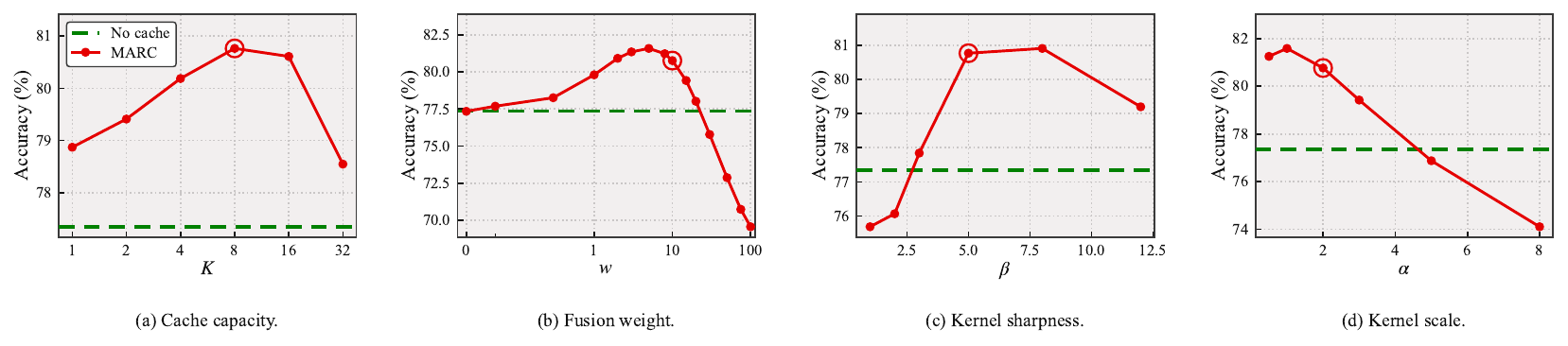}
\caption{ImageNet-R sensitivity with DINOv2-B retrieval under the same one-factor sweeps.}
\label{fig:s_sens_rb}
\end{figure}

\begin{figure}[H]
\centering
\includegraphics[width=\linewidth]{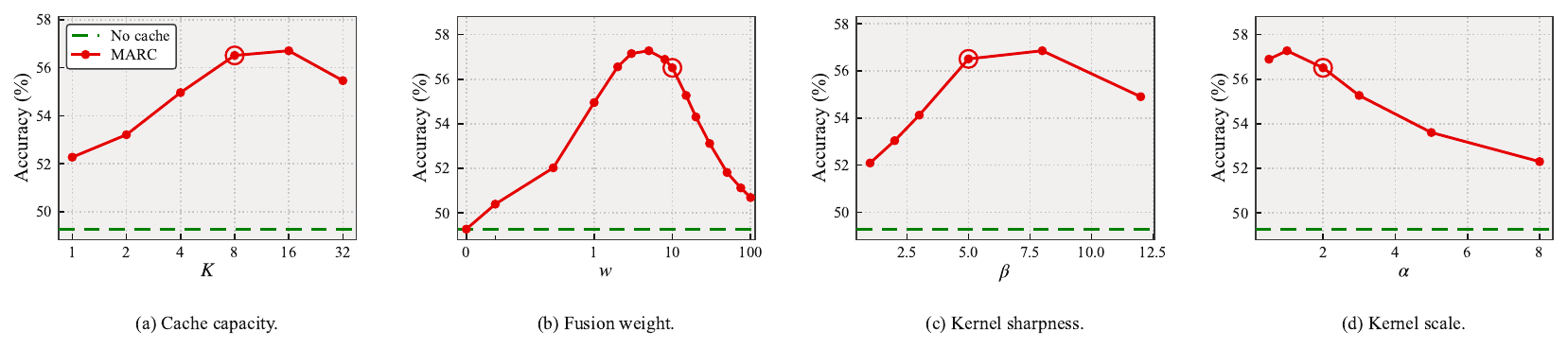}
\caption{ImageNet-Sketch sensitivity with DINOv2-B retrieval under the same one-factor
sweeps.}
\label{fig:s_sens_kb}
\end{figure}

\section{Full Experimental Protocol and Isolation Proof}
\label{sec:s_protocol}

\paragraph{Data and models.}
We evaluate ImageNet-A ($7{,}500$ images, $200$ classes), ImageNet-V2 ($10{,}000$)  ~\citep{recht2019imagenetv2},
ImageNet-R ($30{,}000$, $200$ classes), ImageNet-Sketch ($50{,}889$), ImageNet-val
($50{,}000$), and the cross-domain suite of~\citet{karmanov2024tda}. The predictor is frozen
CLIP ViT-B/16 or ResNet-50 with fixed prompts. Retrieval encoders are frozen, use one pooled
or CLS embedding with native normalization, and are listed in
\cref{sec:s_newspaces,sec:s_spaces16}.

\paragraph{Geometry calibration.}
The reported purity, pseudo-purity, and hubness measurements use the complete target test split
as the calibration batch: $7{,}500$ images for ImageNet-A, $10{,}000$ for ImageNet-V2,
$30{,}000$ for ImageNet-R, and $50{,}889$ for ImageNet-Sketch, with the analogous complete
split for each cross-domain dataset. These batches are the evaluation splits rather than held-out
sets. Ground-truth labels enter only the offline purity analysis; pseudo-purity uses frozen CLIP
top-1 predictions on the same unlabeled images.

\paragraph{Views and online update.}
From $M$ views we retain $r_M=\max(1,\lfloor0.1M\rfloor)$ lowest-entropy views and average
their raw CLIP logits. The unaugmented view supplies each query and candidate key. Both
memories are queried before insertion and store the ensemble top-1 pseudo-label. The CLIP
memory has per-class capacity $K_C{=}16$, zero output weight, and supplies retrieval-admission
priority. Within each pseudo-class, the retrieval cache retains the $K$ highest-priority
arrivals; a new item replaces the current minimum only when its priority is larger. All keys
are $\ell_2$-normalized. Setting $K_C{=}0$ gives the strict one-memory control.

Query-before-insert prevents a current-sample self-vote. Strict priority replacement makes
each pseudo-class contain the top-$K$ arrivals in every stream prefix; under distinct
priorities, capacity-$K$ entries are nested in those at $K{+}1$. Because stored pseudo-labels
and priorities do not use the fusion weight, the same trajectory supports the complete
offline sweep of Proposition~\ref{prop:traj}.

\begin{proof}[Proof of \cref{prop:space-intervention}]
Couple two retrieval-space runs with the same stream and augmentation draws. Their CLIP
quantities $(b_t,\hat y_t,e_t,k_t^C)$ coincide. By induction, their CLIP-memory states,
pre-update evidence, and retrieval priorities also coincide. Since admission compares only
pseudo-class and priority, both runs retain the same stream indices at every time; only the
vectors attached to those indices differ.

For a retained index $i$, write $u_{ti}^s=g_s(x_t)^\top g_s(x_i)$. Substitution into
\cref{eq:kernel} gives the logit identity in \cref{eq:space-perturbation}. Moreover,
$\phi_\beta'(u)=\beta\phi_\beta(u)\le\beta$ on $[-1,1]$, so the mean-value theorem gives
\[
|L_{t,c}^{s}-L_{t,c}^{s'}|
\le\alpha\beta\sum_{i\in I_{t-1}^K(c)}|u_{ti}^s-u_{ti}^{s'}|
=D_{t,c}^{s,s'}.
\]
Because $b_t$ is shared, subtracting the largest possible perturbations from the winning-class
gap yields the sufficient stability condition in \cref{eq:space-stability}.
\end{proof}

\paragraph{Search and reporting.}
Capacity uses $K\in\{1,2,3,4,8,16,32,64\}$ and fusion uses
$\mathcal W=\{0,0.1,0.3,1,2,3,5,8,10,15,20,30,50,75,100\}$. Including $w{=}0$ makes the
oracle envelope a non-negative upper bound. Retrieval-space comparisons, benchmark rows,
component ablations, and capacity sweeps use three seeds unless a caption states otherwise.
Seeds change stream order and, for $M>1$, augmentation draws; matched configurations share
seeds. Subscripts report sample standard deviations with the $n-1$ divisor, and averages use
unrounded values. A standard deviation is not a confidence interval: at three seeds the
$95\%$ $t$-interval is $2.484\times$ the printed subscript. Interval estimates, and the exact
tests behind every comparative claim, are reported separately in App.~\ref{sec:s_ci}.

\paragraph{Timing.}
We time a $3{,}000$-image prefix on one thermally settled RTX 3060 after $100$ warm-up
samples. TDA, COSMIC, and \method{} are measured in the same session, repeated three times in
\cref{sec:s_timing}. We interpret within-session ratios rather than absolute consumer-GPU
latency.

\section{Confidence Intervals for the Reported Gains}
\label{sec:s_ci}

We quantify uncertainty using four complementary procedures, each associated with a distinct
source of variation. Absolute accuracies are accompanied by \emph{sampling intervals},
computed as Wilson intervals over the evaluation split. Differences between methods are
summarized by \emph{paired intervals} over the same evaluation images, together with exact
two-sided McNemar tests on the discordant predictions; these intervals support the comparative
claims in this section. For the oracle gain $G^\star$, we report an \emph{envelope interval}
obtained by bootstrapping evaluation examples and re-optimizing over $\mathcal W$ within each
replicate, thereby incorporating uncertainty due to weight selection. Finally, variation
across stream orders is summarized by a \emph{seed interval}: a $t$-interval with half-width
$t_{.975,n-1}s/\sqrt{n}$, where $s$ is the sample standard deviation across orders. For three
orders, this half-width is $2.484s$. Sampling and seed intervals address different estimands. The former characterizes uncertainty
over evaluation examples, whereas the latter characterizes sensitivity to stream order.
Similarly, overlap between two marginal accuracy intervals does not determine whether their
difference is statistically distinguishable: because the methods are evaluated on identical
images, inference about their difference should use the paired results. Calibration-subsample
intervals for Spearman's $\rho$ and selection regret are reported separately in
\cref{tab:s_bootstrap_selection}.

\paragraph{Sixteen-space intervention.}
\Cref{tab:s_ci_spaces16} reports paired intervals and exact tests for all retrieval spaces in
\cref{tab:spaces16} on ImageNet-A. The intervention holds both the stream and retained sample
population fixed (\cref{prop:space-intervention}); consequently, every row is compared with
the same no-cache predictor on the same $7{,}500$ images.

\begin{table}[H]
\centering\footnotesize
\setlength{\tabcolsep}{4pt}
\caption{Paired uncertainty estimates for the sixteen-space intervention on ImageNet-A at
$K{=}8$ and $M{=}1$, relative to the $50.84\%$ no-cache baseline. \emph{Gain at $w{=}10$}
denotes the paired accuracy difference, reported with its $95\%$ confidence interval and the
exact two-sided McNemar $p$-value over discordant predictions. $G^\star$ denotes the oracle
envelope; its bootstrap interval re-optimizes the maximum over $\mathcal W$ within each
replicate. These columns estimate different quantities and therefore need not coincide.}
\label{tab:s_ci_spaces16}
\adjustbox{max width=\linewidth}{%
\begin{tabular}{lrrrr}
\toprule
Cache space & gain at $w{=}10$ & disc. & McNemar $p$ & $G^\star$ \\
\midrule
DINOv2-L & \gain{$+18.19$} $[+17.21,+19.17]$ & 1658 & $2\mathrm{e}{-}285$ & $+19.36$ $[+18.36,+20.51]$ \\
DINOv2-B & \gain{$+9.31$} $[+8.43,+10.18]$ & 1186 & $4\mathrm{e}{-}97$ & $+10.27$ $[+9.32,+11.31]$ \\
DeiT-III & \gain{$+7.73$} $[+6.80,+8.67]$ & 1322 & $5\mathrm{e}{-}59$ & $+8.01$ $[+7.00,+9.11]$ \\
DINOv3-B & \gain{$+5.72$} $[+4.96,+6.48]$ & 879 & $3\mathrm{e}{-}49$ & $+6.49$ $[+5.67,+7.49]$ \\
Supervised ViT & \gain{$+4.13$} $[+3.38,+4.89]$ & 844 & $5\mathrm{e}{-}27$ & $+4.43$ $[+3.61,+5.24]$ \\
CLIP ViT-L/14 & \gain{$+3.69$} $[+2.63,+4.75]$ & 1657 & $1\mathrm{e}{-}11$ & $+5.52$ $[+4.65,+6.39]$ \\
ConvNeXt-B & \gain{$+2.88$} $[+2.20,+3.56]$ & 684 & $1\mathrm{e}{-}16$ & $+2.85$ $[+2.27,+3.72]$ \\
EVA-02 CLIP & \gain{$+1.88$} $[+0.84,+2.92]$ & 1577 & $4\mathrm{e}{-}04$ & $+4.15$ $[+3.44,+4.93]$ \\
DINOv2-S & $+0.53$ $[-0.24,+1.31]$ & 882 & $0.189$ & $+1.24$ $[+0.85,+1.75]$ \\
DINO v1 & \loss{$-2.27$} $[-3.01,-1.52]$ & 810 & $3\mathrm{e}{-}09$ & $+0.05$ $[+0.00,+0.48]$ \\
OpenCLIP & \loss{$-6.16$} $[-7.17,-5.15]$ & 1536 & $2\mathrm{e}{-}32$ & $+0.76$ $[+0.39,+1.21]$ \\
ResNet-50 (sup) & \loss{$-7.05$} $[-7.98,-6.13]$ & 1289 & $2\mathrm{e}{-}50$ & $+0.04$ $[+0.00,+0.32]$ \\
BEiT & \loss{$-8.60$} $[-9.37,-7.83]$ & 933 & $2\mathrm{e}{-}108$ & $+0.00$ $[+0.00,+0.03]$ \\
SigLIP & \loss{$-9.40$} $[-10.55,-8.25]$ & 2003 & $6\mathrm{e}{-}57$ & $+0.99$ $[+0.47,+1.56]$ \\
CLIP ViT-B/16 & \loss{$-10.28$} $[-11.38,-9.18]$ & 1861 & $2\mathrm{e}{-}73$ & $+0.44$ $[+0.12,+0.87]$ \\
MAE & \loss{$-43.68$} $[-44.88,-42.48]$ & 3556 & $<10^{-300}$ & $+0.03$ $[+0.00,+0.33]$ \\
\bottomrule
\end{tabular}%
}
\end{table}

The fixed-weight and oracle-envelope results expose different aspects of performance. At
$w{=}10$, low-purity retrieval spaces can be strongly detrimental: CLIP ViT-B/16 decreases
accuracy by $10.28$ points, while MAE decreases it by $43.68$ points. By construction, the
oracle envelope cannot be negative because $w{=}0$ is included in $\mathcal W$. Thus, an
envelope gain of $+0.00$ indicates that none of the evaluated weights improves over the
no-cache predictor; it does not imply that every nonzero weight is safe. Reporting both
quantities is therefore necessary to characterize the deployment risk illustrated in
\cref{fig:hazard}.

The ceiling is an attainable upper envelope, not the fixed deployment outcome. At the reported fixed w = 10, seven of sixteen ImageNet A retrieval spaces reduce accuracy, including CLIP and MAE, as shown in Table \cref{fig:hazard}.

\paragraph{Results with no detectable effect.}
Across the $378$ tabulated configurations, $50$ paired $95\%$ confidence intervals include
zero. We consistently interpret these cases as showing no detectable effect, rather than as
evidence of a gain or loss. They occur primarily for weaker retrieval spaces and smaller
cross-domain evaluation sets. Within the sixteen-space ImageNet-A comparison, DINOv2-S is the
only such case: $+0.53$ points $[-0.24,+1.31]$, with $p{=}0.19$, although its oracle envelope
is positive at $+1.24$ $[+0.85,+1.75]$. In \cref{tab:crossdomain}, the effects for Caltech101
($-0.04$ $[-0.95,+0.87]$, $p{=}1.00$) and DTD ($+0.83$ $[-1.18,+2.83]$, $p{=}0.45$) are not
distinguishable from zero at either $M{=}4$ or $M{=}64$, whereas those for EuroSAT,
Flowers102, Pets, and UCF101 are. The additional streams in \cref{tab:cd_extra} instead show
statistically resolved losses: $-7.92$ points $[-9.18,-6.66]$ on Aircraft and $-4.74$ points
$[-5.08,-4.40]$ on Food101, paired per image and averaged over three seeds, consistent with
the transfer risk identified in that analysis.

\paragraph{Multiplicity adjustment.}
We assess multiplicity across the $378$ reported tests. Benjamini--Hochberg control at
$q{=}0.05$ rejects $328$ null hypotheses the same set identified at the uncorrected
$p<0.05$ threshold corresponding to a nominal false-discovery allowance of $16.4$ among
these rejections. At $q{=}0.01$, $314$ hypotheses remain rejected. The more conservative
Bonferroni correction at family-wise level $0.05$, corresponding to a per-test threshold of
$1.32\times10^{-4}$, retains $296$ rejections. The primary comparisons are significant under
each correction, often by several orders of magnitude. Moreover, these comparisons are
specified by the intervention design rather than selected post hoc from the full set of
configurations.

\paragraph{Matched-order comparison with COSMIC.}
The original $M{=}8$ COSMIC evaluation did not retain the per-example predictions required for
paired inference. We therefore evaluated both systems on three recorded stream orders. Within
each order, the methods receive the same permutation and label sequence over all $7{,}500$
images. The resulting matched comparison is reported in \cref{tab:s_ci_cosmic}.

\begin{table}[H]
\centering\footnotesize
\setlength{\tabcolsep}{5pt}
\caption{Matched comparison of \method{} and COSMIC on ImageNet-A at $M{=}8$ and DINOv2-B
scale. Each row uses an identical recorded stream order for both methods. For the pooled
estimate, we first average each image's paired difference across the three orders and then
construct the interval, so that each image contributes once. Treating all $22{,}500$
predictions as independent would underestimate uncertainty because the same $7{,}500$ images
appear in every order.}
\label{tab:s_ci_cosmic}
\begin{tabular}{lrrrrr}
\toprule
Order & \method{} & COSMIC & paired advantage & disc. & McNemar $p$ \\
\midrule
0 & 63.56 & 61.80 & \gain{$+1.76$} $[+0.99,+2.53]$ & 878 & $9.5\mathrm{e}{-}06$ \\
1 & 63.89 & 63.52 & $+0.37$ $[-0.41,+1.16]$ & 906 & $0.37$ \\
2 & 64.04 & 62.81 & \gain{$+1.23$} $[+0.43,+2.03]$ & 942 & $3.0\mathrm{e}{-}03$ \\
\midrule
\emph{Pooled, clustered by image} & 63.83 & 62.71 & \gain{$\mathbf{+1.12}$} $[+0.66,+1.58]$ & & \\
\bottomrule
\end{tabular}
\end{table}

Three aspects of this comparison merit clarification. First, the recorded orders used here
differ from those in \cref{tab:ood,tab:pareto}. Because test-time adaptation is order
dependent, the corresponding accuracies differ slightly from the values
$63.95/64.53/64.03$ and $62.89/62.60/62.76$ reported in those tables. Second, the seed interval
for the mean difference across these three matched orders is $+1.12$ $[-0.62,+2.86]$ and
includes zero. With $n{=}3$, the critical value is $t_{.975,2}=4.303$, so uncertainty over
stream order remains substantial even though the image-paired interval excludes zero. These
intervals address distinct sources of variation and should be interpreted jointly. Third,
COSMIC's reported accuracy is the maximum over a $14\times14$ grid of fusion weights selected
on the evaluation stream, whereas \method{} uses the prespecified value $w{=}10$. The
comparison therefore favors COSMIC with respect to hyperparameter selection and is not fully
matched in that dimension.

Across these matched orders, the standard deviation is $0.86$ percentage points for COSMIC
and $0.25$ for \method{}. The smaller $\pm0.15$ dispersion reported for COSMIC in
\cref{tab:ood} characterizes its three evaluation orders in that experiment; it should not be
interpreted as a general estimate of sensitivity to stream order.

\paragraph{Rank correlations.}
We quantify uncertainty in the rank correlations from \cref{tab:spaces16} using the bootstrap
procedure of \cref{tab:s_bootstrap_selection}. Each of $B{=}300$ replicates samples calibration
images with replacement and recomputes the geometry statistic, while holding the encoder set
and measured gains fixed. \Cref{tab:s_rho_ci} reports the results at the largest calibration
size considered. On both streams, the intervals for purity and anti-hubness are well separated,
supporting the conclusion that purity more reliably ranks attainable gain.

\begin{table}[H]
\centering\footnotesize
\setlength{\tabcolsep}{5pt}
\caption{Calibration-subsample intervals for the rank correlations in \cref{tab:spaces16},
computed over sixteen retrieval spaces using $B{=}300$ bootstrap replicates at $n{=}2000$.
Full-split point estimates use midranks throughout.}
\label{tab:s_rho_ci}
\begin{tabular}{llcc}
\toprule
Stream & Measurement & full split & $n{=}2000$ [95\% interval] \\
\midrule
ImageNet-A & $\rho(\purity@10,G^\star)$ & $+0.959$ & \gain{$+0.957$ $[+0.956,+0.962]$} \\
           & $\rho(-\antihub,G^\star)$ & $+0.541$ & $+0.575$ $[+0.472,+0.682]$ \\
\midrule
ImageNet-V2 & $\rho(\purity@10,G^\star)$ & $+0.943$ & \gain{$+0.937$ $[+0.906,+0.958]$} \\
            & $\rho(-\antihub,G^\star)$ & $+0.756$ & $+0.809$ $[+0.743,+0.883]$ \\
\bottomrule
\end{tabular}
\end{table}

Two considerations govern the interpretation of these intervals. First, anti-hubness measures
the fraction of zero-indegree points in an $n$-point probe graph and is therefore intrinsically
sensitive to calibration size. Its subsample interval characterizes the $n{=}2000$ regime; it
is not intended to contain the full-split estimate. Second, the bootstrap resamples images,
not encoders. The sixteen retrieval spaces were deliberately selected to span distinct
training regimes and do not constitute a random sample from a defined encoder population.
Accordingly, the correlations are descriptive summaries of the observed ranking. The more
directly decision-relevant quantity is the selection regret in
\cref{tab:s_bootstrap_selection}, which is zero on both streams at every calibration size
considered.

\paragraph{Latency.}
Timing ratios are consistent across the three sessions in \cref{tab:s_timing}. Computing each
ratio within a session and then constructing an interval across sessions gives a latency ratio
of $2.58\times$ $[2.52,2.64]$ for COSMIC relative to \method{} at $M{=}8$, and
$5.06\times$ $[4.95,5.18]$ for the $64$-view ensemble relative to single-view \method{}-L.

\paragraph{Results without uncertainty estimates.}
We omit confidence intervals when the available evidence does not support a meaningful
estimate. This applies to three groups of results. First, the $216$ literature values in
\cref{tab:ood,tab:crossdomain,tab:s_extra} are reproduced from prior work under the respective
evaluation protocols; without access to the underlying per-example predictions, we treat them
as benchmark context rather than subjecting them to retrospective uncertainty analysis.
Second, \cref{tab:s_newspaces,tab:s_spaces16_models} report descriptive attributes rather than
performance estimates. Third, several configurations are represented by a single run or two
stream orders, including the ImageNet-V2 column of \cref{tab:spaces16} and the transfer column
of \cref{tab:swap}. For these configurations, we report paired sampling intervals over
evaluation images when paired predictions are available, but do not report seed intervals.
At $n{=}2$, the critical value $t_{.975,1}=12.706$ yields a half-width of approximately nine
sample standard deviations, providing little information about variability across stream
orders. We therefore do not use two-order dispersion as inferential evidence.

\section{Timing Reproducibility}
\label{sec:s_timing}

\begin{table}[H]
\centering\footnotesize
\setlength{\tabcolsep}{4pt}
\caption{Tier~(ii), same-budget system comparison: ImageNet-A accuracy versus cost on one RTX
3060, with all rows timed in one session. \colorbox{bestgreen}{Green} marks the
compute--accuracy Pareto frontier. The CLIP reference uses standard prompts; controlled gains
use the $50.84\%$ prompt-ensemble baseline. The \method{} and COSMIC $M{=}8$ accuracies each
average three seeds; the latency ratio is repeated across sessions in \cref{tab:s_timing}.
The remaining \method{} rows are the seed-$0$ runs of the timing session and therefore differ
slightly from the three-seed means quoted in \cref{tab:ood} and in the text.}
\label{tab:pareto}
\begin{tabular}{lrrr}
\toprule
Configuration & views & s/img\,$\downarrow$ & Top-1\,$\uparrow$ \\
\midrule
\blk{4}{Baselines, same GPU and same session} \\
CLIP zero-shot (standard prompts) &  1 & \best{0.021} & \best{47.87} \\
Ensemble only, no cache           & 64 & 0.310 & 58.77 \\
TDA~\citep{karmanov2024tda}       &  1 & 0.190 & 60.11 \\
COSMIC~\citep{huang2025cosmic} (DINOv2-B) & 8 & 0.181 & 62.75 \\
\midrule
\blk{4}{\method{}: view budget swept at fixed retrieval space} \\
\ours \textbf{\method{}} (DINOv2-B) &  1 & \best{0.045} & \best{59.73} \\
\ours \textbf{\method{}} (DINOv2-B) &  4 & \best{0.053} & \best{62.93} \\
\method{} (DINOv2-B)              &  8 & 0.070 & 64.17\ci{0.31} \\
\method{} (DINOv2-B)              & 64 & 0.338 & 65.23 \\
\midrule
\blk{4}{\method{}: retrieval space swept at fixed view budget} \\
\ours \textbf{\method{}} (DINOv2-L) &  1 & \best{0.061} & \best{68.99} \\
\ours \textbf{\method{}} (DINOv2-L) &  4 & \best{0.069} & \best{70.69} \\
\midrule
\emph{\method{}-L $M{=}1$ vs.\ $64$-view ensemble} & & \gain{$5.1\times$} & \gain{$+10.22$} \\
\bottomrule
\end{tabular}
\end{table}

\Cref{tab:pareto} measures all rows within one session for direct comparison. Two further
sessions on an otherwise idle GPU replicate the timing ratios.

\begin{table}[H]
\centering\footnotesize
\setlength{\tabcolsep}{4pt}
\caption{Steady-state s/img across three independent timing sessions. Session~1
is the one reported in Tab.~\ref{tab:pareto}.}
\label{tab:s_timing}
\begin{tabular}{lrrrr}
\toprule
Configuration & S1 & S2 & S3 & spread \\
\midrule
CLIP zero-shot          & 0.0207 & 0.0279 & 0.0207 & 0.0072 \\
Ensemble only, $M{=}64$ & 0.3101 & 0.3272 & 0.3003 & 0.0269 \\
TDA~\citep{karmanov2024tda} & 0.1903 & 0.1918 & 0.1893 & 0.0025 \\
COSMIC~\citep{huang2025cosmic}, $M{=}8$ & 0.1812 & 0.1733 & 0.1787 & 0.0079 \\
\midrule
\ours \method{} (DINOv2-B), $M{=}1$  & 0.0450 & 0.0447 & 0.0445 & \best{0.0005} \\
\method{} (DINOv2-B), $M{=}8$        & 0.0701 & 0.0679 & 0.0688 & 0.0022 \\
\method{} (DINOv2-B), $M{=}64$       & 0.3378 & 0.3323 & 0.3355 & 0.0055 \\
\method{} (DINOv2-L), $M{=}1$        & 0.0614 & 0.0640 & 0.0598 & 0.0042 \\
\bottomrule
\end{tabular}
\end{table}

Every row varies by at most $0.008$\,s/img across sessions except the ensemble-only
row, whose spread is $0.027$\,s/img. The CLIP reference meets the absolute bound but
has the largest relative spread, at $31\%$ of its mean. At roughly $21$\,ms per image, it is
particularly sensitive to fixed per-batch and session-level overhead; no claim depends on
this row. The two ratios used in the paper are stable: the matched comparison against COSMIC
is $2.6\times$, $2.6\times$, and $2.6\times$ across the three sessions, and single-view
\method{}-L against the $64$-view ensemble is $5.1\times$, $5.1\times$, and $5.0\times$.
Constructing intervals from the session-level ratios gives $2.58\times$ $[2.52,2.64]$ and
$5.06\times$ $[4.95,5.18]$, respectively.

\paragraph{Matched-budget accuracy over seeds.} For the comparison in
Sec.~\ref{sec:capacity}, COSMIC reaches $62.89$, $62.60$, and $62.76$ on ImageNet-A at
$M{=}8$ over three seeds, yielding a mean of $62.75 \pm 0.15$. \method{} reaches $63.95$,
$64.53$, and $64.03$, yielding $64.17\pm0.31$. The paired system advantage is
$1.42\pm0.45$ points at the same view budget and auxiliary scale. These runs stored no
per-sample predictions for COSMIC, so the advantage they support is a seed-level one;
App.~\ref{sec:s_ci} reports a matched re-run of both systems over recorded stream orders,
which supplies the per-image paired interval and the exact test.

\section{Open Questions and Future Research Directions}
\label{sec:future}

Our results show retrieval space as an independent design axis, but also raise broader questions about how memory-based adaptation should interact with representation geometry, target shift, and inference compute. Here, we discuss several open questions arising from our findings that may guide future research in this area:

\noindent\textbf{$\checkmark$ Can retrieval quality be predicted without labels?}
Purity strongly tracks attainable cache gain when candidate spaces are sufficiently separated, while pseudo-purity shows that much of this signal can be recovered without ground-truth labels. The remaining challenge is to identify a fully online criterion that predicts not only neighborhood consistency but whether retrieved evidence will \emph{correct} rather than reinforce the frozen predictor. Such a criterion could combine neighborhood agreement, similarity margins, hubness, prediction uncertainty, and temporal stability. A particularly interesting direction is to estimate the expected utility of a retrieval space directly from the unlabeled stream, allowing systems to decide when a cache is likely to help before committing memory or compute.

\noindent\textbf{$\checkmark$ Should retrieval space itself adapt over time?}
Our experiments select one frozen retrieval encoder for an entire stream, yet the best space varies across datasets and shifts. Real deployments may be more heterogeneous: a stream can move between environments, acquisition conditions, or semantic regimes over time. This motivates dynamic retrieval systems that select among multiple frozen representations, combine complementary spaces, or locally reshape similarities as the stream evolves. Rather than adapting the predictor, future TTA systems could therefore adapt the \emph{geometry of memory access}, potentially preserving model stability while remaining responsive to non-stationary environments.

\noindent\textbf{$\checkmark$ What makes a representation a good cache space?}
DINOv2 performs strongly in our experiments, but encoder family or scale alone does not fully explain cache effectiveness. The contrast among CLIP, DINO, DINOv2, MAE, and other representations suggests that useful cache spaces must preserve class-relevant local neighborhoods while avoiding similarity structures that concentrate retrieval on a small set of samples. Purity and anti-hubness capture parts of this behavior, but neither fully characterizes it. Understanding which representation-learning objectives produce favorable memory geometry could connect cache-based adaptation to a broader theory of representation quality beyond linear probing and standard $k$-NN evaluation.

\noindent\textbf{$\checkmark$ How should memory capacity depend on geometry?}
Our results suggest that useful cache capacity is representation-dependent rather than a purely system-level hyperparameter. A space with clean, distributed neighborhoods may benefit from retaining more examples, whereas a hub-dominated or contaminated space may saturate earlier. This raises the possibility of geometry-aware memory management in which capacity, eviction, and admission are chosen from observed neighborhood statistics instead of fixed globally. Such mechanisms could allocate memory selectively across classes, regions of feature space, or phases of a changing stream.

\noindent\textbf{$\checkmark$ When should retrieval replace additional inference compute?}
The diminishing cache gain with increasing view count indicates that retrieval and augmentation provide partly overlapping evidence. This suggests a broader resource-allocation problem: for each input, should additional compute be spent on more views, deeper retrieval, a larger memory, another retrieval encoder, or no adaptation at all? An adaptive controller could allocate inference compute based on prediction uncertainty and retrieval confidence, using cheap single-view retrieval for easy corrections while reserving expensive augmentation for ambiguous cases. Such conditional computation may be especially useful in latency-, energy-, or throughput-constrained deployments.

\noindent\textbf{$\checkmark$ When should the system refuse to use its memory?}
Cross-domain results show that cache utility varies substantially across streams, including settings where its contribution is negligible. This is not only a limitation but an opportunity to make cache-based adaptation selective. Future systems could estimate whether the current neighborhood contains sufficiently reliable evidence and suppress cache fusion when it does not. Moving from `always retrieve'' to `retrieve when useful'' could improve robustness under weak neighborhood structure, abrupt shifts, or early stages of a stream when memory is sparse.

\noindent\textbf{$\checkmark$ Can multiple retrieval geometries be complementary?}
Our controlled study intentionally varies one retrieval encoder at a time to isolate its effect, but different representations may encode complementary notions of similarity. A semantic representation may retrieve class-consistent examples, while another space may better preserve texture, shape, acquisition conditions, or domain-specific structure. Future work could study mixtures of retrieval spaces without collapsing them into a single global similarity, including query-dependent routing or agreement-aware fusion. The central challenge is to obtain complementary evidence without reintroducing the complexity that our reduced design shows is often unnecessary.

\noindent\textbf{$\checkmark$ How does retrieval geometry interact with real non-stationarity?}
Our online protocol provides a controlled setting for studying memory trajectories, but real streams may contain recurring domains, gradual drift, abrupt transitions, class imbalance, or newly appearing categories. In such settings, the appropriate retrieval geometry and useful memory horizon may change together. Studying this interaction could lead to memories that detect distribution transitions, retain reusable evidence across recurring environments, and forget neighborhoods that no longer reflect the current stream.

\noindent\textbf{$\checkmark$ Can the principle extend beyond natural-image classification?}
The separation between prediction space and retrieval space is not inherently specific to CLIP or ImageNet. Remote sensing, scientific and medical imaging, embodied perception, and other specialized visual domains often combine pretrained models with streams whose local structure differs from the original training distribution. These settings provide natural tests of whether domain-specific or self-supervised retrieval representations can complement general-purpose predictors. More broadly, the same question arises wherever a frozen model consults an external memory, including multimodal and retrieval-augmented systems: the representation that is best for producing an output need not be the representation that is best for finding the evidence used to produce it.

Taken together, these directions suggest a shift from viewing the cache as a fixed auxiliary component toward treating memory access as an adaptive inference problem. The next generation of cache-based TTA may therefore ask not only \emph{what should be stored?}, but also \emph{in which geometry should it be retrieved, when should that geometry change, and when should retrieved evidence be trusted?}


\end{document}